\def\1{\bm{1}}
\def\vtheta{{\bm{\theta}}}
\def\vx{{\bm{x}}}
\def\vy{{\bm{y}}}
\DeclareMathAlphabet{\mathsfit}{\encodingdefault}{\sfdefault}{m}{sl}
\SetMathAlphabet{\mathsfit}{bold}{\encodingdefault}{\sfdefault}{bx}{n}
\def\gM{{\mathcal{M}}}
\def\gN{{\mathcal{N}}}
\def\gO{{\mathcal{O}}}
\def\gP{{\mathcal{P}}}
\def\sR{{\mathbb{R}}}
\newcommand{\first}[1]{\textbf{\textcolor{red}{#1}}}
\newcommand{\second}[1]{\textbf{\textcolor{violet}{#1}}}
\newcommand{\third}[1]{\textbf{\textcolor{black}{#1}}}
\theoremstyle{plain}
\newtheorem{theorem}{Theorem}[section]
\newtheorem{proposition}[theorem]{Proposition}
\theoremstyle{definition}
\newtheorem{definition}[theorem]{Definition}
\theoremstyle{remark}
\newtheorem{remark}[theorem]{Remark}
\newcommand{\xhdr}[1]{\vspace{0.7mm}\noindent{\bf #1.}}
\newcommand{\ourmethod}{\textsc{DiGRAF}}
\newcommand{\ourmethodGNN}{\textsc{DiGRAF}}
\newcommand{\ourmethodshared}{\textsc{DiGRAF (W/O Adap.)}}
\newcommand{\revision}[1]{{\color{black}#1}}
\title{\ourmethod: Diffeomorphic Graph-Adaptive 
Activation Function}
\author{%
  Krishna Sri Ipsit Mantri\thanks{Equal contribution.} \\ 
  Purdue University\\
  \texttt{mantrik@purdue.edu} \\ \And Xinzhi (Aurora) Wang$^{*}$  \\ 
  Purdue University\\
  \texttt{wang6171@purdue.edu} \\ \And Carola-Bibiane Schönlieb \\   
  University of Cambridge\\
  \texttt{cbs31@cam.ac.uk} \\ \And Bruno Ribeiro \\ 
  Purdue University\\
  \texttt{ribeirob@purdue.edu} \\ \And Beatrice Bevilacqua$^\dagger$ \\   
  Purdue University\\
  \texttt{bbevilac@purdue.edu} \\ \And Moshe Eliasof\thanks{Equal supervision.} \\
  University of Cambridge\\
  \texttt{me532@cam.ac.uk} \\
}
\begin{document}

\maketitle

\begin{abstract}
In this paper, we propose a novel activation function tailored specifically for graph data in Graph Neural Networks (GNNs). Motivated by the need for graph-adaptive and flexible activation functions, we introduce \ourmethod, leveraging Continuous Piecewise-Affine Based (CPAB) transformations, which we augment with an additional GNN to learn a graph-adaptive diffeomorphic activation function in an end-to-end manner. In addition to its graph-adaptivity and flexibility, \ourmethod\ also possesses properties that are widely recognized as desirable for activation functions, such as differentiability, boundness within the domain, and computational efficiency. 
We conduct an extensive set of experiments across diverse datasets and tasks, demonstrating a consistent and superior performance of \ourmethod\ compared to traditional and graph-specific activation functions, highlighting its effectiveness as an activation function for GNNs. \revision{Our code is available at \url{https://github.com/ipsitmantri/DiGRAF}.}

\end{abstract}

\small

\section{Introduction}
\label{sec:introduction}
Graph Neural Networks (GNNs) have found application across diverse domains, including social networks, recommendation systems, bioinformatics, and chemical analysis~\citep{wu2022graph, zhang2021graph, reiser2022graph}.
Recent advancements in GNN research have predominantly focused on exploring the design space of key architectural elements, ranging from expressive GNN layers~\citep{morris2023wl,frasca2022understanding,zhang2023complete,zhang2023rethinking,puny2023equivariant}, to pooling layers~\citep{ying2018hierarchical,lee2019self,bianchi2020spectral,wang2020haar}, and positional and structural encodings~\citep{dwivedi2023benchmarking,rampavsek2022recipe,eliasof2023graph}. 
Despite the exploration of these architectural choices, a common trend persists where most GNNs default to employing standard activation functions, such as ReLU~\citep{fukushima1969visual}, among a few others.

Activation functions play a crucial role in neural networks, as they are necessary for modeling non-linear input-output mappings. Importantly, different activation functions exhibit distinct behaviors, and the choice of the activation function can significantly influence the performance of the neural network~\citep{nwankpa2018activation}. 
It is well-known~\cite{opschoor2020deep, schwab2023deep} that, from a theoretical point of view, non-convex and highly oscillatory activation functions offer better approximation power. However, due to their strong non-convexity, they amplify optimization challenges~\cite{jain2017non}.   

Therefore, as a middle-ground between practice and theory, it has been suggested that a successful activation function should possess the following properties:  
\begin{enumerate*}[label=(\arabic*)]
    \item be differentiable everywhere~\citep{dubey2022activation,mishra2021non},
    \item have non-vanishing gradients~\citep{dubey2022activation};
    \item be bounded to improve the training stability~\citep{liew2016bounded,dubey2022activation};
    \item be zero-centered to accelerate convergence~\citep{dubey2022activation}; and
    \item be efficient and not increase the complexity of the neural network~\citep{kunc2024three}.
\end{enumerate*}
In the context of graph data, the activation function should arguably also be what we define as \emph{graph-adaptive}, that is, tailored to the input graph and capable of capturing the unique properties of graph-structured data, such as degree differences or size changes. This adaptivity ensures that the activation function can effectively leverage the structural information present in the graph data, potentially leading to improved performance in graph tasks.

Recent work in graph learning has investigated the impact of activation functions specifically designed for graphs, such as \citet{iancu2020graph} that proposes graph-adaptive max and median activation filters, and \citet{zhang2022graph} that introduces GReLU, which learns piecewise linear activation functions with a graph-adaptive mechanism. Despite the potential demonstrated by these approaches, the proposed activation functions still have predefined fixed structures (max and median functions in \citet{iancu2020graph} and piecewise linear in \citet{zhang2022graph}), restricting the flexibility of the activation functions that can be learned. Additionally, in the case of GReLU, the learned activation functions inherit the drawback of points of non-differentiability, which are undesirable according to the properties mentioned above. As a consequence, to the best of our knowledge, none of the existing activation functions prove to be consistently beneficial across different graph datasets and tasks.
Therefore, \emph{our objective is to design a flexible activation function tailored for graph data, offering consistent performance gains}. This activation function should possess many, if not all, of the properties recognized as beneficial for activation functions, with an emphasis on blueprint flexibility, as well as task and input adaptivity.

\begin{figure}[t]
    \centering \includegraphics[width=\columnwidth]{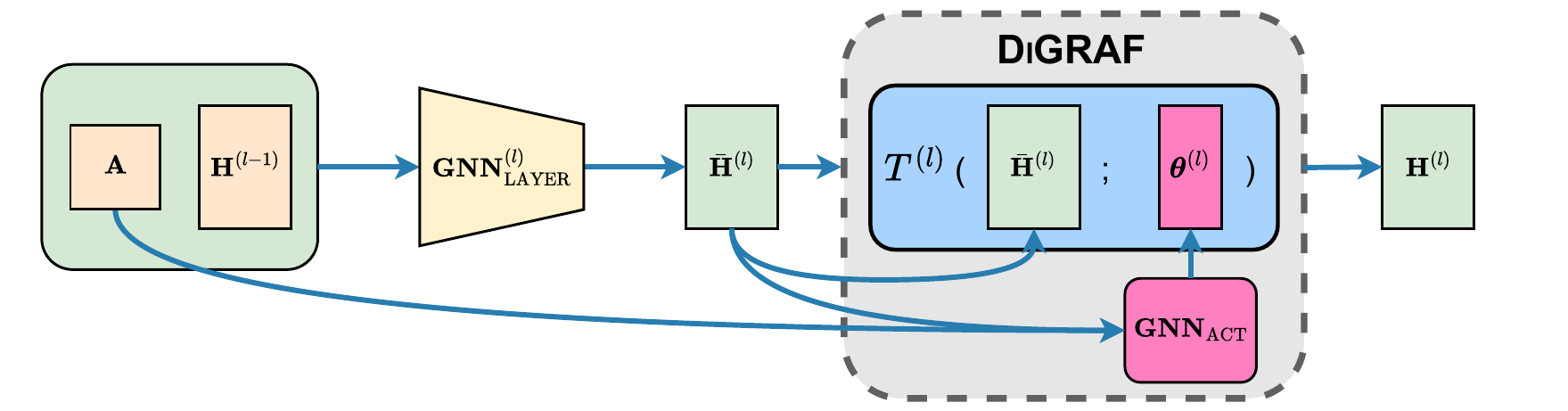}
    \caption{Illustration of \ourmethod. Node features $\mathbf{H}^{(l-1)}$ and adjacency matrix $\mathbf{A}$ are fed to a $\textsc{GNN}_{\textsc{layer}}^{(l)}$ to obtain updated intermediate node features $\bar{\mathbf{H}}^{(l)}$,
    which are passed to our activation function layer, \ourmethod.
    First, an additional GNN network $\textsc{GNN}_{\textsc{act}}$ takes $\bar{\mathbf{H}}^{(l)}$ and $\mathbf{A}$ as input to determine the activation function parameters $\vtheta^{(l)}$. These are used to parameterize the transformation  $T^{(l)}$, which operates on $\bar{\mathbf{H}}^{(l)}$ to produce the activated node features $\mathbf{H}^{(l)}$.}
    \label{fig:structure}
\end{figure}

\paragraph{Our Approach: \ourmethod.} In this paper, we leverage the success of learning diffeomorphisms, particularly through Continuous Piecewise-Affine Based transformations (CPAB)~\citep{freifeld2015highly,freifeld2017transformations}, to devise an activation function tailored for graph-structured data. Diffeomorphisms, characterized as bijective, differentiable, and invertible mappings with a differentiable inverse, inherently possess many desirable properties of activation functions, like differentiability, boundedness within the input-output domain, and stability to input perturbations. 
To augment our activation function with graph-adaptivity, we employ an additional GNN to derive the parameters of the learned diffeomorphism. This integration yields our node permutation equivariant activation function, dubbed \ourmethod\ -- {\bf DI}ffeomorphism-based {\bf GR}aph {\bf A}ctivation {\bf F}unction, illustrated in \Cref{fig:structure}, that dynamically adapts to different graphs, providing a flexible framework capable of learning activation functions for specific tasks and datasets in an end-to-end manner.  This comprehensive set of characteristics positions \ourmethod\ as a promising approach for designing activation functions for GNNs.

To evaluate the efficacy of \ourmethod, we conduct an extensive set of experiments on a diverse set of datasets across various tasks, including node classification, graph classification, and regression. Our evaluation compares the performance of \ourmethod\ with three types of baselines: traditional activation functions, activation functions with trainable parameters, and graph activation functions.
Our experimental results demonstrate that \ourmethod\ repeatedly exhibits better downstream performance than other approaches, reflecting the theoretical understanding and rationale underlying its design and the properties it possesses. Importantly, while existing activation functions offer different behavior in different datasets, \ourmethod\ maintains consistent performance across diverse experimental evaluations, further highlighting its effectiveness.

\paragraph{Main contributions.} The contributions of this work are summarized as follows:
\begin{enumerate*}[label=(\arabic*), leftmargin=*]
    \item We introduce a learnable graph-adaptive activation function based on flexible and efficient diffeomorphisms -- \ourmethod, which we show to have properties advocated in literature;
    \item an analysis of such properties, reasoning about the design choices of our method; and, 
    \item a comprehensive experimental evaluation of \ourmethod\ and other activation functions. 
\end{enumerate*}

\section{Related Work}
\label{sec:relatedwork}
\paragraph{Diffeomorphisms in Neural Networks.} 
%
%
%
%
%
%
A bijection mapping function $f: \gM \to \gN$, given two differentiable manifolds $\gM$ and $\gN$, is termed a \emph{diffeomorphism} if its inverse $f^{-1}: \gN \to \gM$ is also differentiable. The challenge in learning diffeomorphisms arises from their computational complexity: early research is often based on complicated infinite dimensional spaces~\citep{NEURIPS2018_68148596}, and later advancements have turned to Markov Chain Monte Carlo methods, which still suffer from large computational complexity \citep{allassonniere2010construction, allassonniere2015bayesian, zhang2016bayesian}.
To address these drawbacks, \citet{freifeld2015highly, freifeld2017transformations} introduced the Continuous Piecewise-Affine Based transformation (CPAB) approach, offering a more pragmatic solution to learning diffeomorphisms by starting from a finite-dimensional space, 
and allowing for exact diffeomorphism computations in the case of 1D diffeomorphisms -- an essential trait in our case, given that activation functions are 1D functions. CPAB has linear complexity and is parallelizable, which can lead to sub-linear complexity in practice~\citep{freifeld2017transformations}.
Originally designed for alignment and regression tasks by learning diffeomorphisms, in recent years, CPAB was found to be effective in addressing numerous applications using neural networks, posing it as a suitable framework for learning transformation. For instance, \citet{detlefsen2018deep} learns CPAB transformations to improve the flexibility of spatial transformer layers,  \citet{martinez2022closed} combines CPAB with neural networks for temporal alignment, \citet{weber2023regularization} introduces a novel loss function that eliminates the need for CPAB deformation regularization in time-series analysis, and \citet{wang2024continuous} utilizes CPAB to model complex spatial transformation for image animation and motion modeling.

\paragraph{General-Purpose Activation Functions.}
In the last decades, the design of activation functions has seen extensive exploration, resulting in the introduction of numerous high-performing approaches, as summarized in \citet{dubey2022activation,kunc2024three}. The focus has gradually shifted from traditional, static activation functions such as ReLU \citep{fukushima1969visual}, Sigmoid \citep{kunc2024three}, Tanh \citep{hochreiter1997long}, and ELU~\citep{clevert2015fast},  to learnable functions.
In the landscape of learnable activation functions, the Maxout~\citep{pmlr-v28-goodfellow13} unit selects the maximum output from learnable linear functions, and PReLU~\citep{he2015delving} extends ReLU by learning a negative slope. Additionally, the Swish function~\citep{ramachandran2017searching} augments the SiLU function~\citep{elfwing2018sigmoid}, a Sigmoid-weighted linear unit, with a learnable parameter controlling the amount of non-linearity. The recently proposed AdAct~\citep{maiti2024adact} learns a weighted combination of several activation functions, and DiTAC \citep{chelly2024ditac} learns a diffeomorphic activation function for CNNs. However, these activation functions are not input-adaptive, a desirable property in GNNs.

\begin{figure}
     \centering
     \begin{subfigure}[t]{0.47\textwidth}
         \centering
         \includegraphics[width=\textwidth]{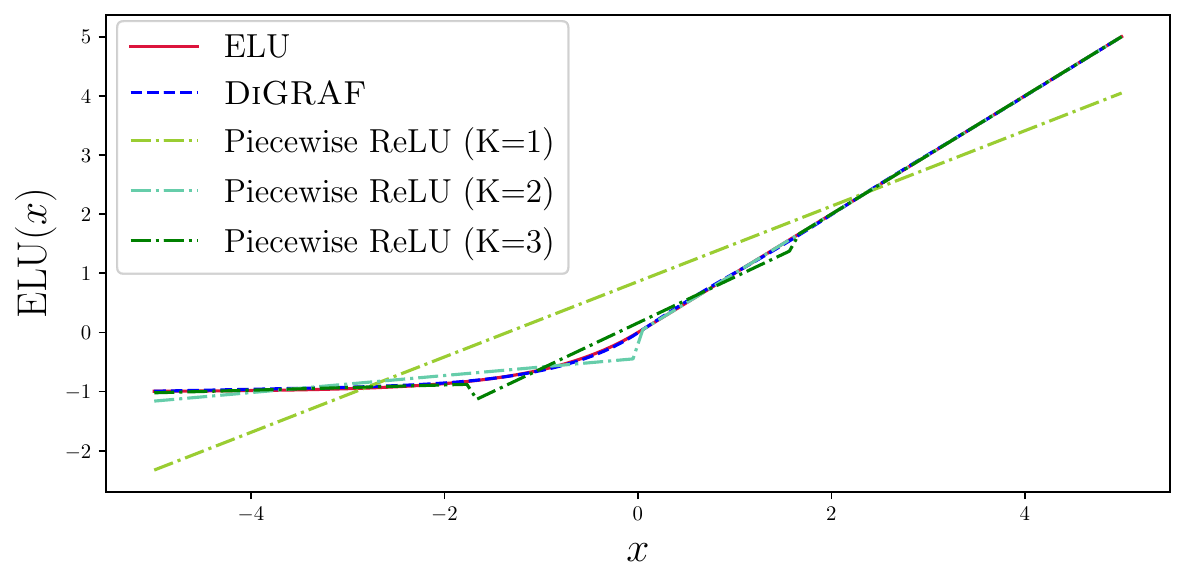}
         \caption{ELU}
         \label{fig:approximate:ELU}
     \end{subfigure}
     \hfill
     \centering
     \begin{subfigure}[t]{0.47\textwidth}
         \centering
         \includegraphics[width=\textwidth]{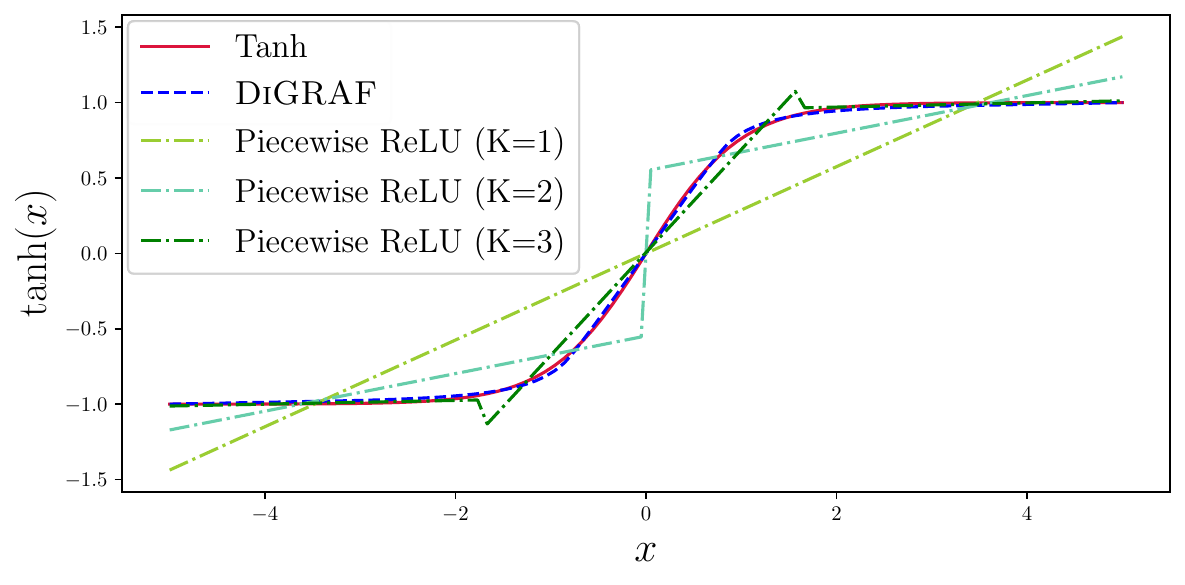}
         \caption{Tanh}
         \label{fig:approximate:tanh}
     \end{subfigure}
        \caption{Approximation of traditional activation functions using CPAB and Piecewise ReLU with varying segment counts $K \in \{1, 2, 3\}$ on a closed interval $\Omega=[-5,5]$, demonstrating the advantage of utilizing CPAB and its flexibility to model various activation functions.}
        \label{fig:approximate:flexibility}
\end{figure}

\paragraph{Graph Activation Functions.}
Typically, GNNs are coupled with conventional activation functions~\citep{kipf2017semisupervised, velivckovic2017graph, xu2019how}, which were not originally tailored for graph data, graph tasks, or GNN models. \revision{This implies that these activation functions do not inherently adapt to the structure of the input graph, which was found to be an important property in other GNN components, such as graph normalization \citep{eliasof2024granola}.} Recent works have suggested various approaches to bridge this gap. 
Early works such as \citet{scardapane2018improving} propose learning activation functions based on graph kernels, and \citet{iancu2020graph} introduces Max and Median filters, which operate on local neighborhoods in the graph, thereby offering adaptivity to the input graphs. A notable advancement in graph-adaptive activation functions is GReLU~\citep{zhang2022graph}, a parametric piecewise affine activation function achieving graph adaptivity by learning parameters through a hyperfunction that takes into account the node features and the connectivity of the graph.
While these approaches demonstrate the potential to enhance GNN performance compared to standard activation functions, they are constrained by their blueprint, often relying on piecewise  ReLU composition, which can be performance-limiting~\citep {khalife2023power}. Moreover, a fixed blueprint limits flexibility, i.e., the ability to express a variety of functions. As we show in \Cref{fig:approximate:flexibility}, attempts to approximate traditional activation functions such as ELU and Tanh using piecewise ReLU composition with different segment counts ($K = 1$, $2$, and $3$),
reveal limited approximation power. On the contrary, our \ourmethod, which leverages CPAB, yields significantly better approximations. Furthermore, we demonstrate the approximation power of activations learned with the CPAB framework in our \ourmethod\ in \Cref{app:funcapprox}.

\section{Mathematical Background and Notations}
\label{sec:background}

In this paper, we utilize the definitions from CPAB — a framework for efficiently learning flexible diffeomorphisms~\citep{freifeld2015highly, freifeld2017transformations}, alongside basic graph learning notations, to develop activation functions for GNNs. Consequently, this section outlines the essential details needed to understand the foundations of our \ourmethod.

\subsection{CPAB Diffeomorphisms}\label{sec:methods:CPAB_definition}
Let $\Omega = [a, b] \subset \sR$ be a closed interval, where $a < b$. We discretize $\Omega$ using a tessellation $\gP$ with $\gN_\gP$ intervals, which, in practice, is oftentimes an equispaced 1D meshgrid with $\gN_\gP$ segments~\citep{freifeld2017transformations} (see \Cref{sec:appendix:thetaRelations} for a formal definition of tessellation). Our goal in this paper is to learn a diffeomorphism $f:\Omega \to \Omega$ that we will use as an activation function. Formally, a diffeomorphism is defined as follows:

\begin{definition}[Diffeomorphism on a closed interval $\Omega$]\label{sec:def:diffeomorphism}
A diffeomorphism on a closed interval $\Omega \subset \sR$ is any function $f : \Omega \to \Omega$ that is \begin{enumerate*}[label=(\arabic*)]
    \item bijective,
    \item differentiable, and
    \item has a differentiable inverse $f^{-1}$.
\end{enumerate*} 
\end{definition}

To instantiate a CPAB diffeomorphism $f$, we define a continuous piecewise-affine (CPA) velocity field $v^\vtheta$ parameterized by $\vtheta \in \sR^{\gN_\gP-1}$. We display examples of velocity fields $v^\vtheta$ for various instances of $\vtheta$ in \Cref{fig:velocity_field:velocity} to demonstrate the distinct influence of $\vtheta$ on $v^\vtheta$.
Formally, a velocity field $v^\vtheta$ is defined as follows:

\begin{definition}[CPA velocity field $v^{\vtheta}$ on $\Omega$]\label{sec:def:CPA_Velocity}
Given a tessellation $\gP$ with $\gN_\gP$ intervals on a closed domain $\Omega$, any velocity field $v^{\vtheta} : \Omega \to \sR$ is termed continuous and piecewise-affine if (1)  $v^{\vtheta}$ is continuous, and (2) $v^{\vtheta}$ is an affine transformation on each interval of $\gP$.
\end{definition}

The CPA velocity field $v^\vtheta$ defines a differentiable trajectory $\phi^\vtheta(x, t) : \Omega \times \sR \to \Omega$ for each $x \in \Omega$. The trajectories are computed by integrating the velocity field $v^{\vtheta}$  to time $t$, and are used to construct the CPAB diffeomorphism. We visualize the resulting diffeomorphism in \cref{fig:velocity_field:diffeomorphism} with matching colors denoting corresponding pairs of $v^{\vtheta}$ and $f^\vtheta(x)$. 
Mathematically,

\begin{definition}[CPAB Diffeomorphism] \label{sec:def:CPAB_Diffeomorphism}
Given a CPA velocity field $v^{\vtheta}$, the CPAB diffeomorphism $f$ at point $x$, is defined as:
\begin{equation}
\label{eq:cpab_diffeo}
    f^\vtheta(x) \triangleq \phi^\vtheta(x, t=1) 
\end{equation}
such that $\phi^\vtheta(x, t=1)$ solves the integral equation:
\begin{equation}
\label{eq:cpab_integration}
    \phi^\vtheta(x, t) = x + \int\limits_{0}^{t} v^\vtheta\big(\phi^\vtheta(x, \tau)\big)\, d\tau.
\end{equation}
In arbitrary dimensions, computing  \Cref{sec:def:CPAB_Diffeomorphism} required using an ordinary differential equation solver and can be expensive. However, for 1D diffeomorphisms, as in our \ourmethod, there are closed-form solutions to the CPAB diffeomorphism and its gradients  \cite{freifeld2017transformations}, offering an efficient framework for learning activation functions.
\end{definition}

\begin{figure}[t]
     \centering
     \hfill
     \begin{subfigure}[t]{0.61\textwidth}
         \centering
         \captionsetup{margin={116px, 0px}}
         \includegraphics[width=\textwidth]{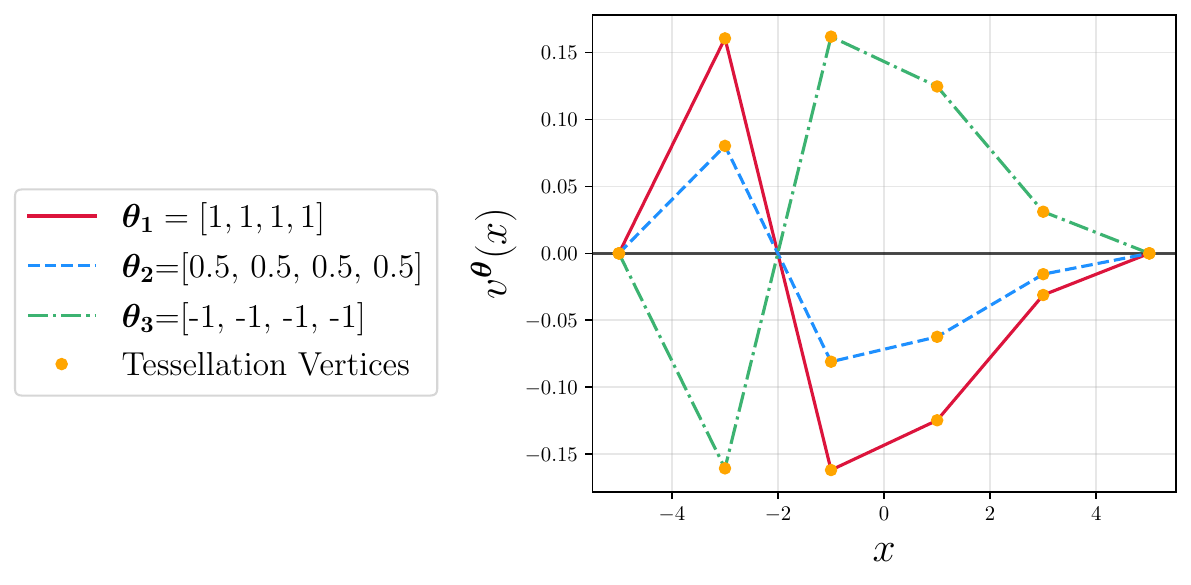}
         \caption{}
         \label{fig:velocity_field:velocity}
     \end{subfigure}
     \hfill
     \centering
     \begin{subfigure}[t]{0.38\textwidth}
         \centering
         \captionsetup{margin={18px, 0px}}
         \includegraphics[width=\textwidth]{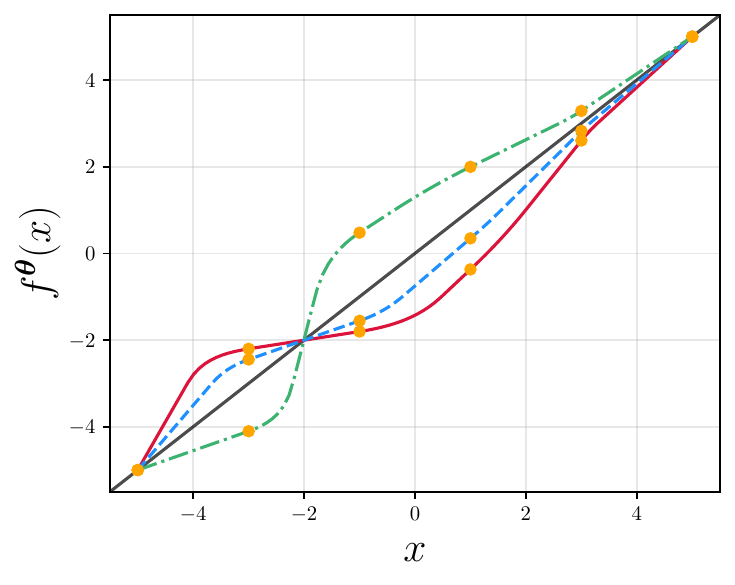}
         \caption{}
         \label{fig:velocity_field:diffeomorphism}
     \end{subfigure}
     \hfill
        \caption{An example of CPA velocity fields $v^{\vtheta}$ defined on the interval $\Omega = [-5, 5]$ with a tessellation $\gP$ consisting of five subintervals. The three different parameters, $\vtheta_1$, $\vtheta_2$, and $\vtheta_3$ define three distinct CPA velocity fields (\Cref{fig:velocity_field:velocity}) resulting in separate CPAB diffeomorphisms $f^\vtheta(x)$ (\cref{fig:velocity_field:diffeomorphism}).}
        \label{fig:velocity_field}
\end{figure}

\subsection{Graph Learning Notations}
Consider a graph $G = (V, E)$ with $N \in \mathbb{N}$ nodes, where $V = \{1, \dots, N\}$ is the set of nodes and $E \subseteq V \times V$ is the set of edges. Let $\mathbf{A} \in \{0, 1\}^{N \times N}$ be the adjacency matrix of $G$, and $\mathbf{X} \in \mathbb{R}^{N \times F}$ the node feature matrix, where $F$ is the number of input features. We denote the feature vector of node $v \in V$ as $\mathbf{x}_v \in \mathbb{R}^F$, which corresponds to the $v$-th row of $\mathbf{X}$. The input node features $\mathbf{X}$ are transformed into the initial node representations $\mathbf{H}^{(0)} \in \mathbb{R}^{N \times C}$, using an embedding function $\mathrm{emb}: \mathbb{R}^F \to \mathbb{R}^{C}$ to $\mathbf{X}$, where $C$ is the hidden dimension, that is
\begin{equation}
\mathbf{H}^{(0)} = \mathrm{emb}(\mathbf{X}).
\end{equation}
The initial features $\mathbf{H}^{(0)}$ are fed to a GNN comprised of  $L \in \mathbb{N}$ layers, where each layer $l \in \{1, \dots, L\}$ is followed by an activation function $\sigma^{(l)}(\cdot; \vtheta^{(l)}): \mathbb{R} \to \mathbb{R}$, and $\vtheta^{(l)}$ is a set of possibly learnable parameters of $\sigma^{(l)}$. Specifically, the intermediate output of the $l$-th GNN layer is denoted as:
\begin{equation}
\label{eq:Model:GNN_output}
\bar{\mathbf{H}}^{(l)} = \textsc{GNN}_{\textsc{layer}}^{(l)}(\mathbf{H}^{(l-1)}, \mathbf{A})
\end{equation}
where $\bar{\mathbf{H}}^{(l)} \in \mathbb{R}^{N \times C}$.
The activation function $\sigma^{(l)}$ is then applied \emph{element-wise} to $\bar{\mathbf{H}}^{(l)}$, yielding node features $h^{(l)}_{u, c} = \sigma^{(l)}(\bar{h}^{(l)}_{u, c}; \vtheta^{(l)})$ $\forall u \in V$, $\forall c \in [C]$ . 
Therefore, the application of $\sigma^{(l)}$ can be equivalently written as:
 \begin{equation}
\label{eq:Model:T_output}
\mathbf{H}^{(l)} = \sigma^{(l)}(\bar{\mathbf{H}}^{(l)}; \vtheta^{(l)}).
\end{equation}
In the following section, we will show how this abstraction is translated to our \ourmethod.

\section{\ourmethod}
\label{sec:methods}

In this section, we formalize our approach, \ourmethod, illustrated in \Cref{fig:structure}, which leverages diffeomorphisms to learn adaptive and flexible graph activation functions.

\subsection{A CPAB Blueprint for Graph Activation Functions}
Our approach builds on the highly flexible \textsc{CPAB} framework~\citep{freifeld2015highly,freifeld2017transformations} and extends it by incorporating Graph Neural Networks (GNNs) to enable the learning of adaptive graph activation functions. While the original \textsc{CPAB} framework was designed for grid deformation and alignment tasks, typically in 1D, 2D, or 3D spaces, we propose a novel application of CPAB in the context of learning activation functions, as described below.

\begin{wrapfigure}[21]{r}{0.5\textwidth}
\vspace{-8pt}
  \centering
    \includegraphics[width=0.45\textwidth]{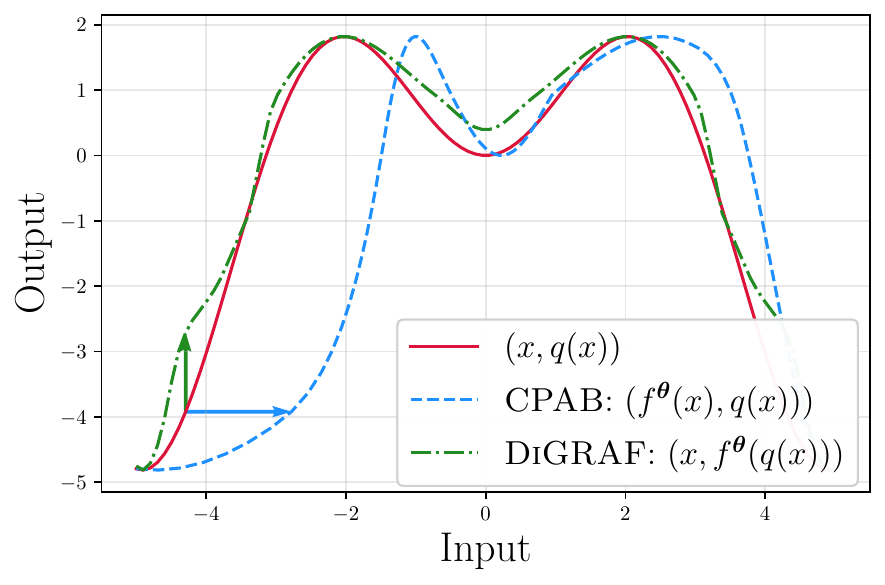}
    \caption{Different transformation strategies. The input function (red), CPAB transformation (blue), and \ourmethod\ transformation (green), within $\Omega = [-5, 5]$ using the same $\vtheta$. While CPAB stretches the input, \ourmethod\ stretches the output, showcasing the distinctive impact of each approach. }
    \label{fig:transform_grid}
\end{wrapfigure}
In \ourmethod, we treat a node feature (single channel) as a one-dimensional (\textsc{1D}) point. Given the node features matrix $\bar{\mathbf{H}} \in \mathbb{R}^{N \times C}$, we apply \ourmethod\ per entry in $\bar{\mathbf{H}}$, in accordance with the typical element-wise computation of activation functions. We mention that, while CPAB was originally designed to learn grid deformations, it can be utilized as an activation function blueprint by considering a conceptual shift that we demonstrate in \Cref{fig:transform_grid}. 
Given an input function (shown in red in the figure),  CPAB deforms grid coordinates, i.e., it transforms it along the horizontal axis, as shown in the blue curve. In contrast, \ourmethod\ transforms the original data points along the vertical axis, resulting in the green curve. This conceptual shift can be seen visually from the arrows showing the different dimensions of transformations. 
We therefore refer to the vertical transformation of the data as their activations. Formally, we define the transformation function $T^{(l)}$ as the element-wise application of the diffeomorphism $f^\vtheta$ from \Cref{eq:cpab_diffeo}:
\begin{equation}
\label{eq:TandPhi}
T^{(l)}(\bar{h}^{(l)}_{u, c}; \vtheta^{(l)}) \triangleq f^{\vtheta^{(l)}}(\bar{h}^{(l)}_{u, c}),
\end{equation}
where $\vtheta^{(l)}$ denotes learnable parameters of the transformation function $T^{(l)}$, that parameterize the underlying CPA velocity field as discussed in  \Cref{sec:background}. In \Cref{sec:methods:models}
, we discuss the learning of  $\vtheta^{(l)}$ in \ourmethod.

The transformation $T^{(l)}: \Omega \to \Omega$ described in \Cref{eq:TandPhi} is based on CPAB and therefore takes as input values within a domain $\Omega=[a,b]$, and outputs a value within that domain, where $a<b$ are hyperparameters. In practice, we take $a=-b$, such that the activation function can be symmetric and centered around 0, a property known to be desirable for activation functions \cite{dubey2022activation}. 
For any entry in the intermediate node features $\bar{\mathbf{H}}^{(l)}$(\Cref{eq:Model:GNN_output}) that is outside the domain $\Omega$, we use the identity function. Therefore, a \ourmethod~activation function reads:
\begin{equation}
    \label{eq:ourmethod}
    \ourmethod(\bar{h}^{(l)}_{u, c}, {\vtheta}^{(l)}) = \begin{cases}
        T^{(l)}(\bar{h}^{(l)}_{u, c}; {{\vtheta}}^{(l)}), & \text{If }    \bar{h}^{(l)}_{u, c}\in \Omega \\
        \bar{h}^{(l)}_{u, c}, & \text{Otherwise}
    \end{cases}
\end{equation}
In practice, \ourmethod\ is applied element-wise in parallel over all entries, and we use the following notation, which yields the output features post the activation of the $l$-th GNN layer:
\begin{equation}
\label{eq:application_digraf}
    \mathbf{H}^{(l)} =  \ourmethod(\bar{\mathbf{H}}^{(l)}, {\vtheta}^{(l)}).
\end{equation}

\subsection{Learning Diffeomorphic Velocity Fields}
\label{sec:methods:models}
\ourmethod, defined in \Cref{eq:ourmethod}, introduces graph-adaptivity into the transformation function $T^{(l)}$ by employing an additional GNN, denoted as $\textsc{GNN}_{\textsc{act}}$, that returns the diffeomorphism parameters $\vtheta^{(l)}$:
\begin{equation}
\label{eqn:digaf}
\vtheta^{(l)}(\bar{\mathbf{H}}^{(l)}, \mathbf{A}) = \textsc{Pool} \left(\textsc{GNN}_{\textsc{act}} (\bar{\mathbf{H}}^{(l)}, \mathbf{A})\right),
\end{equation}
where \textsc{Pool} is a graph-wise pooling operation, such as max or mean pooling.
The resulting vector $\vtheta^{(l)} \in \mathbb{R}^{\gN_\gP-1}$, which is dependent on the tessellation size $\gN_\gP$,  is then used to compute the output of the $l$-th layer, $\mathbf{H}^{(l)}$, as described in \Cref{eq:application_digraf}. We note that \Cref{eqn:digaf} yields a different $\vtheta^{(l)}$ for every input graph and features pair $(\bar{\mathbf{H}}^{(l)}, \mathbf{A})$, which implies the graph-adaptivity of \ourmethod. Furthermore, since $\textsc{GNN}_{\textsc{act}}$ is trained with the other network parameters in an end-to-end fashion, \ourmethod\ is also adaptive to the task of interest.  In \Cref{sec:appendix:implementation}, we provide and discuss the implementation details of $\textsc{GNN}_{\textsc{act}}$ and \textsc{Pool}.

\paragraph{Variants of \ourmethod.} \Cref{eqn:digaf} describes an approach to introduce graph-adaptivity to $\vtheta^{(l)}$ using $\textsc{GNN}_{\textsc{act}}$. An alternative approach is to directly optimize the parameters $\vtheta^{(l)} \in \mathbb{R}^{\gN_\gP -1}$, without using an additional GNN. Note that in this case, input and graph-adaptivity are sacrificed in favor of a computationally lighter solution. We denote this variant of our method by \ourmethodshared. Considering this variant is important because it allows us to: (i) offer a middle-ground solution in terms of computational effort, and (ii) it allows us to directly quantify the contribution of graph-adaptivity in \ourmethod. In \Cref{sec:experiments}, we compare the performance of the methods. 
\paragraph{Velocity Field Regularization.}
To ensure the smoothness of the velocity field, which will encourage training stability \cite{weber2023regularization}, we incorporate a regularization term in the learning procedure of $\vtheta^{(l)}$. Namely, we follow the Gaussian smoothness prior on the CPA velocity field from \citet{freifeld2015highly}, which was shown to be effective in maintaining smooth transformations. The regularization term is defined as follows: 
\begin{equation}     
\label{eqn:cpa_regularization}
\mathcal{R}(\{\vtheta^{(l)}\}_{l=1}^L) = \sum_{l=1}^{L}\vtheta^{{(l)}^\top} \Sigma_{\textsc{CPA}}^{-1} \vtheta^{(l)},
\end{equation}
 where $\Sigma_{\textsc{CPA}}$ represents the covariance of a zero-mean Gaussian smoothness prior defined as in~\citet{freifeld2015highly}. 
We further maintain the boundedness of $\vtheta^{(l)}$ by employing a hyperbolic tangent function (Tanh). In this way, $\vtheta^{(l)}$ remains in $[-1, 1]$ when applied in $T^{(l)}$ in \Cref{eq:ourmethod}, ensuring that the velocity field parameters remain bounded, encouraging the overall training stability of the model.

\subsection{Properties of \ourmethod}
\label{sec:theoritical_analysis}
In this section, we focus on understanding the theoretical properties of \ourmethod, highlighting the compelling attributes that establish it as a performant activation function for GNNs.

\textbf{\ourmethod\ yields differentiable activations.} By construction, \ourmethod\ learns a diffeomorphism, which is differentiable by definition. Being differentiable everywhere is considered beneficial as it allows for smooth weight updates during backpropagation, preventing the zigzagging effect in the optimization process~\citep{szandala2021review}. 

\textbf{\ourmethod\ is bounded within the input-output domain $\Omega$.} 
We point out in \Cref{sec:remark:T_within_Omega} that the diffeomorphism $T^{(l)}(\cdot; \vtheta^{(l)})$ is a $\Omega \to \Omega$ transformation. Any diffeomorphism is continuous, and by the extreme value theorem, $T^{(l)}(\cdot; \vtheta^{(l)})$ is bounded in $\Omega$. This prevents the activation values from becoming excessively large, a property linked to faster convergence~\citep{dubey2022activation}.

\textbf{\ourmethod\ can learn to be zero-centered.} Benefiting from its flexibility, \ourmethod\ has the capacity to learn activation functions that are inherently zero-centered. As an input-adaptive activation function governed by a parameters vector $\vtheta^{(l)}$, \ourmethod\ can be adjusted through $\vtheta^{(l)}$ to maintain a zero-centered nature. This property is associated with accelerated convergence in neural network training \citep{dubey2022activation}. 

\textbf{\ourmethod\ is efficient.} 
\ourmethod\ exhibits 
linear computational complexity, and can further achieve sub-linear running times via parallelization in practice \cite{freifeld2017transformations}. Moreover, with the existence of a closed-form solution for $f^{\vtheta^{(l)}}$ and its gradient in the 1D case~\citep{freifeld2015highly}, the computations of CPAB can be done efficiently.
Additionally, the measured runtimes, detailed in \Cref{sec:app:computations}, underscore the complexity comparability of \ourmethod\ with other graph activation functions.

In addition to the above properties, which follow from our design choice of learning diffeomorphisms through the CPAB framework, we briefly present the following properties, which are formalized and proven in \Cref{sec:appendix:proofs}.

\textbf{\ourmethod\ is permutation equivariant.} We demonstrate in \Cref{theo:permutataion} that \ourmethod\ exhibits permutation equivariance to node numbering, ensuring that its behavior remains consistent regardless of the ordering of the graph nodes, which is a key desired property in designing GNN components \cite{bronstein2021geometric}. 

\textbf{\ourmethod\ is Lipschitz continuous.} We show in \Cref{prop:lipschitz_digraf} that \ourmethod\ is Lipschitz continuous and derive its Lipschitz constant. Since it is also bounded, we can combine the two results, which leads us to the following proposition: 

\begin{restatable}[The boundedness of $T(\cdot; \vtheta^{(l)})$ in \ourmethod]{proposition}{Boundness}
    \label{prop:digraf_bounded} 
    Given a bounded domain $\Omega=[a,b] \subset \mathbb{R}$ where $a<b$, and any two arbitrary points $x, y \in \Omega$, the maximal difference of a diffeomorphism $T(\cdot; \vtheta^{(l)})$ with parameter $\vtheta^{(l)}$ in \ourmethod\ is bounded as follows:
    \begin{equation}
        \label{eq:boundednessDigraf}
        |T(x; \vtheta^{(l)}) - T(y; \vtheta^{(l)})| \leq \min(|b-a|, |x-y|\exp(C_{v^{\vtheta^{(l)}}}))
    \end{equation}
    where $C_{v^{\vtheta^{(l)}}}$ is the Lipschitz constant of the CPA velocity field $v^{\vtheta^{(l)}}$.
\end{restatable}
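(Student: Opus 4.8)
The plan is to prove the two bounds in \eqref{eq:boundednessDigraf} separately and then combine them with a minimum. The first bound, $|T(x;\vtheta^{(l)}) - T(y;\vtheta^{(l)})| \le |b-a|$, is immediate: by \Cref{sec:remark:T_within_Omega} the diffeomorphism $T(\cdot;\vtheta^{(l)})$ maps $\Omega=[a,b]$ into $\Omega$, so both $T(x;\vtheta^{(l)})$ and $T(y;\vtheta^{(l)})$ lie in $[a,b]$, and hence their absolute difference is at most the diameter $|b-a|$ of the interval. This is exactly the boundedness statement already invoked in \Cref{sec:theoritical_analysis}, restated quantitatively.

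For the second bound, $|T(x;\vtheta^{(l)}) - T(y;\vtheta^{(l)})| \le |x-y|\exp(C_{v^{\vtheta^{(l)}}})$, I would unfold the definition of the CPAB diffeomorphism from \Cref{sec:def:CPAB_Diffeomorphism}: $T(x;\vtheta^{(l)}) = f^{\vtheta^{(l)}}(x) = \phi^{\vtheta^{(l)}}(x,1)$, where $\phi^{\vtheta^{(l)}}$ solves the integral equation $\phi^\vtheta(x,t) = x + \int_0^t v^\vtheta(\phi^\vtheta(x,\tau))\,d\tau$. The key step is a Grönwall-type estimate on the flow. Fix two initial points $x,y$ and set $g(t) = |\phi^{\vtheta^{(l)}}(x,t) - \phi^{\vtheta^{(l)}}(y,t)|$. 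Subtracting the integral equations for the two trajectories and using the triangle inequality gives $g(t) \le |x-y| + \int_0^t |v^{\vtheta^{(l)}}(\phi^{\vtheta^{(l)}}(x,\tau)) - v^{\vtheta^{(l)}}(\phi^{\vtheta^{(l)}}(y,\tau))|\,d\tau \le |x-y| + C_{v^{\vtheta^{(l)}}} \int_0^t g(\tau)\,d\tau$, where $C_{v^{\vtheta^{(l)}}}$ is the Lipschitz constant of the CPA velocity field. Grönwall's inequality then yields $g(t) \le |x-y|\exp(C_{v^{\vtheta^{(l)}}} t)$, and evaluating at $t=1$ gives the claimed bound.

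The main obstacle is ensuring that the velocity field $v^{\vtheta^{(l)}}$ is genuinely Lipschitz with a well-defined constant $C_{v^{\vtheta^{(l)}}}$, and that the flow trajectories stay inside $\Omega$ so that this constant controls the integrand at all times $\tau\in[0,1]$. By \Cref{sec:def:CPA_Velocity}, $v^{\vtheta^{(l)}}$ is continuous and piecewise-affine on the tessellation $\gP$ of the compact interval $\Omega$; a continuous piecewise-affine function on a finite partition of a compact interval is Lipschitz, with constant equal to the maximum absolute slope over the finitely many pieces, so $C_{v^{\vtheta^{(l)}}}$ is well-defined and finite. Moreover, CPAB diffeomorphisms are constructed so that the trajectories $\phi^{\vtheta^{(l)}}(\cdot,\tau)$ remain in $\Omega$ for all $\tau$ (this is implicit in the fact that $f^{\vtheta^{(l)}}:\Omega\to\Omega$ and is guaranteed by the boundary behavior of $v^{\vtheta^{(l)}}$), so the Lipschitz bound on $v^{\vtheta^{(l)}}$ over $\Omega$ applies uniformly along the flow. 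I would note that the existence and uniqueness of the trajectories — hence the legitimacy of writing $g(t)$ and applying Grönwall — follows from Picard–Lindelöf given the Lipschitz velocity field, and is in any case already part of the CPAB framework cited from \citet{freifeld2017transformations}.

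Combining the two bounds, since both hold simultaneously, $|T(x;\vtheta^{(l)}) - T(y;\vtheta^{(l)})| \le \min(|b-a|,\, |x-y|\exp(C_{v^{\vtheta^{(l)}}}))$, which is \eqref{eq:boundednessDigraf}. Finally, I would remark that for points outside $\Omega$, \ourmethod\ acts as the identity by \eqref{eq:ourmethod}, so the statement as phrased — restricted to $x,y\in\Omega$ where \ourmethod\ coincides with $T(\cdot;\vtheta^{(l)})$ — is the relevant regime, and the Lipschitz-continuity claim of \Cref{prop:lipschitz_digraf} referenced just before the proposition is what supplies the constant $\exp(C_{v^{\vtheta^{(l)}}})$ used here.
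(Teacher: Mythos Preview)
Your proposal is correct and follows essentially the same approach as the paper: the paper combines \Cref{prop:lipschitz_digraf} (the Gr\"onwall-based Lipschitz estimate on the flow, giving the $|x-y|\exp(C_{v^{\vtheta^{(l)}}})$ bound) with \Cref{sec:remark:T_within_Omega} (the $\Omega\to\Omega$ range bound giving $|b-a|$) and takes the minimum, which is exactly what you do. The only cosmetic difference is that the paper packages the Gr\"onwall argument separately in \Cref{prop:lipschitz_digraf} and computes an explicit expression $C_{v^{\vtheta^{(l)}}}=\sum_j|\vtheta^{(l)}_j|$ in \Cref{prop:lipConstV}, whereas you inline the flow estimate and justify finiteness of $C_{v^{\vtheta^{(l)}}}$ via the maximum slope of a piecewise-affine map; both are valid and lead to the same conclusion.
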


\textbf{\ourmethod\ extends commonly used activation functions.} CPAB~\citep{freifeld2015highly, freifeld2017transformations}, which is used as a framework to learn the diffeomorphism in \ourmethod, is capable of learning and representing a wide range of diffeomorphic functions. When used as an activation function, the transformation $T^{(l)}(\cdot; \vtheta^{(l)})$ in \ourmethod\ adapts to the specific graph and task by learning different $\vtheta^{(l)}$ parameters, rather than having a fixed diffeomorphism. Examples of popular and commonly used diffeomorphisms utilized as activations include Sigmoid, Tanh, Softplus, and ELU, as we show in \Cref{sec:appendix:proofs}. Extending this approach is our \ourmethod\, that learns the diffeomorphism during training rather than selecting a pre-defined function. 

\section{Experiments}
\label{sec:experiments}
In this section, we conduct an extensive set of experiments to demonstrate the effectiveness of \ourmethod\ as a graph activation function. 
Our experiments seek to address the following questions:
\begin{enumerate}[label=(Q\arabic*),leftmargin=*]
    \item Does \ourmethod\ consistently improve the performance of GNNs compared to existing activation functions on a broad set of downstream tasks?
    \item To what extent is graph-adaptivity in \ourmethod\ beneficial when compared to our baseline of \ourmethodshared\ and existing activation functions that lack adaptivity? 
    \item Compared with other graph-adaptive activation functions, how does the added flexibility offered by \ourmethod\ impact downstream performance?
    \item How do the considered activation functions compare in terms of training convergence?
\end{enumerate}

\xhdr{Baselines} We compare \ourmethod\ with three categories of relevant and competitive baselines:
\begin{enumerate*}[label=(\arabic*),leftmargin=*]
    \item \emph{Standard Activation Functions}, namely Identity, \revision{Sigmoid~\citep{rumelhart1986learning}}, ReLU~\citep{fukushima1969visual}, LeakyReLU~\citep{Maas2013RectifierNI}, Tanh~\citep{hochreiter1997long}, GeLU~\citep{hendrycks2023gaussian}, and ELU~\citep{clevert2016fast} to estimate the benefit of learning activation functions parameters;
    \item \emph{Learnable Activation Functions}, specifically PReLU~\citep{he2015delving}, Maxout~\citep{pmlr-v28-goodfellow13} and Swish~\citep{ramachandran2017searching}, to assess the value of graph-adaptivity; and 
    \item \emph{Graph Activation Functions}, such as Max~\citep{iancu2020graph}, Median~\citep{iancu2020graph} and GReLU~\citep{zhang2022graph}, to evaluate the effectiveness of \ourmethod's design in capturing graph structure and the blueprint flexibility of \ourmethod\ as discussed in \Cref{sec:methods}.
\end{enumerate*}

All baselines are integrated into GCN~\citep{kipf2017semisupervised} for node tasks and GIN~\citep{xu2019how} (GINE \citep{hu2020strategies} where edge features are available) for graph tasks, to ensure fair and meaningful comparisons, isolating the impact of other design choices.
We provide additional details on the experimental settings and datasets in \Cref{sec:appendix:experiment}, as well as additional experiments, including ablation studies, in \Cref{sec:app:additional_results}.

\begin{table*}[t]
    \centering
    \scriptsize
    \caption{Comparison of node classification accuracy (\%) $\uparrow$ on different datasets using various baselines with \ourmethod. The top three methods are marked by \textbf{\textcolor{red}{First}}, \textbf{\textcolor{violet}{Second}}, \textbf{Third}.}
    \scriptsize
    \label{tab:GReLU}
    \begin{tabular}{l  ccccc }
        \toprule
        Method $\downarrow$ / Dataset $\rightarrow$ & 
        \textsc{Blog Catalog} &
        \textsc{Flickr}  &
        \textsc{CiteSeer} &
        \textsc{Cora} &
        \textsc{PubMed} \\
        \midrule  

        \textbf{\textsc{Standard Activations}} \\
               $\,$  \textsc{GCN} + Identity & 74.8$\pm$0.5 &
        53.5$\pm$1.1 &
        \third{69.1$\pm$1.6} &
        80.5$\pm$1.2 &
        77.6$\pm$2.1\\
        $\,$\textsc{GCN} + Sigmoid~\citep{rumelhart1986learning} & 39.7$\pm$4.5 & 18.3$\pm$1.2 & 27.9$\pm$2.1 & 32.1$\pm$2.3 & 52.8$\pm$6.6 \\
        $\,$       \textsc{GCN} + ReLU \citep{kipf2017semisupervised} & 
        72.1$\pm$1.9 &
        50.7$\pm$2.3 &
        67.7$\pm$2.3 & 
        79.2$\pm$1.4 & 
        77.6$\pm$2.2 \\
        $\,$ \textsc{GCN} + LeakyReLU~\citep{Maas2013RectifierNI}        & 
        72.6$\pm$2.1 & 51.0$\pm$2.0 & 68.4$\pm$1.8 & 79.4$\pm$1.6 & 76.8$\pm$1.6 \\
        $\,$ \textsc{GCN} + Tanh~\citep{hochreiter1997long}        & 
        73.9$\pm$0.5 &
        51.3$\pm$1.5 &
        \third{69.1$\pm$1.4} &
        80.5$\pm$1.3 & \third{77.9$\pm$2.1} \\
        $\,$ \textsc{GCN} + GeLU~\citep{hendrycks2023gaussian}        & 
        75.8$\pm$0.5 &
        \third{56.1$\pm$1.3} &
        67.8$\pm$1.7 &
        79.3$\pm$1.9 & 77.1$\pm$2.7 \\
        $\,$ \textsc{GCN} + ELU~\citep{clevert2016fast} & 74.8$\pm$0.5 & 53.4$\pm$1.1 & \third{69.1$\pm$1.7} &
        \third{80.7$\pm$1.2} & 77.5$\pm$2.2\\
        \midrule
    \textbf{\textsc{Learnable Activations}} \\
            $\,$ \textsc{GCN} + PReLU~\citep{he2015delving}     & 
74.8$\pm$0.4 & 
53.2$\pm$1.5 &
\second{69.2$\pm$1.5}  & 
80.5$\pm$1.2   & 77.6$\pm$2.1 \\
        $\,$ \textsc{GCN} + Maxout~\citep{pmlr-v28-goodfellow13}   & 
         72.4$\pm$1.4 & 54.0$\pm$1.8 & 68.5$\pm$2.2  & 79.8$\pm$1.5     & 77.3$\pm$2.9 \\
         $\,$ \textsc{GCN} + Swish~\citep{ramachandran2017searching} & \third{76.0$\pm$0.7} & 55.7$\pm$1.4 & 67.7$\pm$1.8 & 79.2$\pm$1.1 & 77.3$\pm$2.8 \\
        \midrule
        \textbf{\textsc{Graph Activations}} \\
        $\,$ \textsc{GCN} + Max \citep{iancu2020graph} & 72.0$\pm$1.0 & 47.5$\pm$0.9 & 59.7$\pm$2.9 & 76.0$\pm$1.8 & 75.0$\pm$1.4 \\
        $\,$ \textsc{GCN} + Median~\citep{iancu2020graph} & \second{77.7$\pm$0.7} & \second{58.3$\pm$0.6} & 61.3$\pm$2.7 & 77.1$\pm$1.1 & 75.7$\pm$2.5 \\
        $\,$       \textsc{GCN} + GReLU \citep{zhang2022graph}
        &
        73.7$\pm$1.2 &
        54.4$\pm$1.6 & 
        68.5$\pm$1.9 &
        \second{81.8$\pm$1.8}& 
        \second{78.9$\pm$1.7} \\
         \midrule
          \textsc{GCN} + \ourmethodshared\ & 
          80.8$\pm$0.6 &
          68.6$\pm$1.8 &
          69.2$\pm$2.1 &
         81.5$\pm$1.1 & 78.3$\pm$1.6 \\
          \textsc{GCN} + \ourmethod\ & 
          \first{81.6$\pm$0.8} &
          \first{69.6$\pm$0.6} &
          \first{69.5$\pm$1.4} &
          \first{82.8$\pm$1.1}& \first{79.3$\pm$1.4}  \\

        \bottomrule

    \end{tabular}
\end{table*}

\subsection{Node Classification}
\label{sec:node_class}

Our results are summarized in~\Cref{tab:GReLU}, where we consider the \textsc{BlogCatalog}~\citep{Yang_2023},  \textsc{Flickr}~\citep{Yang_2023}, \textsc{CiteSeer}~\citep{sen2008collective}, \textsc{Cora}~\citep{mccallum2000automating}, and \textsc{PubMed}~\citep{namata2012query} datasets. As can be seen from the Table,
\ourmethod\ consistently outperforms all standard activation functions, as well as all the learnable activation functions. 
Additionally, \ourmethod\  outperforms other graph-adaptive activation functions. We attribute this positive performance gap to the ability of \ourmethod\ to learn complex non-linearities due to its diffeomorphism-based blueprint, compared to piecewise linear or pre-defined functions as in other methods.
Finally, we compare the performance of  \ourmethod\ and \ourmethodshared.  
We remark that in this experiment, we are operating in a transductive setting, as the data consists of a single graph, implying that both \ourmethod\ and \ourmethodshared\ are adaptive in this case. Still, we see that \ourmethod\ slightly outperforms the \ourmethodshared\, and we attribute this performance gain to the GNN layers within \ourmethod\, that are (i) explicitly graph-aware, and (ii) can facilitate the learning of better diffeomorphism parameters  $\vtheta^{(l)}$ (\Cref{eqn:digaf}) due to the added complexity.

\subsection{Graph Classification and Regression}
\label{sec:graph_level_task}
\begin{wraptable}[21]{r}{0.4\textwidth}
\centering
\vspace{-12pt}
\caption{Comparison on \textsc{ZINC-12k} under the \textsc{500k} parameter budget. The top three methods are \textbf{\textcolor{red}{First}}, \textbf{\textcolor{violet}{Second}}, \textbf{Third}.}
\scriptsize
\begin{tabular}{l c}
    \toprule
        Method & \textsc{ZINC (MAE $\downarrow$)} \\
        \midrule
                        \textbf{\textsc{Standard Activations}} \\
             $\,$ \textsc{GIN} + Identity & 0.2460$\pm$0.0214\\
             $\,$\textsc{GIN} + Sigmoid~\citep{rumelhart1986learning} & 0.3839$\pm$0.0058 \\
        $\,$ \textsc{GIN} + ReLU~\citep{xu2019how}        & \third{0.1630$\pm$0.0040} \\
                $\,$ \textsc{GIN} + LeakyReLU~\citep{Maas2013RectifierNI}        & 0.1718$\pm$0.0042\\
        $\,$ \textsc{GIN} + Tanh~\citep{hochreiter1997long}        & 0.1797$\pm$0.0064\\
        $\,$ \textsc{GIN} + GeLU~\citep{hendrycks2023gaussian}        & 0.1896$\pm$0.0023\\
        $\,$ \textsc{GIN} + ELU~\citep{clevert2016fast} & 0.1741$\pm$0.0089\\
        \midrule
                \textbf{\textsc{Learnable Activations}} \\
            $\,$ \textsc{GIN} + PReLU~\citep{he2015delving}     & 0.1798 $\pm$0.0067\\
        $\,$ \textsc{GIN} + Maxout~\citep{pmlr-v28-goodfellow13}  & \second{0.1587$\pm$0.0057}\\ 
        $\,$ \textsc{GIN} + Swish~\citep{ramachandran2017searching} & 0.1636$\pm$0.0039\\
        \midrule\textbf{\textsc{Graph Activations}} \\
        $\,$ \textsc{GIN} + Max~\citep{iancu2020graph} & 0.1661$\pm$0.0035\\
        $\,$ \textsc{GIN} + Median~\citep{iancu2020graph} & 0.1715$\pm$0.0050\\
        $\,$ \textsc{GIN} + GReLU \cite{zhang2022graph} & 0.3003$\pm$0.0086 \\
  
        \midrule

                  \textsc{GIN} + \ourmethodshared\ & {0.1382$\pm$0.0080} \\

        \textsc{GIN} + \ourmethod\ & \textbf{\textcolor{red}{0.1302$\pm$0.0090}} \\
        \bottomrule
    \end{tabular}

    \label{tab:zinc}
\end{wraptable}

\textbf{\textsc{ZINC-12k}}. In \Cref{tab:zinc} we present results on the \textsc{ZINC-12k}~\citep{ZINCdataset,G_mez_Bombarelli_2018,dwivedi2023benchmarking} dataset for the regression of constrained solubility of molecules. We note that \ourmethodGNN\ achieves an MAE of $0.1302$, surpassing the best-performing activation on this dataset, Maxout, by $0.0285$, which translates to a relative improvement of $\sim 18\%$.

\textbf{OGB.} We evaluate \ourmethod\ on 4 datasets from the OGB benchmark \citep{hu2020ogb}, namely, \textsc{molesol}, \textsc{moltox21}, \textsc{molbace}, and \textsc{molhiv}. The results are reported in \Cref{tab:ogb}, where it is noted that \ourmethod\ achieves significant improvements compared to standard, learnable, and graph-adaptive activation functions. For instance, \ourmethod\ obtains a ROC-AUC score of $80.28$\% on \textsc{molhiv}, an absolute improvement of $4.7$\% over the best performing activation function (ReLU).

\textbf{TUDatasets.}. In addition to the aforementioned datasets, we evaluate \ourmethod\ on the popular TUDatasets \citep{morris2020tudataset}.  We present results on \textsc{MUTAG}, \textsc{PTC}, \textsc{PROTEINS}, \textsc{NCI1} and \textsc{NCI109} in \Cref{tab:tud_datasets} in \Cref{sec:app:additional_results}.
The results show that \ourmethod\ is always within the top-three performing activations across all datasets. 
As an example, on \textsc{PROTEINS} dataset, we see an absolute improvement of $1.1\%$ over the best-performing activation functions (Maxout and GReLU).

\begin{table*}[t]
\centering
\scriptsize
\caption{A comparison of \ourmethod\ to natural baselines, standard, and graph activation layers on OGB datasets, demonstrating the advantage of our approach. The top three methods are marked by \textbf{\textcolor{red}{First}}, \textbf{\textcolor{violet}{Second}}, \textbf{Third}.
}
\begin{tabular}{l  c c c c}
    \toprule
        \multirow{2}*{Method $\downarrow$ / Dataset $\rightarrow$} & \textsc{molesol} & \textsc{moltox21} & \textsc{molbace} &\textsc{molhiv} \\
        & \textsc{RMSE $\downarrow$} & \textsc{ROC-AUC $\uparrow$} &\textsc{ROC-AUC $\uparrow$} &\textsc{ROC-AUC $\uparrow$} \\
        \midrule
                \textbf{\textsc{Standard Activations}} \\
         $\,$ \textsc{GIN} + Identity & 1.402$\pm$0.036 & 74.51$\pm$0.44 & 72.69$\pm$2.93 & 75.12$\pm$0.77\\
         $\,$ \textsc{GIN} + Sigmoid~\citep{rumelhart1986learning} & 0.884$\pm$0.043 & 69.15$\pm$0.52 & 68.70$\pm$3.68 & 73.87$\pm$0.80 \\
        $\,$ \textsc{GIN} + ReLU~\citep{xu2019how}        & 1.173$\pm$0.057 & 74.91$\pm$0.51& 72.97$\pm$4.00 & \second{75.58$\pm$1.40} \\
                       $\,$ \textsc{GIN} + LeakyReLU~\citep{Maas2013RectifierNI}       & 1.219$\pm$0.055 & 74.60$\pm$1.10 & 73.40$\pm$3.19 &
                       74.75$\pm$1.20\\
        $\,$ \textsc{GIN} + Tanh~\citep{hochreiter1997long}  & 1.190$\pm$0.044 & 74.93$\pm$0.61  & 74.92$\pm$2.47 &\third{75.22}$\pm$2.03\\
        $\,$ \textsc{GIN} + GeLU~\citep{hendrycks2023gaussian}        & 1.147$\pm$0.050 & 74.29$\pm$0.59 & 75.59$\pm$3.32 & 74.15$\pm$0.79\\
        $\,$ \textsc{GIN} + ELU~\citep{clevert2016fast} & 1.104$\pm$0.038 & 75.08$\pm$0.62 & 76.10$\pm$3.29 & 75.09$\pm$0.65 \\
        \midrule
                \textbf{\textsc{Learnable Activations}} \\
        $\,$ \textsc{GIN} + PReLU~\citep{he2015delving}     & \third{1.098$\pm$0.062} & 74.51$\pm$0.92 & 76.16$\pm$2.28 &
        73.56$\pm$1.63\\
        $\,$ \textsc{GIN} + Maxout~\citep{pmlr-v28-goodfellow13} & 1.109$\pm$0.045 & 75.14$\pm$0.87 & 76.83$\pm$3.88  &
        72.75$\pm$2.10\\ 
        $\,$ \textsc{GIN} + Swish~\citep{ramachandran2017searching} & 1.113$\pm$0.066 & 73.31$\pm$1.01 & \third{77.23$\pm$2.35} & 72.95$\pm$0.64 \\
        \midrule
        
        \textbf{\textsc{Graph Activations}} \\
        $\,$ \textsc{GIN} + Max~\citep{iancu2020graph} & 1.199$\pm$0.070 & \second{75.50$\pm$0.77} & 77.04$\pm$2.81 & 73.44$\pm$2.08\\
        $\,$ \textsc{GIN} + Median~\citep{iancu2020graph} &\second{1.049$\pm$0.038}& 74.39$\pm$0.90  & \second{77.26$\pm$2.74} & 72.80$\pm$2.21 \\
         $\,$ \textsc{GIN} + GReLU~\citep{zhang2022graph} & 1.108$\pm$0.066 & \third{75.33$\pm$0.51} & 75.17$\pm$2.60 & 73.45$\pm$1.62 \\

        \midrule

         \textsc{GIN} + \ourmethodshared\ & {0.9011$\pm$0.047} & 76.37$\pm$0.49 & 78.90$\pm$1.41 & 79.19$\pm$1.36 \\

         \textsc{GIN} + \ourmethod\ & \first{0.8196$\pm$0.051} & \first{77.03$\pm$0.59} & \first{80.37$\pm$1.37} & \first{80.28$\pm$1.44} \\
        \bottomrule
\end{tabular}
\label{tab:ogb}
\end{table*}

\subsection{Convergence Analysis}
\label{sec:results_convergence}

Besides improved downstream performance, another important aspect of activation functions is their contribution to training convergence \citep{dubey2022activation}. We therefore present the training curves of \ourmethod\ as well as the rest of the considered baselines to gain insights into their training convergence. Results for representative datasets are presented in Figure \ref{fig:convergence}, where \ourmethod\ achieves similar or better training convergence than other methods, while also demonstrating better generalization abilities due to its better performance.

\begin{figure}[t]
    \centering
    \begin{subfigure}[b]{.32\linewidth}
        \centering
        \includegraphics[width=\linewidth]{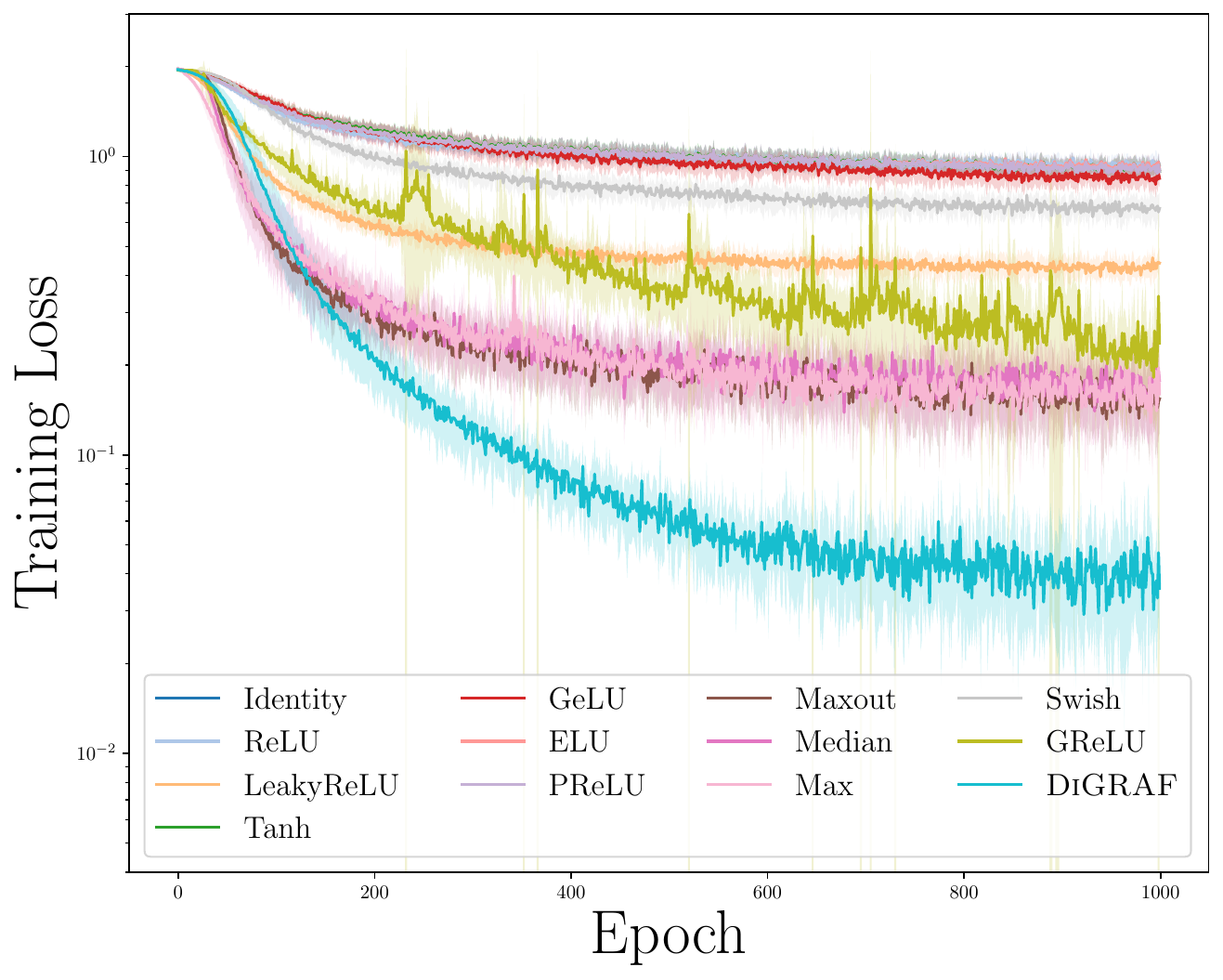}
        \caption{$\textsc{Cora}$}
        \label{fig:convergence_blog}
    \end{subfigure}
    \hfill
    \begin{subfigure}[b]{.32\linewidth}
        \centering
        \includegraphics[width=\linewidth]{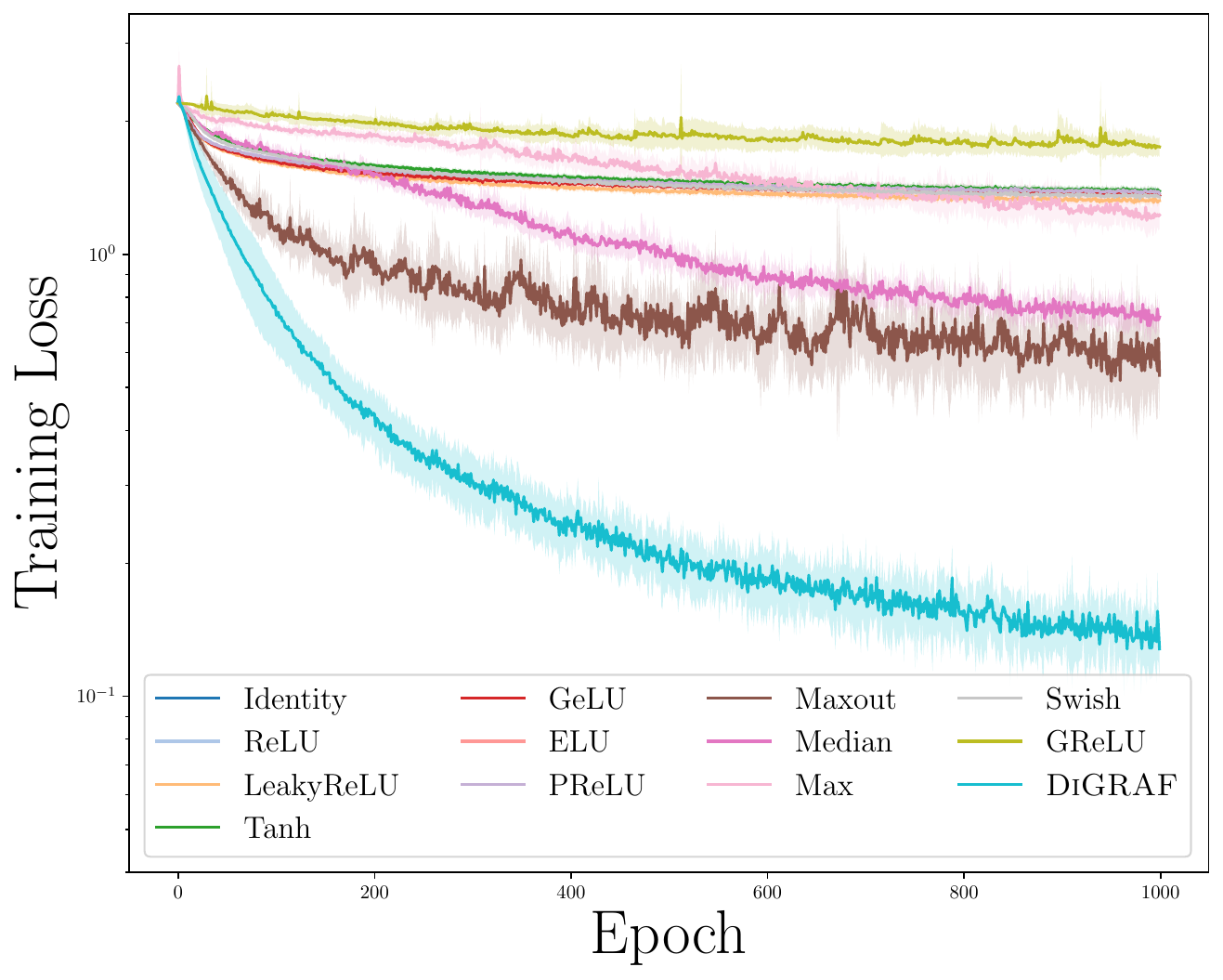}
        \caption{$\textsc{Flickr}$}
        \label{fig:convergence_flickr}
    \end{subfigure}
    \hfill
    \begin{subfigure}[b]{.32\linewidth}
        \centering
        \includegraphics[width=\linewidth]{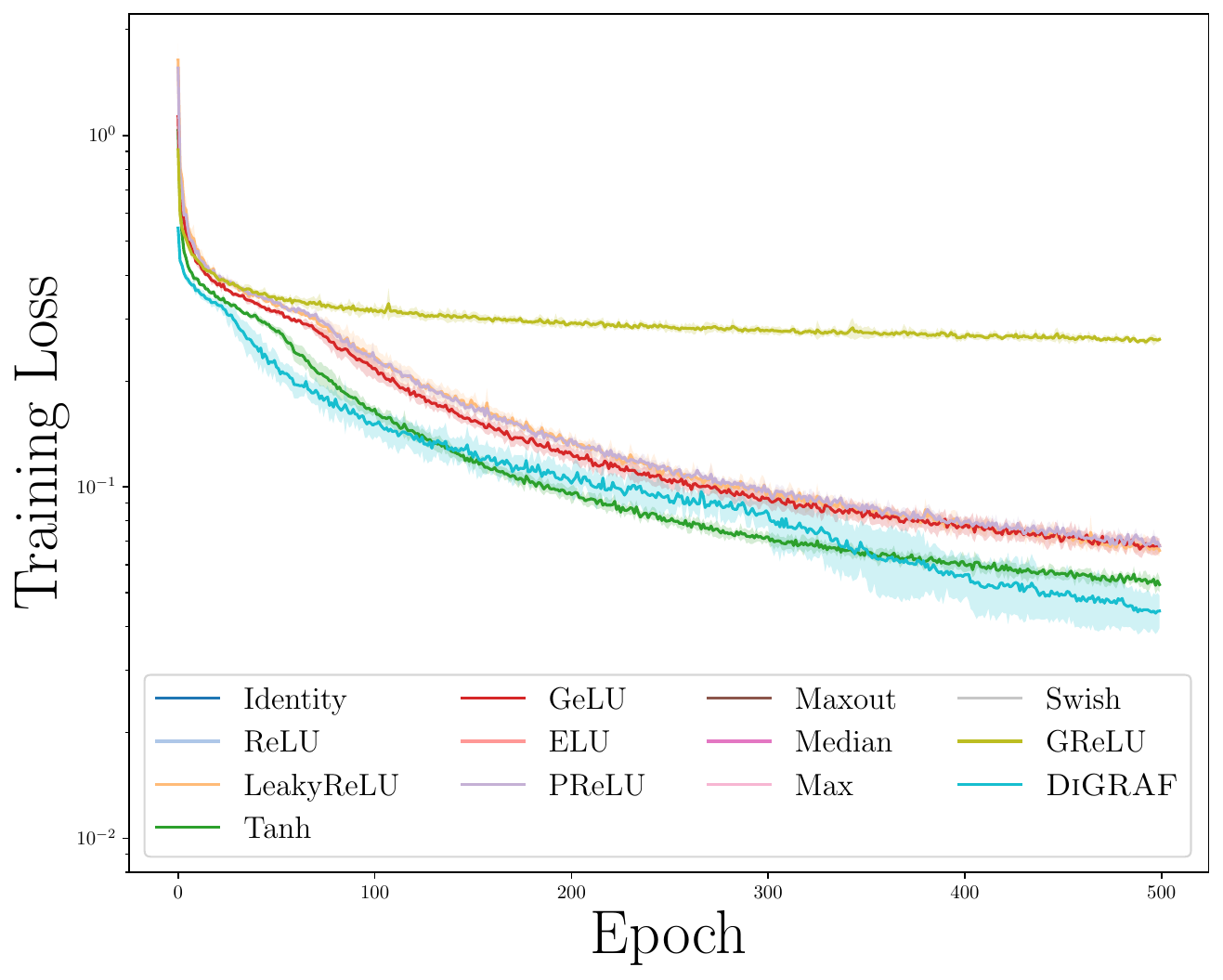}
        \caption{$\textsc{ZINC-12k}$}
        \label{fig:convergence_zinc}
    \end{subfigure}
    \hfill
    \caption{Convergence analysis of \ourmethod\ compared to baseline activation functions. The plot illustrates the training loss over epochs, showcasing the overall faster convergence of \ourmethod. 
    }
    \label{fig:convergence}
    \vspace{-10pt}
\end{figure}

\subsection{Discussion}
Our extensive experiments span across
15 different datasets and benchmarks, consisting of both node- and graph-level tasks. Our key takeaways are as follows:

\begin{enumerate}[label=(A\arabic*),leftmargin=*]
    \item \textbf{Overall Performance of \ourmethod:} The performance offered by \ourmethod\ is consistent and on par with or better than other activation functions, across all datasets. These results establish \ourmethod\ as a highly effective approach for learning graph activation functions.

    \item \textbf{Benefit of Graph-Adaptivity:} \ourmethod\  outperforms the learnable (although not graph-adaptive) activation functions such as PReLU, Maxout, and Swish, as well as our non-graph adaptive baseline \ourmethodshared, on all considered datasets.  
    This observation highlights the crucial role of graph-adaptivity in activation functions for GNNs.

    \item \textbf{The Benefit of Blueprint Flexibility:} \ourmethod\ consistently outperforms other graph-adaptive activation functions like Max, Median, and GReLU. 
    We tie this positive performance gap to the ability of \ourmethod\ to model complex non-linearities due to its diffeomorphism-based blueprint, compared to piecewise linear or pre-defined functions as in other methods.

    \item \textbf{Convergence of \ourmethod:} As shown in \Cref{sec:results_convergence}, in addition to overall better downstream  performance, \ourmethod\ allows to achieve better training convergence.
\end{enumerate}

In summary, compared with 12 well-known activation functions used in GNNs, and across multiple datasets and benchmarks, \ourmethod\ demonstrates a superior, learnable, flexible, and versatile graph-adaptive activation function,  highlighting it as a strong approach for designing and learning graph activation functions.

\section{Conclusions}
\label{sec:conclusion}
In this work, we introduced \ourmethod, a novel activation function designed for graph-structured data. Our approach leverages Continuous Piecewise-Affine Based (CPAB) transformations to integrate a graph-adaptive mechanism, allowing \ourmethod\ to adapt to the unique structural features of input graphs. 
We show that \ourmethod\ exhibits several desirable properties for an activation function, including differentiability, boundedness within a defined interval, and computational efficiency. Furthermore, we demonstrated that \ourmethod\ maintains stability under input perturbations and is permutation equivariant, therefore suitable for graph-based applications.
Our extensive experiments on diverse datasets and tasks demonstrate that \ourmethod\ consistently outperforms traditional, learnable, and existing graph-specific activation functions.

\paragraph{Limitations and Broader Impact.} 
While \ourmethod\ demonstrates consistent superior performance compared to existing activation functions, there remain areas for potential improvement. For instance, the current formulation is limited to learning activation functions that belong to the class of diffeomorphisms, which, despite encompassing a wide range of functions, might not be optimal. 
By improving the performance on real-world tasks like molecule property prediction, and offering faster training convergence, we envision a positive societal impact by \ourmethod\ in drug discovery and 
in achieving a lower carbon footprint.

\subsubsection*{Acknowledgments}
\revision{BR acknowledges support from the National Science Foundation (NSF) awards, CCF-1918483, CAREER IIS-1943364 and CNS-2212160, Amazon Research Award, AnalytiXIN, and the Wabash Heartland Innovation Network (WHIN), Ford, NVidia, CISCO, and Amazon. Computing infrastructure was supported in part by CNS-1925001 (CloudBank). This work was supported in part by AMD under the AMD HPC Fund program. ME is funded by the Blavatnik-Cambridge fellowship, the Cambridge Accelerate Programme for Scientific Discovery, and the Maths4DL EPSRC Programme.  The authors thank Shahaf Finder, Ron Shapira-Weber, and Oren Freifeld for the discussions on CPAB.} 

\bibliography{example_paper}

\clearpage
\newpage


\appendix
\section{Additional Related Work}

\paragraph{Graph Neural Networks.}
Graph Neural Networks \cite{scarselli2008graph} (GNNs) have emerged as a transformative approach in machine learning, notably following the popularity of the message-passing scheme~\citep{gilmer2017neural}. GNNs enable effective learning from graph-structured data, and can be applied to different tasks, ranging from social network analysis \citep{kipf2017semisupervised} to bioinformatics \citep{jumper2021highly}. In recent years, various GNN architectures were proposed, aiming to address various aspects, from alleviating oversmoothing~\citep{chen2020simple}, concerning attention mechanisms in the message passing scheme \citep{velivckovic2017graph,brody2022how,kim2021how}, or focusing on the expressive power of the architectures \citep{morris2023wl,frasca2022understanding,zhang2023complete,zhang2023rethinking,puny2023equivariant,bevilacqua2023efficient}, given that message-passing based architectures are known to be bounded by the WL graph isomorphism test~\citep{xu2019how,morris2019weisfeiler}. 

Despite advancements, the poor performance of deep GNNs has led to a preference for shallow architectures GCNs \citep{li2018deeper}. To enhance performance, techniques such as pooling functions have been proposed, introducing generalization by reducing feature map sizes~\citep{zhang2019hierarchical}. Methods such as HGP-SL~\citep{zhang2019hierarchical}, GraphUNet~\citep{gao2019graph}, and LaPool~\citep{noutahi2019towards} introduce pooling layers specifically designed for GNNs. Beyond node feature, the importance of graph structure and positional features is increasingly recognized, with advancements such as GraphGPS~\citep{rampavsek2022recipe} and SAN~\citep{kreuzer2021rethinking} integrating positional and structural encodings through attention-based mechanisms.

\paragraph{Evaluation of Rectified Activation Functions.}
Rectified activation functions, represented by the Rectified Linear Unit (ReLU), have been widely applied and studied in various neural network architectures due to their simplicity and effectiveness~\citep{nwankpa2018activation, apicella2021survey, kunc2024three}. The prevalent assumption that ReLU's performance is predominantly due to its sparsity is critically examined by \citet{xu2015empirical}, suggesting introducing a non-zero slope in the negative part can significantly enhance network performance. Extending this, \citet{price2024deep} investigates sparsity-inducing activation functions, such as the shifted ReLU, in network initialization and early stages of training. These functions can mitigate overfitting and boost model generalization capabilities. Conversely, it was shown that in overparameterized networks, smoother activation functions, like Tanh and Swish, can enhance the convergence rate, in contrast to the non-smooth characteristics of ReLU~\citep{panigrahi2019effect}. However, the fixed nature of ReLU and many of its variants restricts their ability to adapt the input, resulting in limited power to capture dynamics in learning. 

\paragraph{Advancements in Learnable Activation Functions.}
Recent research has increasingly focused on adaptive and learnable activation functions, which are optimized alongside the learning process of the network. The AdAct framework~\citep{maiti2024adact} introduces learnability by combining multiple activation functions into a single module with learnable weighting coefficients. However, these coefficients are fixed after training, limiting the framework's adaptability to varying inputs.  A concurrent work by~\citet{liu2024kan} introduces Kolmogorov-Arnold Networks (KAN), a novel architecture that diverges from traditional Multi-Layer Perceptron (MLP) configurations, which applies activation functions to network edges instead of nodes. Unlike our current work, which focuses only on the design of activation functions for GNNs, their research extends beyond this scope and considers a fundamental architecture design. Finally, the recently proposed TAFS~\citep{xu2023tafs} learns a task-adaptive (but not graph-adaptive) activation function for GNNs through a bi-level optimization.

\section{Implementation Details of \ourmethod}
\label{sec:appendix:implementation}
\paragraph{Multiple Graphs in one Batch.} Consider a set of graphs $S = \{G_1, G_2, \cdots , G_B\}$ with a batch size of $B$. Let $N_S = N_1 + N_2 + \cdots + N_B$ represent the cumulative number of nodes across the graph dataset. The term $N_{\text{max}} \overset{\Delta}{=} \max (N_1, N_2, \cdots, N_B)$ denotes the largest node count present in any single graph within $S$.

To create a unified feature matrix for $S$ that encompasses all graphs in the batch, we standardize the dimension by padding each feature matrix $\mathbf{X}_i \in \mathbb{R}^{N_i \times C},\ i \in [B]$ for graph $G_i \in S$ from $N_i$ to $N_{\text{max}}$ with zeros. The combined feature matrix $\mathbf{X}_S$ is constructed by concatenating the transposed feature matrices $\mathbf{X}_i^\top$ $\forall i \in [B]$, resulting in a matrix that lies in the domain $\mathbb{R}^{(B \cdot C) \times N_{\text{max}}}$. This matrix is permutation invariant; while relabeling nodes changes the row indices, it does not affect the overall transformation process. Therefore, \ourmethod\ can handle multiple graphs in a batch. In practice, to avoid the overhead of padding, we use the batching support from Pytorch-Geometric \citep{fey2019fast}.

\paragraph{Implementation Details of $\textsc{GNN}_{\textsc{act}}$.} In \cref{sec:methods:models}, we examined two distinct approaches to learn the diffeomorphism parameters $\vtheta^{(l)}$, either directly or through $\textsc{GNN}_{\textsc{act}}$. As shown in \cref{sec:appendix:thetaRelations}, $\vtheta^{(l)}$ determines the velocity field $v^{\vtheta^{(l)}}$. Predicting a graph-dependent $\vtheta^{(l)}$ adds graph-adaptivity to the activation function $T^{(l)}$. In \ourmethod\ we achieve this by employing another GNN $\textsc{GNN}_{\textsc{act}}$,  described below.

The backbone of $\textsc{GNN}_{\textsc{act}}$ utilizes the same structure as the primary network layers $\textsc{GNN}_{\textsc{layer}}^{(l)}$, that is, GCN \citep{kipf2017semisupervised} or GIN \citep{xu2019how}. It is important to note, that while $\textsc{GNN}_{\textsc{act}}$ has a similar structure to the primary network GNN \revision{with ReLU activation function}, it has its own set of learnable weights, and it is shared among the layers, unlike the primary GNN layers $\textsc{GNN}_{\textsc{layer}}^{(l)}$. The hidden dimensions and the number of layers of $\textsc{GNN}_{\textsc{act}}$ are hyperparameters. The weight parameters of $\textsc{GNN}_{\textsc{act}}$ are trained concurrently with the main network weights. 
As described in \Cref{eqn:digaf}, after the computation of  $\textsc{GNN}_{\textsc{act}}$,  a pooling layer denoted by \textsc{Pool} is placed to aggregate node features. This aggregation squashes the node dimension such that the output is not dependent on the specific order of nodes, and it yields the vector of parameters $\vtheta^{(l)}$.  

\paragraph{Rescaling $\mathbf{\bar{H}}^{(l)}$.} Following the implementation of \citet{freifeld2015highly}, the default 1D domain for CPAB is set as $[0, 1]$. To enhance the flexibility of $T^{(l)}$ and ensure its adaptability across various input datasets, \ourmethod\ extends the domain to $\Omega = [a, b] \subset \mathbb{R}$ with $a<b$ as shown in \cref{sec:methods:CPAB_definition}. To match the two domains, we rescale the intermediate feature matrix $\mathbf{\bar{H}}^{(l)}$ from $\Omega$ to the unit interval $[0, 1]$ before passing it to $T^{(l)}$. Let $r = \frac{b-a}{2}$, then rescaling is performed using the function $f(x) = (x + r)/(2r)$. Data points outside this range will retain their original value, effectively acting as an identity function outside the domain $\Omega$.

\paragraph{Training Loss Function.} 
As described in \Cref{eqn:cpa_regularization}, we employ a regularization term for the velocity field to maintain the smoothness of the activation function. To control the strength of regularization, we introduce a hyperparameter $\lambda$. We denote $\mathcal{L}_{\textsc{task}}$ as the loss function of the downstream task (i.e. cross-entropy loss in case of classification and mean absolute error in case of regression tasks), and the overall training loss of \ourmethod, denoted as $\mathcal{L}_{\textsc{total}}$ is given as
\begin{equation}
    \label{eqn:total_loss}
    \mathcal{L}_{\textsc{total}} = \mathcal{L}_{\textsc{task}} + \lambda \, \mathcal{R}(\{\vtheta^{(l)}\}_{l=1}^L).
\end{equation}

\section{Overview of CPA Velocity Fields and CPAB Transformations}

\label{sec:appendix:thetaRelations}
 In this Section, we drop the layer notations $l$ for simplicity.
In \cref{sec:methods:CPAB_definition}, we introduce the concept of a diffeomorphism on a closed interval in \cref{sec:def:diffeomorphism}, which can be learned through the integration of a Continuous Piecewise Affine (CPA) velocity field. As detailed in \cref{sec:def:CPA_Velocity}, the velocity field $v^{\vtheta}$ is governed by the parameter $\vtheta$ and the tessellation $\gP$. We now discuss how the velocity fields are computed following the methodologies presented by~\citet{freifeld2015highly, freifeld2017transformations} and highlight the relations between $v^{\vtheta}$, $\vtheta$ and $\gP$. We start by formally defining the tessellation on $\Omega$:

\begin{definition}[Tessellation of a closed interval~\citep{freifeld2015highly}] \label{sec:def:tessellation}
A tessellation $\gP$ of size $\gN_\gP$ subintervals of a closed interval $\Omega = [a, b]$ in $\sR$ is a partitioning $\{[x_i, x_{i+1}]\}_{i=0}^{\gN_\gP-1}$ that satisfies the following properties:
\begin{enumerate}[label=(\arabic*)]
    \item $x_0 = a$ and $x_{\gN_{\gP}} = b$
    \item Each point $x \in \Omega$ lies in at least one subinterval $[x_i, x_{i+1}]$
    \item The intersection of any two subintervals $[x_i, x_{i+1}]$ and $[x_{i+1}, x_{i+2}]$ is exactly $\{x_{i+1}\}$
    \item $\bigcup\limits_{i=0}^{\gN_\gP-1} [x_i, x_{i+1}] = \Omega $
\end{enumerate}

\end{definition}

The vector of parameters $\vtheta$ is linked to the subintervals in $\gP$, whose dimension is determined by the number of intervals $\mathcal{N}_{\mathcal{P}}$. Similar to \citet{freifeld2015highly}, we impose boundary constraints that mandate the velocity at the boundary of the tessellation to be zero, i.e., $v^{\vtheta}(0) = v^{\vtheta}(1) = 0$. This boundary condition allows us to compose the diffeomorphism in the domain $\Omega$ with an identity function for any values outside the domain. 
Under this constraint, the degrees of freedom (number of parameters) for $\theta$ is $\mathcal{N}_{\mathcal{P}}-1$.

The velocity field is then defined as follows:
\begin{definition}[Relation between $\vtheta$ and ${v}^{\vtheta}$, taken from \citet{freifeld2017transformations}] \label{sec:def:theta_velocity}
Given a tessellation $\gP$ with $\gN_\gP$ intervals on a closed domain $\Omega=[a, b]$, as defined in \cref{sec:def:tessellation}. Given a parameter $\vtheta \in \mathbb{R}^{\gN_\gP - 1}$ and an arbitrary point $x$ within the domain, a continuous piecewise-affine velocity field ${v}^{\vtheta}$ can present as follows:

\begin{equation}
    \label{eq:velocity_theta}
    {v}^{\vtheta}(x) = \sum_{j=0}^{\gN_{\gP} - 2} \vtheta_j \mathbf{b}_j \Tilde{x},
\end{equation}
where $\{\mathbf{b}_j\}_{j=0}^{\gN_{\gP} - 2}$ is an orthonormal basis of the space of velocity fields  $\mathcal{V}$, such that ${v}^{\vtheta} \in \mathcal{V}$, and $\tilde{\vx} = \begin{bmatrix}
    x \\ 1
\end{bmatrix}$.
\end{definition}

\revision{The orthonormal basis $\{\mathbf{b}_j\}_{j=0}^{\gN_{\gP} - 2}$ for the velocity field can be obtained through Singular Value Decomposition of $\mathbf{L}$. Note that $\mathbf{L}$ is a matrix constraining the coefficients of each continuous piecewise-affine velocity function by ensuring that the velocity value at the shared endpoints is the same \citep{martinez2022closed}. Let $vec(\mathbf{A})$ be a column vector containing the coefficients for each interval, for instance, $[a_0, b_0, a_1, b_1]^T$ for consecutive intervals interval $0$ and interval $1$. The shared endpoint is $x_1$. To achieve the constrain, we have the equation $a_0 * x_1 + b_0 + a_1 * (-x_1) + b_1 * (-1) = 0$. In this example, the constrain matrix $\mathbf{L}$ is $\mathbf{L} = [x_1, 1, -x_1, -1]$. By generalizing the previous example, the constraint can be expressed as $\mathbf{L} \ast vec(\mathbf{A}) = \overrightarrow{0}$. As the endpoints are decided by the tessellation setup, we can build the constrain matrix $\mathbf{L}$ without knowing $vec(\mathbf{A})$. And thus, orthonormal basis $\{\mathbf{b}_j\}_{j=0}^{\gN_{\gP} - 2}$ can be computed by giving tessellation setup.}

\revision{
\begin{proposition}[\ourmethod\ has a closed form solution] \label{sec:def:closed_form}
Equation \ref{eq:cpab_integration} can be expressed as an equivalent ODE. By allowing $x$ to vary and fixing $t$, the solution to this ODE can be written as a composition of a finite number of solutions $\psi$:
$$
    \phi^\theta (x, t) = (\psi^{t_m}_{\theta, c_m} \circ \psi^{t_{m-1}}_{\theta, c_{m-1}} \circ \cdots \circ \psi^{t_2}_{\theta, c_2} \circ \psi^{t_1}_{\theta, c_1})(x)
$$
Here $m$ represents the number of cells visited. Given $x, \theta$, time $t$, and the smallest cell index containing $x$, $c$, we can compute each $\psi^{t_i}_{\theta, c_i}(x), i \in \{1, …, m\}$ from $\psi^{t_1}_{\theta, c_1}(x)$ to $\psi^{t_m}_{\theta, c_m}(x)$. In other words, \ourmethod\ has a closed-form solution.
\end{proposition}
\begin{proof}    
The proof follows the steps in \citet{martinez2022closed}.
\Cref{eq:cpab_integration} can be expressed as the equivalent ODE: $\frac{d\phi^\theta (x, t)}{d t} = v^\theta (\phi^\theta (x, t))$. By allowing $x$ to vary and fixing $t$, the solution to this ODE can be written as a composition of a finite number of solutions $\psi$:
$$
    \phi^\theta (x, t) = (\psi^{t_m}_{\theta, c_m} \circ \psi^{t_{m-1}}_{\theta, c_{m-1}} \circ \cdots \circ \psi^{t_2}_{\theta, c_2} \circ \psi^{t_1}_{\theta, c_1})(x)
$$
Given $x, \theta$, time $t$, and a function $\gamma$ that returns the smallest cell index containing $x$, namely $c = \gamma(x)$, then we can compute each $\psi^{t_i}_{\theta, c_i}(x), i \in \{1, …, m\}$ and use them in the closed form solution for the ODE. The cell boundary $x_c$ is determined based on the velocity value $v(x)$ at point $x$. If $v(x) \geq 0$, $x_c$ is the largest point in the interval; otherwise, it is the smallest point.
In this setup, a cell is a 1D interval with two endpoints. At the hitting time $t_{hit}$, $\psi_c^{\theta}(x, t_{hit})$ is

$$
    \psi_c^{\theta}(x, t_{hit}) = x_c,
$$

where $t^\theta_{hit} = \frac{1}{a^\theta_c} \log \left( \frac{a^\theta_c x_c + b^\theta_c}{a^\theta_c x + b^\theta_c}\right) $. The CPAB velocity field is continuous piecewise-affine, and for each interval with index $c$, it has coefficients $a^\theta_c$ (slope) and $b^\theta_c$ (bias). These can be computed given $\theta$. If $t^\theta_{hit} > t$, then $\phi^\theta (x, t) = \psi_c (x, t)$. Otherwise, we repeat the process with updated values $t = t-t^\theta_{hit}$, $x = x_c$, and $c$ adjusted based on the sign of $v(x)$.

This iterative process continues until convergence, with an upper bound for $m$ being $max(c_1, N_P - c_1 + 1)$, where $c_1$ refers to the first visited cell index, and $N_P$ is the number of closed intervals in the space $\Omega$. With the above steps, we can precisely compute each $\psi^{t_i}_{\theta, c_i}(x), i \in \{1, …, m\}$ from $\psi^{t_1}_{\theta, c_1}(x)$ to $\psi^{t_m}_{\theta, c_m}(x)$ following the equation $\phi^\theta (x, t) = (\psi^{t_m}_{\theta, c_m} \circ \cdots \circ \psi^{t_1}_{\theta, c_1})(x)$. This allows us to determine the exact solution for $\phi^\theta (x, t)$.
\end{proof}

}

\section{Properties and Proofs}
\label{sec:appendix:proofs}
\revision{We present a summary of the properties offered by our  \ourmethod\, that are absent in  general-purpose activation functions or existing graph activations in \Cref{tab:summary_properties}.}
\begin{table*}[t]
    \centering
    \scriptsize
    \caption{A summary of the properties of activation functions.  --  means not studied in the corresponding paper. $^*$ Median-of-medians algorithm can achieve linear time complexity on average.}
    \label{tab:summary_properties}
    \begin{tabular}{l|cccccc}
    \toprule
         Act $\downarrow$ / Prop $\rightarrow$ & Boundedness & Differentiability & Linear Complexity & Permutation Equiv. & Lipschitz Cont. & Graph Adap. \\
         \midrule 
         ReLU~\citep{xu2019how} & $\times$ & $\times$ & $\checkmark$ &$\checkmark$&$\checkmark$ & $\times$\\
         Tanh~\citep{hochreiter1997long} &$\checkmark$ & $\checkmark$& $\checkmark$&$\checkmark$ & $\checkmark$& $\times$ \\
         PReLU~\citep{he2015delving} &$\times$ & $\times$ & $\checkmark$&$\checkmark$ &$\checkmark$ & $\times$\\
         Swish~\citep{ramachandran2017searching} &$\times$ &$\checkmark$ &$\checkmark$ &$\checkmark$ &$\checkmark$ &$\times$ \\
         Max~\citep{iancu2020graph} & $\times$&$\times$ &$\checkmark$ &$\checkmark$ &$\checkmark$ &$\checkmark$ \\
         Median~\citep{iancu2020graph} &$\times$ & $\times$ &$\checkmark^*$ &$\checkmark$ &-- & $\checkmark$\\
         GReLU~\citep{zhang2022graph} & $\times$&$\times$ &$\checkmark$ &$\checkmark$ &-- & $\checkmark$\\
         \ourmethod & $\bm{\checkmark}$& $\bm{ \checkmark}$&$\bm{\checkmark}$ & $\bm{\checkmark}$& $\bm{\checkmark}$&$\bm{\checkmark}$ \\
         \bottomrule
    \end{tabular}
\end{table*}

Similar to \Cref{sec:appendix:thetaRelations}, for simplicity, in this Section, we drop the layer notations $l$. 

In this section, we present the propositions and proofs for the properties outlined in \cref{sec:theoritical_analysis}. We begin by remarking that as shown in \cref{sec:theoritical_analysis},  \ourmethod\ is bounded within the domain $\Omega = [a, b]$, where $a < b$ by construction. We then present \Cref{prop:lipConstV} that outlines the Lipschitz constant of the velocity field $v^\vtheta$, followed by \Cref{prop:lipschitz_digraf}, showing that \ourmethod\ is also Lipschitz continuous, and provide an upper bound on its Lipschitz constant.

\begin{proposition}[The Lipschitz Constant of $v^{\vtheta}$]
    \label{prop:lipConstV}
    Given two arbitrary points $x, y \in \mathbb{R}$, and velocity field parameters $\vtheta \in \mathbb{R}^{\gN_\gP - 1}$ that define the continuous piecewise-affine velocity field $v^{\vtheta}$, there exists a Lipschitz constant $C_{v^{\vtheta}} = \sum_{j=0}^{\gN_{\gP} - 2} |\vtheta_j|$ such that 
    \begin{equation}
        \left| v^{\vtheta}(x) - v^{\vtheta}(y) \right| \leq C_{v^{\vtheta}} \|(\tilde{\vx}-\tilde{\vy}) \|_2,
    \end{equation}
    where $| \cdot|$ and $\| \cdot\|_2$ denote the absolute value of a scalar and the $\ell_2$ norm of a vector, respectively.
\end{proposition}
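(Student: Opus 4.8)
The plan is to exploit the explicit representation of the velocity field from \Cref{sec:def:theta_velocity}, namely $v^{\vtheta}(x) = \sum_{j=0}^{\gN_{\gP}-2} \vtheta_j \mathbf{b}_j \tilde{\vx}$ with $\tilde{\vx} = \begin{bmatrix} x \\ 1 \end{bmatrix}$, and to bound the difference $v^{\vtheta}(x) - v^{\vtheta}(y)$ term by term. First I would write
\begin{equation}
    v^{\vtheta}(x) - v^{\vtheta}(y) = \sum_{j=0}^{\gN_{\gP}-2} \vtheta_j \mathbf{b}_j (\tilde{\vx} - \tilde{\vy}),
\end{equation}
using linearity of the map $\tilde{\vx} \mapsto \mathbf{b}_j \tilde{\vx}$. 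Then by the triangle inequality,
\begin{equation}
    \left| v^{\vtheta}(x) - v^{\vtheta}(y) \right| \leq \sum_{j=0}^{\gN_{\gP}-2} |\vtheta_j| \, \left| \mathbf{b}_j (\tilde{\vx} - \tilde{\vy}) \right|.
\end{equation}

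Next I would control each $\left| \mathbf{b}_j (\tilde{\vx} - \tilde{\vy}) \right|$. Since $\{\mathbf{b}_j\}$ is an orthonormal basis (as stated in \Cref{sec:def:theta_velocity}), each $\mathbf{b}_j$, viewed as the linear functional acting on the 2-vector $\tilde{\vx} - \tilde{\vy}$, has operator norm at most $1$, so by Cauchy--Schwarz $\left| \mathbf{b}_j (\tilde{\vx} - \tilde{\vy}) \right| \leq \|\tilde{\vx} - \tilde{\vy}\|_2$. Substituting this bound gives
\begin{equation}
    \left| v^{\vtheta}(x) - v^{\vtheta}(y) \right| \leq \left( \sum_{j=0}^{\gN_{\gP}-2} |\vtheta_j| \right) \|\tilde{\vx} - \tilde{\vy}\|_2 = C_{v^{\vtheta}} \|\tilde{\vx} - \tilde{\vy}\|_2,
\end{equation}
which is exactly the claim with $C_{v^{\vtheta}} = \sum_{j=0}^{\gN_{\gP}-2} |\vtheta_j|$.

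The main obstacle is making precise the meaning of ``$\mathbf{b}_j \tilde{\vx}$'' and the associated norm bound: the objects $\mathbf{b}_j$ are elements of an orthonormal basis of the velocity-field space $\gV$, but in the formula \Cref{eq:velocity_theta} they act on the augmented vector $\tilde{\vx}$, so one must be careful about which inner-product structure makes them orthonormal and whether that gives the operator-norm bound $\|\mathbf{b}_j\|_{\mathrm{op}} \le 1$ needed above. I expect this to be resolvable by invoking the CPAB construction details (each $\mathbf{b}_j$ is, on a given cell, a row vector of the form $[\,\text{slope}, \text{intercept}\,]$ determined by the basis function), and by noting that orthonormality in $\gV$ together with the finite tessellation makes the relevant constants uniformly bounded; in the worst case one absorbs a fixed constant depending only on $\gP$ into $C_{v^{\vtheta}}$, which does not affect the Lipschitz property. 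Everything else is a routine application of the triangle inequality and Cauchy--Schwarz.
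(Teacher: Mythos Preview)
Your proposal is correct and follows essentially the same route as the paper: write the difference via linearity, apply the triangle inequality, then use Cauchy--Schwarz together with the orthonormality of the $\mathbf{b}_j$ to replace $|\mathbf{b}_j(\tilde{\vx}-\tilde{\vy})|$ by $\|\tilde{\vx}-\tilde{\vy}\|_2$. The paper handles your ``main obstacle'' in exactly the way you anticipate---it simply asserts $\|\mathbf{b}_j\|_2 = 1$ from orthonormality without further discussion of the underlying inner-product structure---so your level of justification already matches (and arguably exceeds) the paper's.
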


\begin{proof} 

First, we note that it was shown in~\citet{freifeld2015highly, freifeld2017transformations} that $v^{\vtheta}$ is Lipschitz continuous, and now we provide a derivation of that Lipschitz constant. 
    Following \cref{sec:def:theta_velocity}, the velocity field $v^{\vtheta}$ is defined as ${v}^{\vtheta}(x) = \sum_{j=0}^{\gN_{\gP} - 2} \vtheta_j \mathbf{b}_j \Tilde{x}$, where $\{\mathbf{b}_j\}_{j=0}^{\gN_\gP - 2}$ is an orthonormal basis of the velocity space. By the definition of $v^{\vtheta}(x)$ and $v^{\vtheta}(y)$, we have the following:
    \begin{align}
        \left| v^{\vtheta}(x) - v^{\vtheta}(y) \right| &=     \left|\sum_{j=0}^{\gN_{\gP} - 2} \vtheta_j \mathbf{b}_j \tilde{\vx} -    \sum_{j=0}^{\gN_{\gP} - 2} \vtheta_j \mathbf{b}_j \Tilde{y} \right| \\ \label{eq:eq16}
        &=     \left|\sum_{j=0}^{\gN_{\gP} - 0} \vtheta_j \mathbf{b}_j (\tilde{\vx}-\tilde{\vy}) \right| \\ \label{eq:eq17}
        &\leq     \sum_{j=0}^{\gN_{\gP} - 2} |\vtheta_j| \| \mathbf{b}_j \|_2 \| (\tilde{\vx}-\tilde{\vy}) \|_2  \\ \label{eq:eq18}
        & =  \sum_{j=0}^{\gN_{\gP} - 2} |\vtheta_j|  \| (\tilde{\vx}-\tilde{\vy}) \|_2 \\
        &= \|(\tilde{\vx}-\tilde{\vy}) \|_2  \sum_{j=0}^{\gN_{\gP} - 2} |\vtheta_j|  \\
        &= C_{v^{\vtheta}} \|(\tilde{\vx}-\tilde{\vy}) \|_2,
    \end{align}
    where the transition between \Cref{eq:eq16} and \cref{eq:eq17} follows from the triangle inequality, and the transition between \cref{eq:eq17} and \cref{eq:eq18} follows from $\mathbf{b}_j$ being an orthonormal vector.
    
    From the derivation above, and the fact that we know from \citet{freifeld2015highly, freifeld2017transformations} that the velocity field is Lipschitz continuous, we conclude that the Lipschitz constant $C_{v^{\vtheta}}$ of $v^{\vtheta}$ reads $C_{v^{\vtheta}}=\sum_{j=0}^{\gN_{\gP} - 2} |\vtheta_j|$.
\end{proof}

Given the Lipschitz constant $C_{v^{\vtheta}}$ for $v^{\vtheta}$, we proceed to demonstrate that the transformation $T(\cdot; \vtheta)$ in \ourmethod\ is Lipschitz continuous, as well as bounding its Lipschitz constant. 
\begin{proposition}[The Lipschitz Constant of \ourmethod]
    \label{prop:lipschitz_digraf}
    The diffeomorphic function $T(\cdot; \vtheta)$ in \ourmethod\ is defined in \cref{eq:TandPhi} for a given set of weights $\vtheta$, which in turn define the velocity field $v^{\vtheta}$. Let $x, y \in \mathbb{R}$ be two arbitrary points, then the following inequality is satisfied:
    \begin{equation}
        | T(x; \vtheta) - T(y; \vtheta) | \leq |x-y|\exp(C_{v^{\vtheta}})
    \end{equation}
    where $C_{v^{\vtheta}}$ is the Lipschitz constant of $v^{\vtheta}$.
\end{proposition}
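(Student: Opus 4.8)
The plan is to identify $T(\cdot;\vtheta)$ as the time-one flow map of the ODE driven by the CPA velocity field $v^{\vtheta}$, and then run the classical Gr\"onwall argument showing that the flow of a Lipschitz vector field expands distances by at most a factor $\exp(Lt)$ after time $t$.

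Concretely, I would fix $x,y$ in the domain $\Omega$ on which $T$ operates and write, using the integral characterization of \Cref{sec:def:CPAB_Diffeomorphism},
\[
\phi^\vtheta(x,t) = x + \int_0^t v^\vtheta\big(\phi^\vtheta(x,\tau)\big)\,d\tau,
\qquad
\phi^\vtheta(y,t) = y + \int_0^t v^\vtheta\big(\phi^\vtheta(y,\tau)\big)\,d\tau.
\]
Subtracting these, taking absolute values, and applying the triangle inequality yields, with the shorthand $g(t) \triangleq \big|\phi^\vtheta(x,t) - \phi^\vtheta(y,t)\big|$,
\[
g(t) \le |x-y| + \int_0^t \big| v^\vtheta\big(\phi^\vtheta(x,\tau)\big) - v^\vtheta\big(\phi^\vtheta(y,\tau)\big)\big|\,d\tau .
\]
I would then invoke \Cref{prop:lipConstV} pointwise along the two trajectories: the integrand is at most $C_{v^\vtheta}$ times the $\ell_2$ distance between the augmented points $\widetilde{\phi^\vtheta(x,\tau)}$ and $\widetilde{\phi^\vtheta(y,\tau)}$, and since these augmented vectors differ only in their first coordinate, that distance equals $g(\tau)$. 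This gives the integral inequality $g(t) \le |x-y| + C_{v^\vtheta}\int_0^t g(\tau)\,d\tau$, to which I would apply Gr\"onwall's inequality (integral form) to conclude $g(t) \le |x-y|\exp(C_{v^\vtheta}\,t)$. Evaluating at $t=1$ and using $T(x;\vtheta)=\phi^\vtheta(x,1)$ from \Cref{eq:cpab_diffeo} then delivers the claimed bound.

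The main obstacle is not the Gr\"onwall estimate itself but checking that it is legitimately applicable: I need the trajectories $\tau\mapsto\phi^\vtheta(x,\tau)$ and $\tau\mapsto\phi^\vtheta(y,\tau)$ to exist, be continuous (so that $g$ is integrable), and remain inside $\Omega$ for all $\tau\in[0,1]$, so that \Cref{prop:lipConstV} controls the velocity field along the entire orbit. This is precisely where I would rely on the existence, uniqueness, and domain-invariance properties of CPA flows established in \citet{freifeld2015highly,freifeld2017transformations}, together with the boundary condition $v^\vtheta(a)=v^\vtheta(b)=0$ which prevents orbits from escaping $\Omega$. A minor point worth flagging is that $v^\vtheta$ is only piecewise-affine and hence not differentiable at the tessellation knots; this causes no difficulty, since the argument uses only the global Lipschitz bound of \Cref{prop:lipConstV} and the integral form of the flow equation, never smoothness of $v^\vtheta$.
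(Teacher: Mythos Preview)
Your proposal is correct and follows essentially the same route as the paper: write $T$ as the time-one flow, subtract the two integral equations, use the Lipschitz bound from \Cref{prop:lipConstV} on the integrand, and close with Gr\"onwall's inequality. If anything, your version is more carefully stated than the paper's, which writes the integral inequality only at $t=1$ and then invokes Gr\"onwall, whereas you correctly set it up for all $t\in[0,1]$ before applying the lemma.
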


\begin{proof}
    We begin by substituting $T(\cdot; \vtheta)$ with \cref{eq:cpab_diffeo} and \cref{eq:TandPhi}. Utilizing \Cref{prop:lipConstV}, we then establish an upper bound for $| T(x; \vtheta) - T(y; \vtheta) |$ as follows:
    \begin{align}
        \label{eq:lipDigraf}
        | T(x; \vtheta) - T(y; \vtheta) | &= | x + \int\limits_{0}^{1} v^\vtheta\big(\phi^\vtheta(x, \tau)\big)\, d\tau  - y - \int\limits_{0}^{1} v^\vtheta\big(\phi^\vtheta(y, \tau)\big)\, d\tau      | &  \\
        &\leq | x - y | + C_{v^{\vtheta}} \int\limits_{0}^1 \left|(\phi^\vtheta(x, \tau) - \phi^\vtheta(y, \tau)) \right|  \\
        &\leq |x - y|\exp(C_{v^{\vtheta}}),
    \end{align}
    where $ C_{v^{\vtheta}}$ is the Lipschitz constant of $v^{\vtheta}$ (\Cref{prop:lipConstV}) and the last transition follows from Grönwall's inequality \citep{gronwall1919note}. Consequently, the Lipschitz constant of \ourmethod\ is bounded from above by $\exp(C_{v^{\vtheta}})$.
\end{proof}

Now that we established that $T(\cdot; \vtheta)$  is Lipschitz continuous and presented an upper bound, we investigate what is the maximal difference in the output of $T(\cdot; \vtheta)$ with respect to two arbitrary inputs $x, y \in \Omega$, and whether it can be bounded. To address this, we present the following remark:
\begin{remark}
    \label{sec:remark:T_within_Omega}
    Given a bounded domain $\Omega=[a, b],\ a < b$, by construction, the diffeomorphism $T(\cdot; \vtheta)$ with parameter $\vtheta$ in \ourmethod, as in \Cref{eq:ourmethod}, is a $\Omega \to \Omega$ transformation~\citep{freifeld2015highly, freifeld2017transformations}. Therefore, by the max value theorem, the maximal output discrepancy for arbitrary $x, y \in \Omega$ is $|b-a|$, i.e., $         |T(x; \vtheta) - T(y; \vtheta)| \leq |b-a|$.
\end{remark}

Combining the  \Cref{prop:lipConstV}, \Cref{prop:lipschitz_digraf}  and \Cref{sec:remark:T_within_Omega}, we  formalize and prove the following proposition:
\Boundness*

\begin{proof}

     In \cref{prop:lipschitz_digraf} we presented an upper bound on the Lipschitz constant of $T(\cdot; \vtheta)$, and in \ref{sec:remark:T_within_Omega} we also presented an upper bound on the maximal difference between the application of $T(\cdot; \vtheta)$ on two inputs $x, y$. 
     Combining the two bounds, we get the following inequality:
     \begin{equation}
         |T(x; \vtheta) - T(y; \vtheta)| \leq \min(|b-a|, |x-y|\exp(C_{v^{\vtheta}})).
     \end{equation}
\end{proof}

The result in \cref{prop:digraf_bounded} gives us a tighter upper bound on the boundedness of the transformation $T(\cdot; \vtheta)$ in our \ourmethod\, that is related both to the hyperparameters $a, b$, as well as the learned velocity field parameters $\vtheta$.

Next, we discuss another property outlined in \cref{sec:theoritical_analysis},  demonstrating that \ourmethod\ is permutation equivariant -- a desirable property when designing a GNN component \citep{bronstein2021geometric}.

\begin{proposition}[ \ourmethod\ is permutation equivariant.]
        \label{theo:permutataion}
    Consider a graph encoded by the adjacency matrix $\mathbf{A} \in \mathbb{R}^{N \times N}$, where $N$ is the number of nodes. Let $\bar{\mathbf{H}}^{(l)} \in \mathbb{R}^{N \times C}$ be the intermediate node features at layer $l$, before the element-wise application of our \ourmethod.
    Let $\mathbf{P}$ be an $N \times N$ permutation matrix. Then,
    \begin{equation}
        \ourmethod(\mathbf{P} \bar{\mathbf{H}}^{(l)}, \vtheta^{(l)}_P) = \mathbf{P}\: \ourmethod(\bar{\mathbf{H}}^{(l)},  \vtheta^{(l)})
    \end{equation}
    where $\vtheta^{(l)}_P$ and $\vtheta^{(l)}$ are obtained by feeding $\mathbf{P} \bar{\mathbf{H}}^{(l)}$ and $ \bar{\mathbf{H}}^{(l)}$, respectively, to \Cref{eqn:digaf}.
\end{proposition}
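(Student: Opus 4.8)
\textbf{Proof plan for \Cref{theo:permutataion} (permutation equivariance of \ourmethod).}

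The plan is to decompose \ourmethod\ into two pieces and track how a node permutation $\mathbf{P}$ propagates through each. The first piece is the computation of the parameter vector $\vtheta^{(l)}$ via \Cref{eqn:digaf}, namely $\vtheta^{(l)} = \textsc{Pool}(\textsc{GNN}_{\textsc{act}}(\bar{\mathbf{H}}^{(l)}, \mathbf{A}))$; the second is the element-wise application of the diffeomorphism $f^{\vtheta^{(l)}}$ to the feature matrix as in \Cref{eq:ourmethod,eq:application_digraf}. First I would establish the key claim $\vtheta^{(l)}_P = \vtheta^{(l)}$, i.e.\ that the predicted CPA parameters are \emph{invariant} to relabeling the nodes. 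This follows from two standard facts: (i) $\textsc{GNN}_{\textsc{act}}$, being a message-passing GNN, is permutation equivariant, so $\textsc{GNN}_{\textsc{act}}(\mathbf{P}\bar{\mathbf{H}}^{(l)}, \mathbf{P}\mathbf{A}\mathbf{P}^\top) = \mathbf{P}\,\textsc{GNN}_{\textsc{act}}(\bar{\mathbf{H}}^{(l)}, \mathbf{A})$ — here I note that permuting the node features must be accompanied by the corresponding conjugation of the adjacency matrix for the statement to typecheck, a point I would make explicit; and (ii) \textsc{Pool} (max or mean over the node axis) is permutation invariant, so $\textsc{Pool}(\mathbf{P}\,\mathbf{Z}) = \textsc{Pool}(\mathbf{Z})$ for any node-feature matrix $\mathbf{Z}$. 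Composing these two gives $\vtheta^{(l)}_P = \textsc{Pool}(\mathbf{P}\,\textsc{GNN}_{\textsc{act}}(\bar{\mathbf{H}}^{(l)},\mathbf{A})) = \textsc{Pool}(\textsc{GNN}_{\textsc{act}}(\bar{\mathbf{H}}^{(l)},\mathbf{A})) = \vtheta^{(l)}$.

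Next I would handle the element-wise stage. Since $\vtheta^{(l)}_P = \vtheta^{(l)}$, both sides of the desired identity use the \emph{same} scalar function $g \triangleq \ourmethod(\,\cdot\,;\vtheta^{(l)}): \mathbb{R}\to\mathbb{R}$ (the branch-wise map of \Cref{eq:ourmethod}, equal to $f^{\vtheta^{(l)}}$ on $\Omega$ and the identity outside). Applying a fixed scalar function entrywise to a matrix commutes with row permutation: for any permutation matrix $\mathbf{P}$ and any matrix $\mathbf{M}$, $g(\mathbf{P}\mathbf{M})_{u,c} = g\big((\mathbf{P}\mathbf{M})_{u,c}\big) = g\big(M_{\pi^{-1}(u),c}\big) = \big(\mathbf{P}\,g(\mathbf{M})\big)_{u,c}$, where $\pi$ is the permutation associated with $\mathbf{P}$. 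Instantiating $\mathbf{M} = \bar{\mathbf{H}}^{(l)}$ and $g = \ourmethod(\cdot;\vtheta^{(l)})$ yields $\ourmethod(\mathbf{P}\bar{\mathbf{H}}^{(l)}, \vtheta^{(l)}) = \mathbf{P}\,\ourmethod(\bar{\mathbf{H}}^{(l)}, \vtheta^{(l)})$, and substituting $\vtheta^{(l)} = \vtheta^{(l)}_P$ on the left-hand side completes the argument. I would also remark that the batching construction in \Cref{sec:appendix:implementation} is consistent with this, since the per-graph parameter prediction and entrywise transformation respect the block structure.

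The main obstacle is not any single computation — each step is routine — but rather stating the equivariance hypothesis on $\textsc{GNN}_{\textsc{act}}$ cleanly and making sure the adjacency matrix is treated consistently on both sides. The proposition as written keeps $\mathbf{A}$ fixed while permuting $\bar{\mathbf{H}}^{(l)}$, so I would either (a) interpret $\mathbf{P}\bar{\mathbf{H}}^{(l)}$ together with the implicitly relabeled graph $\mathbf{P}\mathbf{A}\mathbf{P}^\top$, which is the natural reading, or (b) observe that even with $\mathbf{A}$ held fixed the argument still goes through for the claim at hand because \textsc{Pool} discards the node ordering entirely — but this second reading is weaker and I would flag it. The cleanest exposition invokes the well-known permutation equivariance of message-passing GNNs (e.g.\ \citet{bronstein2021geometric}) as a black box for $\textsc{GNN}_{\textsc{act}}$, the permutation invariance of sum/mean/max pooling, and the trivial commutation of entrywise maps with permutations, and then simply chains the three.
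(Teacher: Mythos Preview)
Your proposal is correct and follows essentially the same two-step decomposition as the paper's proof: first establish $\vtheta^{(l)}_P = \vtheta^{(l)}$ by composing the permutation equivariance of $\textsc{GNN}_{\textsc{act}}$ with the permutation invariance of \textsc{Pool}, then observe that a fixed element-wise scalar map commutes with row permutation. You are in fact more explicit than the paper about the adjacency-matrix conjugation and the index-level commutation argument; the only extra ingredient the paper mentions is the element-wise rescaling of $\bar{\mathbf{H}}^{(l)}$ (\Cref{sec:appendix:implementation}), which is already absorbed by your ``entrywise maps commute with permutations'' step.
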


\begin{proof}

We break down the proof into two parts. First, we show that $\textsc{GNN}_{\textsc{act}}$ outputs the same $\vtheta$ under permutations, that is we show
\begin{equation*}
    \vtheta^{(l)}_P = \vtheta^{(l)}.
\end{equation*}
Second, we prove that the activation function $T^{(l)}$ is permutation equivariant, ensuring the overall method maintains this property.

To begin with, recall that \Cref{eqn:digaf} is composed by $\textsc{GNN}_{\textsc{act}}$, which is permutation equivariant, and by a pooling layer, which is permutation invariant.  Therefore their composition is permutation invariant, that is $\vtheta^{(l)}_P = \vtheta^{(l)}$. 

Prior to the activation function layer $T^{(l)}$, $\bar{\mathbf{H}}^{(l)}$ undergoes rescaling as described in \cref{sec:appendix:implementation}, which is permutation equivariant as it operates element-wise. Finally, since activation function $T^{(l)}$ acts element-wise, and given that $\vtheta$ remains unchanged, the related CPA velocity fields are identical, resulting in the same transformed output for each entry, despite the entries being permuted in $\mathbf{P} \bar{\mathbf{H}}^{(l)}$. Therefore,  \ourmethod\ is permutation equivariant.
\end{proof}

\subsection{Diffeomorphic Activation Functions}
In this section, we provide several examples of popular and well-known diffeomorphic functions, contributing to our motivation to utilize diffeomorphisms as a blueprint for learning graph activation functions. 
We remark that, differently from standard activation functions, our \ourmethod\ does not need to follow a predefined, fixed template, but can instead learn a diffeomorphism best suited for the task and input, as $T^{(l)}$ within CPAB can represent a wide range of diffeomorphisms~\citep{freifeld2015highly, freifeld2017transformations}. 

We recall that, as outlined in Section \ref{sec:def:diffeomorphism}, a function is classified as a diffeomorphism if it is \begin{enumerate*}[label=(\arabic*)]
    \item bijective,
    \item differentiable, and
    \item has a differentiable inverse.
\end{enumerate*} 

\textbf{Sigmoid.} We denote the Sigmoid activation function as $\sigma: \mathbb{R} \to (0, 1)$, defined by $$\sigma(x) = \frac{1}{1 + e^{-x}}.$$ To prove that $\sigma$ is a diffeomorphism, we first establish its bijectivity. Injectivity follows from observing that for any distinct points $x_1$ and $x_2$ in $\mathbb{R}$, $\sigma(x_1) = \frac{1}{1 + e^{-x_1}}$ can only equal $\sigma(x_2) = \frac{1}{1 + e^{-x_2}}$ if and only if $x_1 = x_2$. For surjectivity, we represent $x$ as a function of $y$, such that $y = \frac{1}{1 + e^{-x}} \implies x = -\ln \left( \frac{1 - y}{y} \right)$, ensuring that for every $y \in (0, 1)$ there is an element $x \in \mathbb{R}$ such that $\sigma(x)=y$.

To demonstrate differentiability, we examine the derivative of $\sigma$. The derivative $$\frac{d}{dx} \sigma(x) = \sigma(x)(1 - \sigma(x)),$$ which is continuous. Additionally, the inverse function $$\sigma^{-1}(y) = -\ln \left( \frac{1 - y}{y} \right)$$ is also bijective and differentiable. Thus, with all these requirements satisfied, $\sigma$ is indeed a diffeomorphism.

\textbf{Tanh.} The hyperbolic tangent function $$\tanh(x) = \frac{e^x - e^{-x}}{e^x + e^{-x}}$$ is a diffeomorphism from $\mathbb{R}$ to $(-1, 1)$. To establish this, we demonstrate that $\tanh$ is bijective and differentiable, with a differentiable inverse function. 

Firstly, $\tanh$ is injective because if $\tanh(x_1) = \tanh(x_2)$, then $x_1 = x_2$. It is also surjective because for any $y \in (-1, 1)$, there exists $x = \frac{1}{2} \ln \left( \frac{1 + y}{1 - y} \right)$ such that $\tanh(x) = y$. 

The derivative $$\frac{d}{dx}\tanh(x) = 1 - \tanh^2(x)$$ is continuous and positive. Additionally, the inverse function $$\tanh^{-1}(y) = \frac{1}{2} \ln \left( \frac{1 + y}{1 - y} \right)$$ is continuously differentiable. Therefore, $\tanh$ qualifies as a diffeomorphism.

\textbf{Softplus.} 
To establish the Softplus function $$\text{softplus}(x) = \ln(1 + e^x)$$ as a diffeomorphism from $\mathbb{R}$ to $(0, \infty)$, we first demonstrate its injectivity and surjectivity.

Assuming $\text{softplus}(x_1) = \text{softplus}(x_2)$, we obtain $e^{x_1} = e^{x_2}$, implying $x_1 = x_2$, hence establishing injectivity. For any $y \in (0, \infty)$, we find an $x \in \mathbb{R}$ such that $y = \ln(1 + e^x)$, ensuring surjectivity.

The derivative of the Softplus function, $$\frac{d}{dx}\text{softplus}(x) = \frac{e^x}{1 + e^x} = \sigma(x),$$ where $\sigma(x)$ is the Sigmoid function, known to be continuous and differentiable. Therefore, $\text{softplus}(x)$ is continuously differentiable.

Considering the inverse of the Softplus function, $$\text{softplus}^{-1}(y) = \ln(e^y - 1),$$ its derivative is $$\frac{d}{dy} \text{softplus}^{-1}(y) = \frac{e^y}{e^y - 1},$$ which is continuous for all $y > 0$, indicating that $\text{softplus}^{-1}(y)$ is continuously differentiable for all $y > 0$.
Therefore, we conclude that the softplus function qualifies as a diffeomorphism.

\textbf{ELU.} The ELU activation function~\citep{clevert2015fast} is defined as below:
\begin{align*}
  \text{ELU}(x) =
    \begin{cases}
      x & \text{if $x > 0$}\\
      \alpha(e^x - 1) & \text{if $x \leq 0$}
    \end{cases} 
\end{align*}
where $\alpha \in \mathbb{R}$ is a constant that scales the negative part of the function.
To demonstrate that ELU is bijective, we analyze its injectivity and surjectivity. For $x>0$, ELU acts as the identity function, which is inherently injective. For $x \leq 0$, $\alpha(e^{x_1} - 1) = \alpha(e^{x_2} - 1)$,  implies $x_1 = x_2$. The inverse function for ELU is given by:
\begin{equation*}
    \text{ELU}^{-1}(y) = \begin{cases}
    y  & \text{if $y > 0$} \\ 
    \ln(\frac{y}{\alpha} + 1) &  \text{if $y \leq 0$}
    \end{cases}
\end{equation*}
This inverse maps every value in the codomain back to a unique value in the domain, proving that ELU is surjective.

Next, we examine the continuity of ELU. At $x = 0$, $\text{ELU}(x=0) = \alpha (e^0-1)=0$. Next, we check the limits for both sides of $0$. For $x > 0$, $\lim_{x \to 0^+} \text{ELU}(x) = \lim_{x \to 0^+} x = 0$, while for $x \leq 0$, we have $\lim_{x \to 0^-} \text{ELU}(x) = \lim_{x \to 0^-} \alpha(e^{x} - 1) = 0$. Since both limits are equal, the ELU function is continuous at $x = 0$. For the derivative of ELU, i.e.,
\begin{align*}
        \frac{d}{dx} \text{ELU}(x) =
        \begin{cases}
          1 & \text{if $x > 0$}\\
          \alpha e^x & \text{if $x \leq 0$}
        \end{cases}
\end{align*}
at $x = 0$, we have $\frac{d}{dx}\text{ELU}(x) = \alpha e^0 = \alpha$. By setting $\alpha=1$, the derivative at $x=0$ matches the derivative for $x > 0$, making the derivative continuous. 

The derivative for the inverse function is
\begin{equation*}
    \frac{d}{dy}\text{ELU}^{-1}(y) = 
    \begin{cases}
        1 & \quad \text{if $y > 0$}\\
        \frac{1}{y + \alpha} & \quad \text{if $y \leq 0$}
    \end{cases}
\end{equation*}
which is also continuously differentiable. Hence, ELU is a diffeomorphism.

\section{Additional Results}\label{sec:app:additional_results}

\subsection{Function Approximation with CPAB}
\label{app:funcapprox}

\begin{figure}[h]
    \centering
\includegraphics[width=0.5\linewidth]{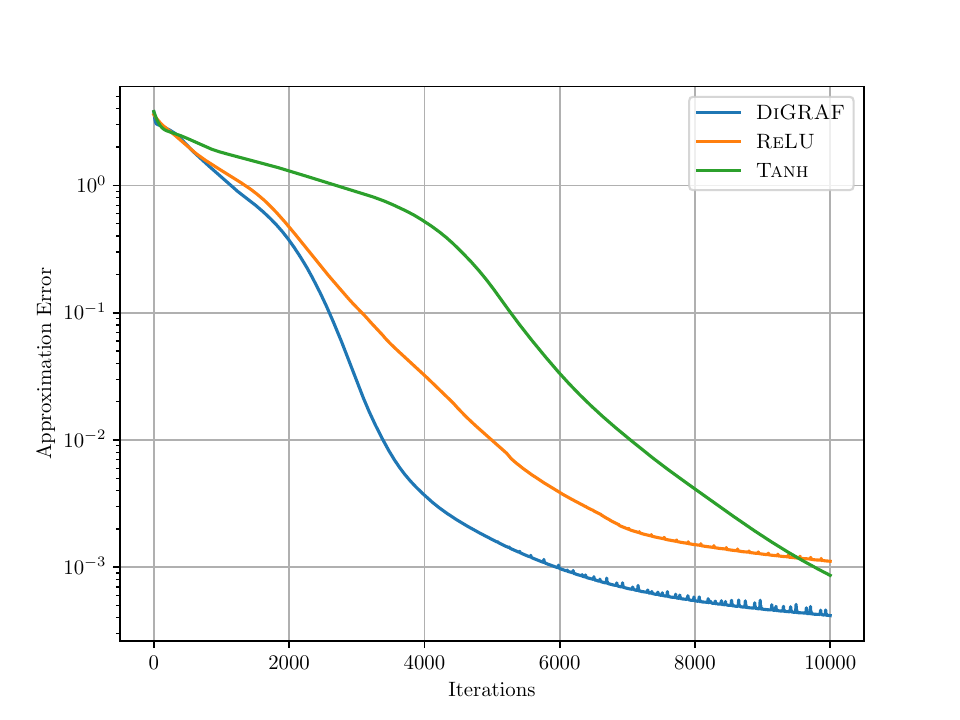}
    \caption{The approximation error of the Peaks function (\Cref{eq:peaks}) with ReLU, Tanh, and \ourmethod.}
    \label{fig:approx_peaks}
\end{figure}

The combination of learned linear layers together with non-linear functions such as ReLU and Tanh are well-known to yield good function approximations \citep{daubechies2022nonlinear,de2021approximation}. Therefore, when designing an activation function blueprint, i.e., the template by which the activation function is learned, it is important to consider its approximation power. In \Cref{sec:introduction} and in particular in \Cref{fig:approximate:flexibility}, we demonstrate the ability of the CPAB framework to approximate known activation functions. We now show additional evidence for the flexibility and power of CPAB as a framework for learning activation functions, leading to our \ourmethod. To this end, we consider the ability of a multilayer perceptron (MLP) with various activation functions (ReLU, Tanh, and \ourmethod) to approximate the well-known \emph{`peaks'} function that mathematically reads:
\begin{equation}
    \label{eq:peaks}
        g(x,y) =  3(1-x)^2  \exp(-(x^2) - (y+1)^2) \
       - 10(\frac{x}{5} - x^3 - y^5) \exp(-x^2-y^2) \
       - \frac{1}{3}\exp(-(x+1)^2 - y^  2).
\end{equation}
The peaks function in \Cref{eq:peaks} is often times used to measure the ability of methods to approximate functions \citep{haber2017stable}, where the input is point pairs $(x,y)\in \mathbb{R}^2$, and the goal is to minimize the mean-squared-error between the predicted function value $g$ and the actual function value $x$. Formally, we consider the following MLP:
\begin{equation}
    \label{eq:mlp}
    \hat{g}(x,y) = (\sigma( \sigma(\begin{bmatrix}
        x,y
    \end{bmatrix}W_1)W_2) W_3 ) ,
\end{equation}
where $\sigma$ is the activation of choice (ReLU, Tanh, or \ourmethod), and $W_1 \in \mathbb{R}^{2 \times 64}$, $W_2 \in \mathbb{R}^{64 \times 64}$, $W_3 \in \mathbb{R}^{64 \times 1}$ are the trainable parameter matrices of the linear layers in the MLP. The goal, as discussed above, is to minimize the loss $ \| \hat{g}(x,y) - g(x,y) \|_2 $, for data triplets $\left(x_i,y_i,g(x_i,y_i)\right)$ sampled from the peaks function. In our experiment, we sample 50,000 points, and report the obtained approximation error in terms of MSE in \Cref{fig:approx_peaks}. As can be seen, our \ourmethod, based on the CPAB framework, allows to obtain a significantly lower approximation error, up to 10 times lower (better) than ReLU, and 3 times better than Tanh. This example further motivates us to harness CPAB as the blueprint of \ourmethod.

\subsection{Results on TUDatasets}
Our results are summarized in~\Cref{tab:tud_datasets}, where we consider \textsc{MUTAG}, \textsc{PTC}, \textsc{PROTEINS}, \textsc{NCI1} and \textsc{NCI109} datasets from the TU repository~\citep{morris2020tudataset}. As can be seen from the table, \ourmethod\ is consistently among the top-3 best-performing activation functions, and it consistently outperforms other graph-adaptive activation functions. These results support our design choices for \ourmethod\ and the flexibility offered by CPAB diffeomorphisms.
\begin{table*}[t]
    \centering
    \scriptsize
    \caption{Graph classification accuracy (\%) $\uparrow$ on TUDatasets. 
    The top three methods are marked by \textbf{\textcolor{red}{First}}, \textbf{\textcolor{violet}{Second}}, \textbf{Third}.}
    \label{tab:tud_datasets}
    \begin{tabular}{l ccccc }
        \toprule
        Method $\downarrow$ / Dataset $\rightarrow$ & 
        \textsc{MUTAG} &
        \textsc{PTC} &
        \textsc{PROTEINS} &
        \textsc{NCI1} &
        \textsc{NCI109} 

        \\
        \midrule  

        \textbf{\textsc{Standard Activations}} \\
        $\,$  \textsc{GIN} + Identity & 91.4$\pm$5.6 & 66.2$\pm$5.5 & 75.9$\pm$3.2 & 82.8$\pm$2.0 & 82.8$\pm$1.3\\
        $\,$ \textsc{GIN} + Sigmoid~\citep{rumelhart1986learning} & 90.9$\pm$5.5 & 65.3$\pm$4.8 & 75.0$\pm$5.0 & 82.6$\pm$1.4 & 81.2$\pm$1.6 \\
        $\,$       \textsc{GIN} + ReLU \citep{xu2019how} & 
        89.4$\pm$5.6 & 
        64.6$\pm$7.0 & 
        76.2$\pm$2.8 &
        82.7$\pm$1.7 &
        82.2$\pm$1.6 

        \\
       $\,$ \textsc{GIN} + LeakyReLU~\citep{Maas2013RectifierNI}        & 90.9$\pm$5.7 & 65.0$\pm$9.0 & 76.2$\pm$4.4 & 
       \second{83.5$\pm$2.2} &
       \third{82.9$\pm$2.0}\\
        $\,$ \textsc{GIN} + Tanh~\citep{hochreiter1997long}    & \first{92.5$\pm$7.9 }   & 65.1$\pm$6.5 & 75.9$\pm$4.3 &
        83.2$\pm$2.6 &
        \second{{83.0$\pm$2.6}}\\
        $\,$ \textsc{GIN} + GeLU~\citep{hendrycks2023gaussian} & 90.9$\pm$7.0        & 65.4$\pm$7.9 & 76.6$\pm$2.8 &
        \second{83.5$\pm$1.4} &
        \third{82.9$\pm$1.6}\\
        $\,$ \textsc{GIN} + ELU~\citep{clevert2016fast} & \first{92.5$\pm$5.6} & 65.4$\pm$7.5 & 75.4$\pm$2.7 & 83.3$\pm$2.0 & 82.6$\pm$1.7\\
        \midrule
        \textbf{\textsc{Learnable Activations}} \\
        $\,$ \textsc{GIN} + PReLU~\citep{he2015delving} &91.7$\pm$6.7 & \third{66.9$\pm$7.0} & \third{76.7$\pm$3.5} & 82.9$\pm$2.6 & 82.3$\pm$1.8     \\
        $\,$ \textsc{GIN} + Maxout~\citep{pmlr-v28-goodfellow13} &  91.5$\pm$7.5  & 66.8$\pm$8.3 & \second{76.8$\pm$4.0} & 83.3$\pm$2.9 & \second{83.0$\pm$3.0}  \\ 
        $\,$ \textsc{GIN} + Swish~\citep{ramachandran2017searching} & 90.4$\pm$4.8 & 65.1$\pm$6.3 & 76.2$\pm$4.2 & \third{83.4$\pm$1.4} & \third{82.9$\pm$3.0}\\
        \midrule 
        
        \textbf{\textsc{Graph Activations}} \\
         $\,$ \textsc{GIN} + Max \citep{iancu2020graph} & 90.9$\pm$7.1 & \second{67.7$\pm$9.2} & 75.9$\pm$3.1 & 83.3$\pm$2.0 & 82.7$\pm$1.9\\
        $\,$ \textsc{GIN} + Median \citep{iancu2020graph} & \third{92.0$\pm$6.6} & \second{67.7$\pm$4.5} & 75.0$\pm$4.3 & \first{83.6$\pm$1.9} & 82.8$\pm$1.8\\
        $\,$ \textsc{GIN} + GReLU \citep{zhang2022graph} & \third{92.0$\pm$7.3} & 64.9$\pm$6.6 & \second{76.8$\pm$3.5} & 
        82.8$\pm$2.5 &
82.4$\pm$2.2
        \\
        \midrule
        
                  \textsc{GIN} + \ourmethodshared\ & 92.0$\pm$5.6 & \first{68.9$\pm$7.5} & 77.2$\pm$3.6 & 
                  83.0$\pm$1.3 & 82.9$\pm$2.2\\

         \textsc{GIN} + \ourmethod\  & 
         \second{92.1$\pm$7.9} & 
         68.6$\pm$7.4 & 
         \first{77.9$\pm$3.4} & \third{83.4$\pm$1.2} & \textbf{\textcolor{red}{83.3$\pm$1.9}}
         \\

        \bottomrule

    \end{tabular}
\end{table*}
\revision{

\subsection{Comparison with Different GNN Architectures}

\begin{table*}[t]
\centering
\scriptsize
\caption{\revision{Different \textsc{GNN} architectures (\textcolor{red}{GCN}, \textcolor{blue}{GAT}, \textcolor{violet}{GIN}, \textcolor{orange}{SAGE}) coupled with ReLU, \ourmethodshared\ and \ourmethod\ activation functions. The top performing model is marked with the corresponding color for each architecture.}}
\label{tab:gnn_structures}
\begin{tabular}{lcccccccc}
\toprule
    Activation & Model &\textsc{Cora} $\uparrow$ & \textsc{CiteSeer} $\uparrow$ &  \textsc{PubMed} $\uparrow$ & \textsc{Blog} & \textsc{Flickr} $\uparrow$  & \textsc{ZINC} $\downarrow$ &  \textsc{molhiv} $\uparrow$ \\
    & & & & & \textsc{Catalog} $\uparrow$ & & \\
\midrule
   \multirow{4}{*}{ReLU} &  GCN & 79.2$\pm$1.4 & 67.7$\pm$2.3 & 77.6$\pm$2.2 & 72.1$\pm$1.9 & 50.7$\pm$2.3 & 0.3674$\pm$0.0111 & 76.06$\pm$0.97 \\
    &  GAT & 78.0$\pm$2.1 & 63.6$\pm$1.9 & 77.0$\pm$1.7 & 74.2$\pm$1.8 & 55.5$\pm$1.1 & 0.3842$\pm$0.0070 & 76.00$\pm$0.82\\
    &  GIN & 67.1$\pm$3.0  & 58.8$\pm$2.2 & 68.4$\pm$2.7 & 72.6$\pm$2.5 & 43.1$\pm$2.6  & 0.1630$\pm$0.0040 & 75.58$\pm$1.40 \\
    &  SAGE &  78.5$\pm$1.6 & 67.4$\pm$1.8 & 76.2$\pm$1.8 & 84.9$\pm$3.1 & 43.5$\pm$2.8 & 0.4680$\pm$0.0030 & 77.46$\pm$0.91 \\
\midrule
    \multirow{2}{*}{\ourmethod} &  GCN & 81.5$\pm$1.1 & 69.2$\pm$2.1 & 78.3$\pm$1.6 & 80.8$\pm$0.6 & 68.6$\pm$1.8 & 0.3187$\pm$0.0083 & 76.62$\pm$1.20 \\
    &  GAT & \textcolor{blue}{81.0$\pm$2.1} & 69.3$\pm$1.7 & 78.2$\pm$2.0 & \textcolor{blue}{79.3$\pm$2.8} & 62.8$\pm$6.9 & 0.3309$\pm$0.0115 & 76.80$\pm$1.14 \\
  \multirow{2}{*}{\textsc{(W/O Adap.)}}  &  GIN & \textcolor{violet}{80.6$\pm$2.3} & 67.5$\pm$4.2 & 76.0$\pm$4.0 & 82.1$\pm$3.5 & 68.0$\pm$1.3 & 0.1382$\pm$0.0082 & 79.19$\pm$1.36 \\
    &  SAGE & 79.3$\pm$7.8 & 67.7$\pm$2.5 & 77.1$\pm$3.2 & 90.6$\pm$0.3 & 66.5$\pm$6.5 & 0.4442$\pm$0.0097 & 78.19$\pm$0.83 \\
\midrule
    \multirow{4}{*}{\ourmethod} &  GCN & \textcolor{red}{82.8$\pm$1.1} & \textcolor{red}{69.5$\pm$1.4} & \textcolor{red}{79.3$\pm$1.4} & \textcolor{red}{81.6$\pm$0.8} & \textcolor{red}{69.6$\pm$0.6} & \textcolor{red}{0.2830$\pm$0.0054} & \textcolor{red}{77.38$\pm$2.31} \\
    &  GAT & \textcolor{blue}{81.0$\pm$1.5} & \textcolor{blue}{69.4$\pm$2.4} & \textcolor{blue}{78.9$\pm$2.5} & \textcolor{blue}{79.3$\pm$4.2} & \textcolor{blue}{62.9$\pm$1.0} & \textcolor{blue}{0.2918$\pm$0.0133} & \textcolor{blue}{77.47$\pm$1.18} \\
    &  GIN & \textcolor{violet}{80.6$\pm$2.0} & \textcolor{violet}{68.9$\pm$3.8} & \textcolor{violet}{76.9$\pm$3.3} & \textcolor{violet}{83.0$\pm$4.1} & \textcolor{violet}{70.6$\pm$4.3} & \textcolor{violet}{0.1302$\pm$0.0090} & \textcolor{violet}{80.28$\pm$1.44} \\
    &  SAGE & \textcolor{orange}{79.9$\pm$6.9} & \textcolor{orange}{68.0$\pm$4.0} & \textcolor{orange}{77.3$\pm$3.3} & \textcolor{orange}{90.8$\pm$0.4} & \textcolor{orange}{69.0$\pm$4.8} & \textcolor{orange}{0.4147$\pm$0.0078} & \textcolor{orange}{79.32$\pm$0.74}\\
\bottomrule
\end{tabular}
\end{table*}

We now provide a comparison between  \ourmethod\ and the ReLU activation function coupled with GCN, GAT, GIN, and SAGE backbones in \cref{tab:gnn_structures}. Notably, \ourmethod\ consistently outperforms ReLU regardless of the backbone architecture.
}

\revision{
\subsection{Visualization of \ourmethod}
\begin{figure}
\begin{tabular}{cc}
  \includegraphics[width=0.5\linewidth]{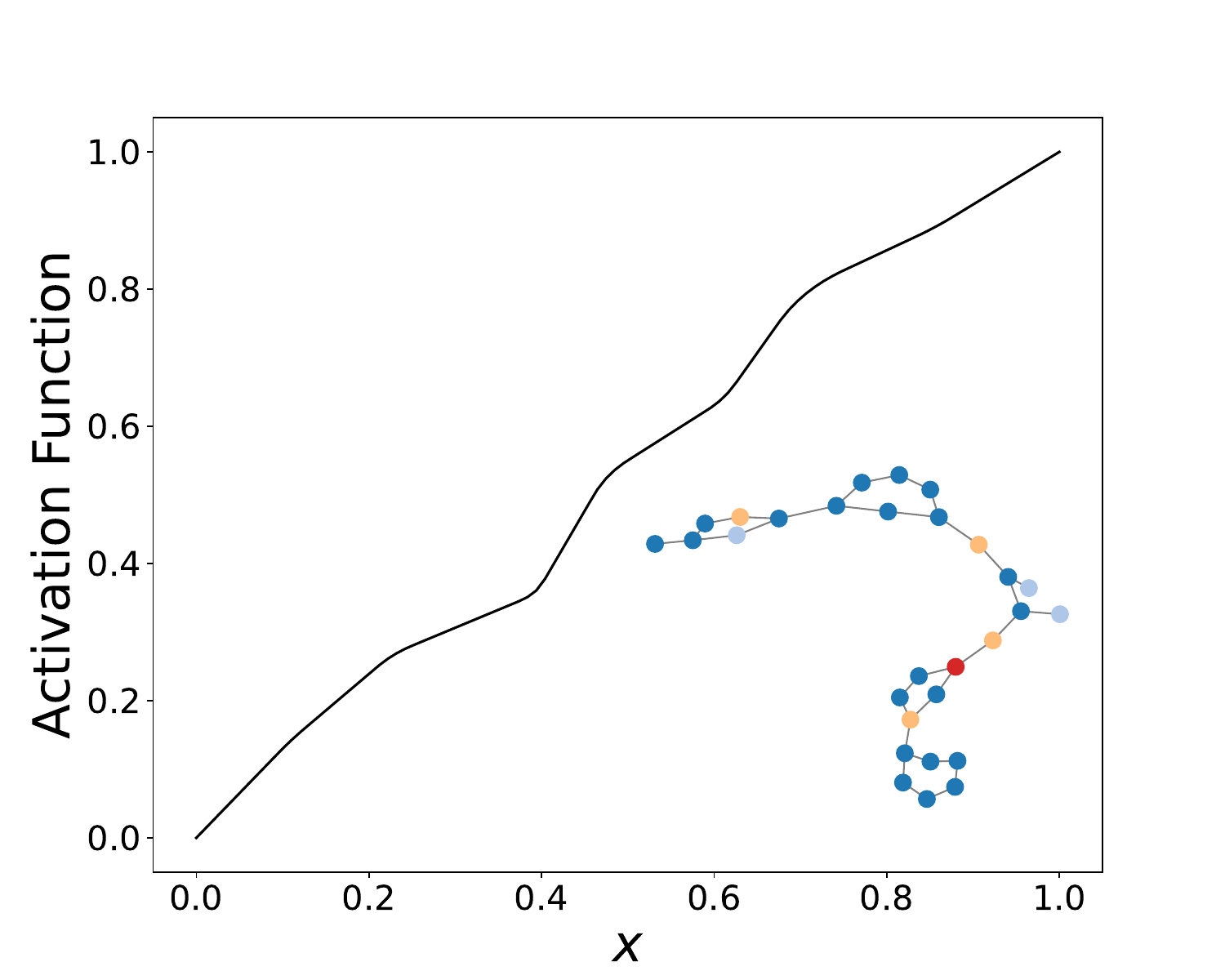} &   \includegraphics[width=0.5\linewidth]{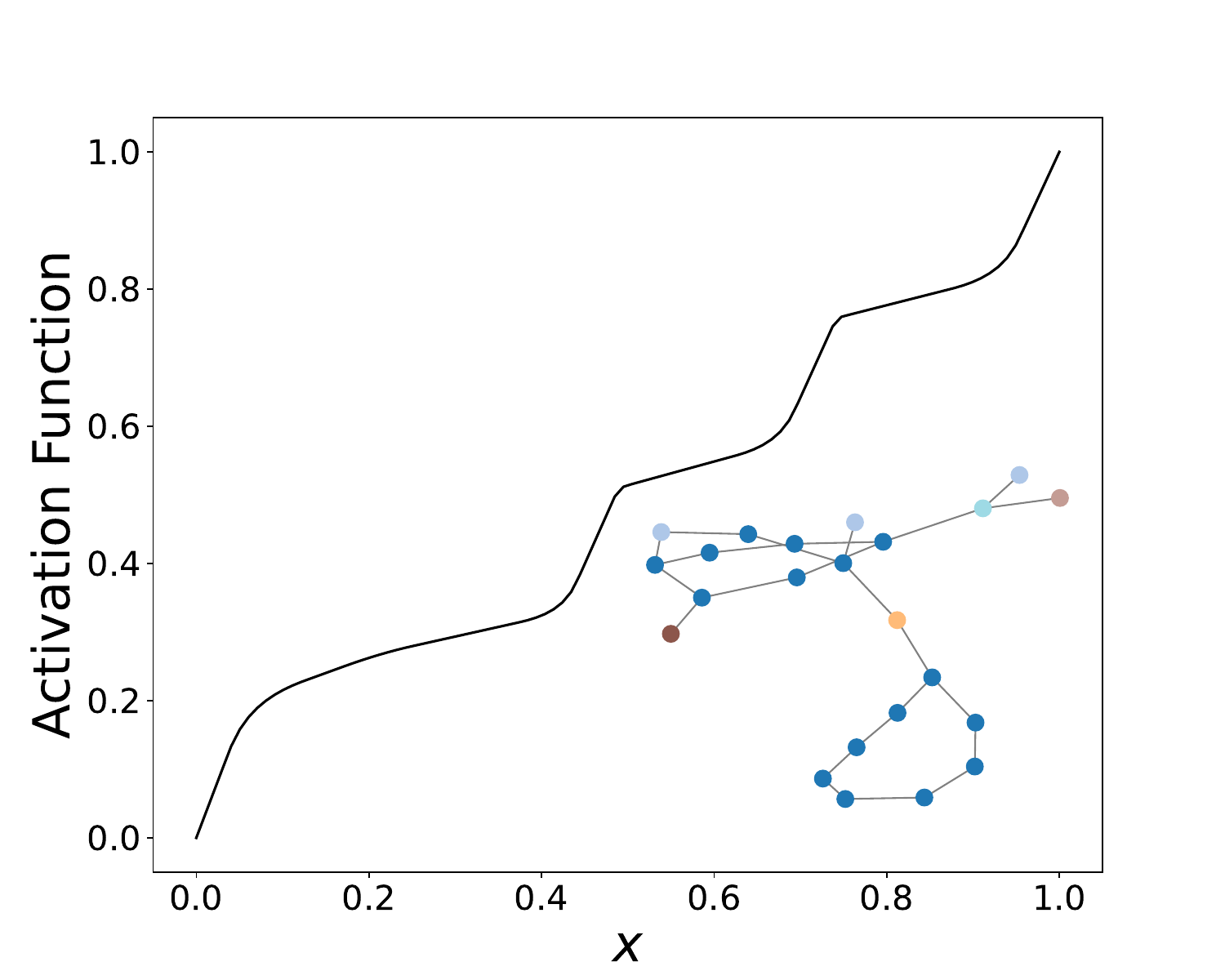} \\
(a) $\textsc{ZINC}$ Test Graph $9$  & (b) $\textsc{ZINC}$ Test Graph $141$ \\[6pt]
\end{tabular}
\caption{\revision{Activation function learned by \ourmethod\ after the last $\textsc{GNN}$ layer on two randomly selected graphs from \textsc{ZINC}. Different node colors indicate different node features. \ourmethod\ yields different activations for different graphs.}}
\label{fig:different_DiGRAF}
\end{figure}
To gain a qualitative understanding of the behavior of \ourmethod, we now illustrate the activation function learned by \ourmethod\ after the last GNN layer on different graphs. To this end, we randomly selected two graphs from the \textsc{ZINC} dataset, as shown in Figure \ref{fig:different_DiGRAF}. The original graphs are presented in the lower right, with each color representing a feature. Nodes with the same color share the same feature. The comparison of the figures demonstrates that for different graphs, with different features and structures, \ourmethod\ learns distinct activation functions, showing its adaptivity to the input graph.
}

\subsection{Parameter Count Comparison} $\textsc{GNN}_{\textsc{act}}$ is a core component of \ourmethod, which ensures graph-adaptivity by generating the parameters $\vtheta^{(l)}$ of the activation function conditioned on the input graph. While the benefits of graph-adaptive activation functions are evident from our experiments in \Cref{sec:experiments}, as \ourmethod\ consistently outperforms \ourmethodshared, the variant of our method that is not graph adaptive, it comes at the cost of additional parameters to learn $\textsc{GNN}_{\textsc{act}}$ (\Cref{eqn:digaf}). 
Specifically, because in all our experiments $\textsc{GNN}_{\textsc{act}}$ is composed of 2 layers and a hidden dimension of 64, \ourmethod\ adds at most approximately \textsc{20k} additional parameters. The number of added parameters in \ourmethodshared\ is significantly lower, counting at $\gN_{\gP} - 1$, where $\gN_{\gP}$ is the tessellation size. Note in our experiments, the tessellation size does not exceed 16. 
To further understand whether the improved performance of \ourmethod\ is due to the increased number of parameters, we conduct an additional experiment using the ReLU activation function where we increase the number of parameters of the model and compare the performances. In particular, we consider following settings: \begin{enumerate*}[label=(\arabic*)]
    \item The standard variant (GIN + ReLU), 
    \item The variant obtained by doubling the number of layers, and
    \item The variant is obtained by doubling the number of hidden channels. 
\end{enumerate*}

We present the results of the experiment described above on the \textsc{ZINC-12k} and \textsc{molhiv} datasets in \Cref{tab:cap_comparision}. 
 We observed that adding
more parameters to the ReLU baseline does not produce significant performance improvements, even in cases where the baselines have $\sim$4 times more parameters than \ourmethod and its baseline. On the contrary, with \ourmethod\, significantly improved performance is obtained compared to the baselines. 

\begin{table*}[t]
\centering
\caption{Performance Comparison of \ourmethod\ with ReLU variants of increased parameter budget. The number of parameters is reported within the parenthesis adjacent to the metric. We use GINE \citep{hu2020strategies} as a backbone. Increasing the parameter count with ReLU does not yield significant improvements, and \ourmethod\ outperforms all variants, even those with a higher number of parameters. Note that, \ourmethodshared\ has only $\gN_\gP - 1$ additional parameters, where  $\gN_\gP$ is the tessellation size.}
\footnotesize
\begin{tabular}{lcc}
    \toprule
    Method $\downarrow$ / Dataset $\rightarrow$ & \textsc{ZINC (MAE $\downarrow$)} & \textsc{molhiv (Acc. \% $\uparrow$)}\\ 
    \midrule
    \textsc{GIN} + ReLU (standard) & 0.1630$\pm$0.0040 $(\sim \phantom{1}\textsc{308k})$ & 75.58$\pm$1.40 $(\sim \phantom{1}\textsc{63k})$ \\ 
    \textsc{GIN} + ReLU (double \#channels) & 0.1578$\pm$0.0014 $(\sim \textsc{1207k})$ & 75.73$\pm$0.71 $(\sim \textsc{240k})$ \\ 
    \textsc{GIN} + ReLU (double \#layers) & 0.1609$\pm$0.0033 $(\sim \phantom{1}\textsc{580k})$ & 75.78$\pm$0.43 $(\sim \textsc{116k})$\\ 
    \midrule
    \ourmethodshared & 0.1382$\pm$0.0080 $(\sim \textsc{308k})$ & 79.19$\pm$1.36 $(\sim \textsc{63k})$ \\ 
    \ourmethod & \textbf{0.1302$\pm$0.0090} $(\sim \textsc{333k})$ & \textbf{80.28$\pm$1.44} $(\sim \textsc{83k})$ \\ 
    \bottomrule
\end{tabular}

\label{tab:cap_comparision}
\end{table*}

\section{Ablation Studies}
We present the impact of several key components of \ourmethod, namely the tessellation size $\gN_\gP$, the depth of $\textsc{GNN}_{\textsc{act}}$ (\Cref{eqn:digaf}) and the regularization coefficient $\lambda$ of $\vtheta^{(l)}$ (\Cref{eqn:total_loss}). We choose a few representative datasets, i.e., 
$\textsc{Cora}$, $\textsc{CiteSeer}$, 
$\textsc{Flickr}$ and $\textsc{BlogCatalog}$ for which we use GCN~\citep{kipf2017semisupervised}; and $\textsc{ZINC-12k}$ and $\textsc{molhiv}$ for which we use GINE~\citep{hu2020strategies} as GNN respectively.
\label{sec:app:ablation}

\subsection{Tessellation Size}
Recall that the tessellation size $\gN_\gP$ determines the dimension of $\vtheta^{(l)} \in \mathbb{R}^{\gN_\gP-1}$ that parameterizes the velocity fields within \ourmethod. We study the effect of the tessellation size on the performance of \ourmethod\ in \Cref{fig:ablation:tess}. 
We can see that a small tessellation size is sufficient for good performance, and increasing its size results in marginal changes. \revision{This observation suggests that CPAB is highly flexible, and aligns with the conclusions in previous studies on different applications of CPAB~\citep{martinez2022closed}, which have shown that small sizes are sufficient in most cases.}

\begin{figure}[t]
    \centering
    \includegraphics[width=0.5\linewidth]{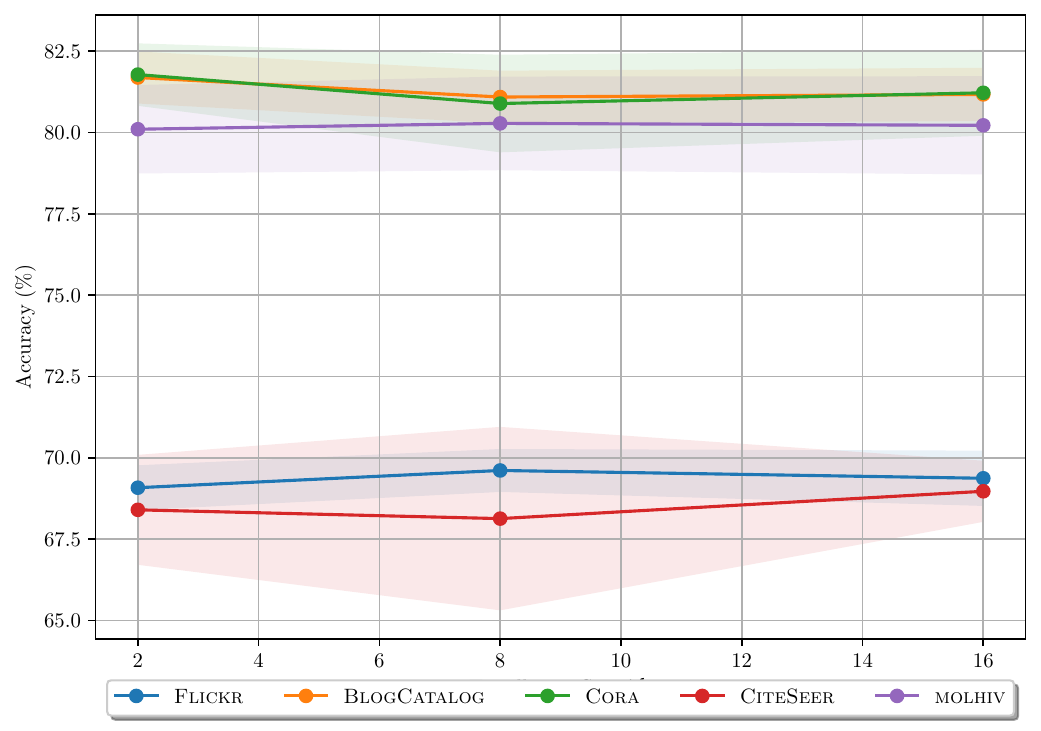}
    \caption{Impact of tessellation size $\gN_\gP$ on the performance of \ourmethod\ on $\textsc{Cora}$, $\textsc{CiteSeer}$, \textsc{Flickr},  \textsc{BlogCatalog}, and \textsc{molhiv} datasets. }
    \label{fig:ablation:tess}
\end{figure}

\subsection{Depth of $\textsc{GNN}_{\textsc{act}}$}
\ourmethod\ exhibits graph adaptivity by predicting  $\vtheta^{(l)} \in \mathbb{R}^{\gN_\gP-1}$ conditioned on the input graph through $\textsc{GNN}_{\textsc{act}}$. \Cref{tab:ablation:depth} shows the impact of the number of layers $L_{\textsc{act}}$ of $\textsc{GNN}_{\textsc{act}}$ on the performance of \ourmethod. In particular, we maintain a fixed architecture for \ourmethod\ and vary only $L_{\textsc{act}}$. The results show that increasing the depth of $\textsc{GNN}_{\textsc{act}}$ improves only marginally the performance of $\ourmethod$, demonstrating that the increased number of parameters is not the main factor of the better performance of \ourmethod. On the contrary, the flexibility and adaptivity offered by \ourmethod\ are the main factors of the improvements, as demonstrated by \ourmethod\ consistently outperforming \ourmethodshared\ and other activation functions (\Cref{sec:experiments}).

\begin{table*}[t]
    \centering
    \scriptsize
    \caption{Effect of depth of $\textsc{GNN}_{\textsc{act}}$ on \ourmethod.}
    \begin{tabular}{l ccc}
    \toprule
        Dataset & $L_{\textsc{act}} = 2$ & $L_{\textsc{act}} = 4$ & $L_{\textsc{act}} = 6$\\
        \midrule
        \textsc{Flickr} & 69.6$\pm$0.6 & 66.3$\pm$0.8 & 69.3$\pm$0.7\\
        \textsc{BlogCatalog} &
        81.0$\pm$0.5 & 81.1$\pm$0.48 & 81.6$\pm$0.8\\
        \textsc{molhiv} & 80.28$\pm$1.44 & 80.19$\pm$1.49 & 80.22$\pm$1.56\\
        \textsc{ZINC} & 0.1302$\pm$0.0090 & 0.1309±0.0084 & 0.1314$\pm$0.083\\
         \bottomrule
    \end{tabular}
    \label{tab:ablation:depth}
\end{table*}

\subsection{Regularization}

As discussed in \Cref{sec:methods:models}, the regularization enforces the smoothness of the velocity field. We investigate the impact of the value of the regularization coefficient $\lambda$ on \ourmethod\ (\Cref{eqn:total_loss}) in \Cref{tab:ablation:regularization}. The results reveal that the optimal value of $\lambda$ depends on the dataset of interest, with small positive values yielding generally good results across all datasets.

\begin{table*}[t]
    \centering
    \scriptsize
    \caption{Effect of velocity field regularization coefficient $\lambda$ on \ourmethod.}
    \begin{tabular}{l cccc}
    \toprule
        Dataset & $\lambda = 0.0$ & $\lambda = 0.001$ & $\lambda = 0.01$ &
        $\lambda = 1.0$\\
        \midrule
        \textsc{Flickr (\% Acc $\uparrow$)} & 
        69.1$\pm$0.9 & 68.7$\pm$0.7 &
        69.6$\pm$0.6 &
        69.0$\pm$0.9\\
        \textsc{BlogCatalog (\% Acc $\uparrow$)} &
        80.5$\pm$0.9 &
        81.0$\pm$0.8 &
        81.4$\pm$1.0 &
        81.6$\pm$0.8\\
        \textsc{molhiv (\% Acc $\uparrow$)} & 79.38$\pm$2.10 & 80.28$\pm$1.44 & 80.16$\pm$1.50 & 78.15$\pm$1.29\\
    \textsc{ZINC (MAE $\downarrow$)} & 0.1395$\pm$0.0102 & 0.1348$\pm$0.0093 & 0.1302$\pm$0.0090 & 0.1353$\pm$0.0071\\
         \bottomrule
    \end{tabular}
    \label{tab:ablation:regularization}
\end{table*}

\revision{
\subsection{Comparison of \ourmethod\ and \ourmethodshared\ with Equal Parameter Budget}

To demonstrate the efficacy of graph adaptivity provided by $\text{GNN}_{\textsc{act}}$, we conduct an experiment where we increase the number of layers and channels of $\text{GNN}_{\textsc{Layer}}$ in \ourmethodshared\ to match the total number of parameters in \ourmethod. As shown in \cref{tab:graph_adaptivity}, the increase in the number of parameters does not translate to better performance. Rather, the effective usage of the extra parameters as done by $\text{GNN}_{\textsc{act}}$ is the reason behind the performance boost offered by \ourmethod.

\begin{table*}[t]
    \centering
    \scriptsize
    \caption{Results on \textsc{ZINC} and \textsc{molhiv} datasets along with number of parameters in parenthesis.}
    \label{tab:graph_adaptivity}
    \begin{tabular}{lcc}
        \toprule
        \textbf{Method} & \textsc{ZINC (MAE)} $\downarrow$ & \textsc{MOLHIV (ROC AUC)} $\uparrow$ \\
        \midrule
        GIN + \ourmethodshared\ with larger $\text{GNN}_{\text{Layer}}$& 0.1388 $\pm$ 0.0071 \textsc{(337K)} & 79.22 $\pm$ 1.40 \textsc{(85K)} \\
        GIN + \ourmethodshared\ (Original) & 0.1382 $\pm$ 0.0086 \textsc{(308K)} & 79.19 $\pm$ 1.36 \textsc{(63K)} \\
       GIN + \ourmethod & 0.1302 $\pm$ 0.0094 \textsc{(333K)} & 80.28 $\pm$ 1.44 \textsc{(83K)} \\
        \bottomrule
    \end{tabular}
\end{table*}

}

\section{Experimental Details} \label{sec:appendix:experiment}
We implemented \ourmethod\ using PyTorch~\citep{paszke2019pytorch} (offered under BSD-3 Clause license) and the PyTorch Geometric library~\citep{fey2019fast} (offered under MIT license). All experiments were conducted on NVIDIA RTX A5000, NVIDIA GeForce RTX 4090, NVIDIA GeForce RTX 4070 Ti Super, NVIDIA GeForce GTX 1080 Ti, NVIDIA TITAN RTX and NVIDIA TITAN V GPUs. For hyperparameter tuning and model selection, we utilized the Weights and Biases (wandb) library~\citep{wandb}. We used the \texttt{difw} package~\citep{martinez2022closed, freifeld2017transformations, freifeld2015highly, shapira2019diffeomorphic} (offered under MIT license) for the diffeomorphic transformations based on the closed-form integration of CPA velocity functions.
In the following subsections, we present the experimental procedure, dataset details, and hyperparameter configurations for each task.

\paragraph{Hyperparameters.} The hyperparameters include the number of layers $L$ and embedding dimension $C$ of $\textsc{GNN}_{\textsc{layer}}^{(l)}$, learning rates and weight decay factors for both  $\textsc{GNN}_{\textsc{layer}}^{(l)}$ and $\textsc{GNN}_{\textsc{act}}$, dropout rate $p$, tessellation size $\mathcal{N}_{\mathcal{P}}$, and regularization coefficient $\lambda$. We additionally include the number of layers $L_{\textsc{act}}$ and embedding dimension $C_{\textsc{act}}$ of $\textsc{GNN}_{\textsc{act}}$.  We employed a combination of grid search and Bayesian optimization. All hyperparameters were chosen according to the best validation metric. \revision{For the baselines, we include only the applicable hyperparameters in our search space.}

\paragraph{Node Classification.} For each dataset, we train a 2-layer GCN~\citep{kipf2017semisupervised} as the backbone architecture, and integrate each of the activation functions into this model. Following~\citet{zhang2022graph}, we randomly choose 20 nodes from each class for training and select 1000 nodes for testing. For each activation function, we run the experiment 10 times with random partitions. We report the mean and standard deviation of node classification accuracy on the test set.
\Cref{tab:node_cls_stats} summarizes the statistics of the node classification datasets used in our experiments.  All models were trained for 1000 epochs with a fixed batch size of 32 using the Adam optimizer. \Cref{tab:hyperparameters_node_cls1,tab:hyperparameters_node_cls2} lists the hyperparameters and their search ranges or values.

\begin{table*}[t]
\centering
\scriptsize
\caption{Statistics of the node classification datasets~\citep{mccallum2000automating,sen2008collective,namata2012query,Yang_2023}.}
\begin{tabular}{lcccc}
\toprule
Dataset & \#nodes & \#edges & \#features & \#classes\\
\midrule
\textbf{\textsc{Planetoid}}~ \\
$\,$ \textsc{Cora} & 2,708 & 10,556 & 1,433 & 7\\
$\,$ \textsc{CiteSeer} & 3,327 & 9,104 & 3,703 & 6\\
$\,$ \textsc{PubMed} & 19,717 & 88,648 & 500 & 3\\
\textbf{\textsc{Social Networks}}~ \\
$\,$ \textsc{Flickr} & 7,575 & 479,476 & 12,047 & 9\\
$\,$ \textsc{BlogCatalog} & 5,196 & 343,486 & 8,189 & 6\\
\bottomrule
\end{tabular}
\label{tab:node_cls_stats}
\end{table*}

\begin{table*}[t]
\centering
\scriptsize
\caption{Hyperparameter configurations for the Planetoid datasets~\citep{mccallum2000automating,sen2008collective,namata2012query}.}

\begin{tabular}{lc}
\toprule
Hyperparameter & Search Range / Value\\
\midrule
Learning rate for $\textsc{GNN}_{\textsc{layer}}^{(l)}$ & $[10^{-5}, 10^{-4}, 10^{-3}, 5 \times 10^{-3}, 5 \times 10^{-2}]$\\
Learning rate for $\theta^{(l)}$ / $\textsc{GNN}_{\textsc{act}}$ & $[10^{-6}, 5 \times 10^{-6}, 10^{-5}, 10^{-4}, 10^{-3}, 5 \times 10^{-3}]$\\
Weight decay  & $[10^{-5}, 10^{-4}, 5 \times 10^{-3}, 0.0]$\\
$C$ & $[64, 128, 256]$\\
$C_{\textsc{act}}$ & $[64, 128]$\\
$L_{\textsc{act}}$ & $[2, 4]$\\
$p$ & $[0.0, 0.5]$\\
$\mathcal{N}_{\mathcal{P}}$ & $[2, 4, 8, 16]$\\
$\lambda$ & $[0.0, 10^{-3}, 10^{-2}, 1.0]$\\
\bottomrule
\end{tabular}
\label{tab:hyperparameters_node_cls1}
\end{table*}

\begin{table*}[t]
\centering
\scriptsize
\caption{Hyperparameter configurations for the social network datasets~\citep{Yang_2023}.}
\begin{tabular}{lc}
\toprule
Hyperparameter & Search Range / Value\\
\midrule
Learning rate for $\textsc{GNN}_{\textsc{layer}}^{(l)}$ &
$[10^{-5}, 10^{-4}, 5 \times 10^{-4}, 10^{-3}, 5 \times 10^{-2}, 10^{-2}]$\\
Learning rate for $\theta^{(l)}$ / $\textsc{GNN}_{\textsc{act}}$ & 
$[10^{-6}, 10^{-5}, 10^{-4}, 10^{-3}, 5 \times 10^{-3}, 10^{-2}, 5 \times 10^{-2}]$\\
Weight decay for $\textsc{GNN}_{\textsc{layer}}$ & 
$[10^{-5}, 10^{-4}, 5 \times 10^{-3}, 0.0]$\\
Weight decay for $\theta^{(l)}$ / $\textsc{GNN}_{\textsc{act}}$ & 
$[10^{-6}, 10^{-5}, 10^{-4}, 5 \times 10^{-3}, 0.0]$\\
$C$ & $[64, 128, 256]$\\
$C_{\textsc{act}}$ & $[16, 32, 64, 128]$\\
$L$ & $[2, 4]$\\
$L_{\textsc{act}}$ & $[2, 4]$\\
$p$ & $[0.0, 0.4, 0.5, 0.6, 0.7]$\\
$\mathcal{N}_{\mathcal{P}}$ & $[2, 4, 8, 16]$\\
$\lambda$ & $[0.0, 10^{-3}, 10^{-2}, 1.0]$\\
\bottomrule
\end{tabular}
\label{tab:hyperparameters_node_cls2}
\end{table*}

\paragraph{Graph Classification.}
The statistics of various datasets can be found in \Cref{tab:graph_cls_stats}. We consider the following setup:
\begin{itemize}
    \item\textbf{\textsc{ZINC-12k}:} We consider the splits provided in~\citet{dwivedi2022graph}. 
    We use the mean absolute error (MAE) both as the loss and evaluation metric and report the mean and standard deviation over the test set calculated using five different seeds. We use the Adam optimizer and decay the learning rate by 0.5 every 300 epochs, with a maximum of 1000 epochs.
    In all our experiments, we adhere to the \textsc{500k} parameter budget~\citep{dwivedi2022graph}. We use GINE~\citep{hu2020strategies} layers both for $\textsc{GNN}_{\textsc{layer}}^{(l)}$ and within $\textsc{GNN}_{\textsc{act}}$, and we fix $C_{\textsc{act}} = 64$ and $L_{\textsc{act}} = 2$. We report the hyperparameter search space for all the other hyperparameters in \Cref{tab:hyperparameters_combined}. 

    \item \textbf{TUDatasets}:  We follow the standard procedure prescribed in \citet{xu2019how} for evaluation. That is, we use a 10 fold cross-validation and report the mean and standard deviation of the accuracy at the epoch that yields the best validation performance on average. We use the Adam optimizer and train for a maximum of 350 epochs. We use GIN~\citep{xu2019how} layers both for $\textsc{GNN}_{\textsc{layer}}^{(l)}$ and within $\textsc{GNN}_{\textsc{act}}$, and we fix $L_{\textsc{act}} = 2$. We present the hyperparameter search space for all other parameters in \Cref{tab:hyperparameters_combined}. 

    \item \textbf{\textsc{OGB}:} We consider 4 datasets from the OGB repository, with one, namely \textsc{molesol}, being a regression problem, while the others are classification tasks. We run each experiment using five different seeds and report the mean and standard deviation of RMSE/ROC-AUC.  We use the Adam optimizer, decaying the learning rate by a factor of 0.5 every 100 epochs, and train for a maximum of 500 epochs. We use the GINE model with the encoders prescribed in \citet{hu2020ogb}  both for $\textsc{GNN}_{\textsc{layer}}^{(l)}$ and within $\textsc{GNN}_{\textsc{act}}$, and we set $C_{\textsc{act}} = 64$ and $L_{\textsc{act}} = 2$. We present the hyperparameter search space for all other parameters in \Cref{tab:hyperparameters_combined} 
\end{itemize}

\begin{table*}[t]
\centering
\scriptsize
\caption{Statistics of the graph classification datasets~\citep{morris2019weisfeiler,hu2020ogb,dwivedi2022graph}.}
\begin{tabular}{lccccc}
\toprule
Dataset & \#graphs & \#nodes & \#edges & \#features & \#classes\\
\midrule
\textsc{ZINC-12k} & 12,000 & $\sim$23.2 & $\sim$49.8 & 1 & 1\\
\midrule
\textbf{TUDatasets} \\
$\,$ \textsc{MUTAG} & 188 & $\sim$17.9 & $\sim$39.6 & 7 & 2\\
$\,$ \textsc{PROTEINS} & 1,113 & $\sim$39.1 & $\sim$145.6 & 3 & 2\\
$\,$ \textsc{PTC} & 344 & $\sim$14.2 & $\sim$14.6 & 18 & 2\\
$\,$ \textsc{NCI1} & 4,110 & $\sim$29.8 & $\sim$32.3 & 37 & 2\\
$\,$ \textsc{NCI109} & 4,127 & $\sim$29.6 & $\sim$32.1 & 38 & 2\\
\midrule
\textbf{OGB} \\
$\,$ \textsc{molesol} & 1,128 & $\sim$13.3 & $\sim$13.7 & 9 & 1\\
$\,$ \textsc{moltox21} & 7,831 & $\sim$18.6 & $\sim$19.3 & 9 & 2\\
$\,$ \textsc{molbace} & 1,513 & $\sim$34.1 & $\sim$36.9 & 9 & 2\\
$\,$ \textsc{molhiv} & 41,127 & $\sim$25.5 & $\sim$27.5 & 9 & 2\\
\bottomrule
\end{tabular}
\label{tab:graph_cls_stats}
\end{table*}

\begin{table*}[t]
    \centering
    \scriptsize
    \caption{Hyperparameters and search ranges/values for TUDatasets~\citep{morris2019weisfeiler}, OGB~\citep{hu2020ogb}, and \textsc{ZINC-12k}~\citep{dwivedi2022graph} datasets.}
    \begin{tabular}{l c  c  c}
    \toprule
    Hyperparameter & TUDatasets & OGB & \textsc{ZINC}\\
    \midrule
    Learning rate for $\textsc{GNN}_{\textsc{layer}}^{(l)}$ & 
    \multicolumn{3}{c}{$[10^{-5}, 10^{-4}, 10^{-3}, 5 \times 10^{-3}]$}\\
    Learning rate for $\theta^{(l)}$/$ \textsc{GNN}_{\textsc{act}}$ & 
    \multicolumn{3}{c}{$[5 \times 10^{-6}, 10^{-5}, 10^{-4}, 10^{-3}, 5 \times 10^{-3}]$}\\
    Weight decay for $\textsc{GNN}_{\textsc{layer}}^{(l)}$ & 
    \multicolumn{3}{c}{$[10^{-5}, 10^{-4}, 5 \times 10^{-3}, 0.0]$}\\
    Weight decay for $\theta^{(l)}$/$ \textsc{GNN}_{\textsc{act}}$ & 
    \multicolumn{3}{c}{$[10^{-5}, 10^{-4}, 5 \times 10^{-3}, 0.0]$}\\
     $C$ & $[16, 32]$ & $[64, 128]$ & $[64, 128, 256]$\\
     $C_{\textsc{act}}$ & $[16, 32, 64, 128]$ & -- & --\\
     $L$ & $[4, 6]$ & $[2, 4, 6]$ & $[2, 4]$\\
    $p$ & \multicolumn{3}{c}{$[0.0, 0.5]$}\\
    $\mathcal{N}_{\mathcal{P}}$ & \multicolumn{3}{c}{$[2, 4, 8, 16]$}\\
    $\lambda$ & \multicolumn{3}{c}{$[0.0, 10^{-3}, 10^{-2}, 1.0]$}\\
    Graph pooling layer & \multicolumn{3}{c}{[sum, mean]}\\
    Batch size & $[32, 128]$ & $[64, 128]$ & $[64, 128]$\\
    \bottomrule
    \end{tabular}
    \label{tab:hyperparameters_combined}
\end{table*}

\section{Complexity and Runtimes}
\label{sec:app:computations}
\paragraph{Time Complexity.} We now provide an analysis of the time complexity of \ourmethod. Let us recall the following details: (i) As described in \Cref{eq:application_digraf}, \ourmethod\ is applied element-wise in parallel for each dimension of the output of $\textsc{GNN}_{\textsc{layer}}^{(l)}$. (ii) As described in \Cref{eqn:digaf}, we employ an additional GNN denoted by $\textsc{GNN}_{\textsc{act}}$ to compute $\vtheta^{(l)}$. In all our experiments, both the backbone GNN and $\textsc{GNN}_{\textsc{act}}$ are message-passing neural networks (MPNNs)~\citep{gilmer2017neural}. (iii) As described in Theorem 2 of \citet{freifeld2017transformations}, for 1-dimensional domain, there exists a closed form for $T^{(l)}(\cdot; \vtheta^{(l)})$, and the complexity for the CPAB computations are linear with respect to the tesselation size, which is a constant of up to 16 in our experiments. Therefore, using \ourmethod\ with any linear complexity (with respect to the number of nodes and edges) MPNN-based backbone maintains the linear complexity of the backbone MPNN. 
Put precisely, each MPNN layer has linear complexity in the number of nodes $|V|$ and $|E|$. We use $L_{\textsc{act}}$ layers in $\textsc{GNN}_{\textsc{act}}$, the computational complexity of a  \ourmethod\  layer is $\gO(L_{\textsc{act}} \cdot (|V| + |E|))$. Since we have $L$ layers in overall GNN, the computational complexity of an MPNN-based GNN coupled with \ourmethod\ is $\gO(L \cdot L_{\textsc{act}} \cdot (|V| + |E|))$. In our experiments, we fix the hyperparameter $L_{\textsc{act}} = 2$, resulting in $\gO(L \cdot (|V| + |E|))$ computational complexity in practice.

\revision{\paragraph{Memory Complexity.} \ourmethod\ uses $\textsc{GNN}_{\textsc{act}}$  which is an MPNN and hence has linear space complexity (with respect to the number of nodes and edges). CPAB computations require constant memory with respect to the graph size for a 1-dimensional domain due to the analytical implementation. We use $L$ layers in overall GNN and $L_{\textsc{act}}$ layers in $\textsc{GNN}_{\textsc{act}}$ resulting in a memory complexity of $\gO(L \cdot L_{\textsc{act}} \cdot (|V| + |E|))$. In our experiments, we fix the hyperparameter $L_{\textsc{act}} = 2$, resulting in $\gO(L \cdot (|V| + |E|))$ memory complexity in practice.} 

\paragraph{Runtimes.} Despite having linear computational complexity in the size of the graph, \ourmethod\ performs additional computations to obtain $\vtheta^{(l)}$ using $\textsc{GNN}_{\textsc{act}}$. To understand the impact of these computations, we measured the training and inference times of \ourmethod\ and present it in \Cref{tab:runtimes}. Specifically, we report the average time per batch and standard deviation of the same measured on an NVIDIA A5000 GPU, using a batch size of 128. For a fair comparison, we use the same number of layers, batch size, and channels in all methods. Additionally, for our \ourmethod, we set the number of layers within $\textsc{GNN}_\textsc{act}$ to $L_{\textsc{act}} = 2$, and the embedding dimension to $C_{\textsc{act}} = 64$. Our analysis indicates that while \ourmethod\ requires additional computational time, it yields significantly better performance. For example, compared to the best activation function on the dataset, namely Maxout, \ourmethod\ requires an additional $\sim 6.21$ms at inference, but results in a relative improvement in the performance of $\sim 17.95\%$. 

\revision{On the ZINC dataset, using GIN as the primary model, \ourmethod\ exhibits approximately 4.5 times slower training times and 3.5 times slower inference times compared to ReLU. \ourmethod\ demonstrates an inference time that is approximately 1.35 times faster than GReLU, while also achieving superior performance.}

\begin{table*}[t]
    \centering
    \scriptsize
    \caption{Batch runtimes on an NVIDIA RTX A5000 GPU of \ourmethod\ and other activation functions, with 4 GNN layers, batch size 128, 64 embedding dimensions, and $\textsc{GNN}_\textsc{act}$ with $L_{\textsc{act}} = 2$ layers and  $C_{\textsc{act}} = 64$ embedding dimension, on \textsc{ZINC-12k} dataset.}
    \begin{tabular}{l ccc}
    \toprule
    \multirow{2}{*}{Method} &  \multicolumn{3}{c}{\textsc{ZINC}} \\
    & Training time (ms) & Inference time (ms) & \textsc{(MAE $\downarrow$)} \\
    \midrule
    \textsc{GIN} + ReLU~\citep{xu2019how} & 4.18$\pm$0.10	&	2.47$\pm$0.08 & 0.1630$\pm$0.0040\\
    \midrule
    \textsc{GIN} + Maxout~\citep{pmlr-v28-goodfellow13} & 4.71$\pm$0.13	&	2.41$\pm$0.12 & 0.1587$\pm$0.0057\\
    \textsc{GIN} + Swish~\citep{ramachandran2017searching} & 4.55$\pm$0.12	&	2.30$\pm$0.24 & 0.1636$\pm$0.0039\\
    \midrule
    \textsc{GIN} + Max~\citep{iancu2020graph} &
    9.19$\pm$0.25	&	4.50$\pm$0.93 & 0.1661$\pm$0.0035\\
    \textsc{GIN} + Median~\citep{iancu2020graph} & 14.54$\pm$1.35	&	10.13$\pm$1.20 & 0.1715$\pm$0.0050\\
    \textsc{GIN} + GReLU~\citep{zhang2022graph} & 20.63$\pm$0.99	&	11.69$\pm$2.79 & 0.3003$\pm$0.0086\\
    \midrule
     \textsc{GIN} + \ourmethodshared & 13.76$\pm$0.65	&4.97$\pm$1.72 & 0.1382$\pm$0.0080\\
    \textsc{GIN} + \ourmethod & 
    19.37$\pm$1.28 & 8.62$\pm$0.18 & 0.1302$\pm$0.0090\\
        \bottomrule
    \end{tabular}
    \label{tab:runtimes}
\end{table*}

\newpage
\clearpage
\section*{NeurIPS Paper Checklist}

\begin{enumerate}

\item {\bf Claims}
    \item[] Question: Do the main claims made in the abstract and introduction accurately reflect the paper's contributions and scope?
    \item[] Answer: \answerYes{} 
    \item[] Justification: See \Cref{sec:introduction} and 
    \Cref{sec:experiments}.
    \item[] Guidelines:
    \begin{itemize}
        \item The answer NA means that the abstract and introduction do not include the claims made in the paper.
        \item The abstract and/or introduction should clearly state the claims made, including the contributions made in the paper and important assumptions and limitations. A No or NA answer to this question will not be perceived well by the reviewers. 
        \item The claims made should match theoretical and experimental results, and reflect how much the results can be expected to generalize to other settings. 
        \item It is fine to include aspirational goals as motivation as long as it is clear that these goals are not attained by the paper. 
    \end{itemize}

\item {\bf Limitations}
    \item[] Question: Does the paper discuss the limitations of the work performed by the authors?
    \item[] Answer: \answerYes{} 
    \item[] Justification: See \Cref{sec:conclusion} and \Cref{sec:app:computations}.
    \item[] Guidelines:
    \begin{itemize}
        \item The answer NA means that the paper has no limitation while the answer No means that the paper has limitations, but those are not discussed in the paper. 
        \item The authors are encouraged to create a separate "Limitations" section in their paper.
        \item The paper should point out any strong assumptions and how robust the results are to violations of these assumptions (e.g., independence assumptions, noiseless settings, model well-specification, asymptotic approximations only holding locally). The authors should reflect on how these assumptions might be violated in practice and what the implications would be.
        \item The authors should reflect on the scope of the claims made, e.g., if the approach was only tested on a few datasets or with a few runs. In general, empirical results often depend on implicit assumptions, which should be articulated.
        \item The authors should reflect on the factors that influence the performance of the approach. For example, a facial recognition algorithm may perform poorly when image resolution is low or images are taken in low lighting. Or a speech-to-text system might not be used reliably to provide closed captions for online lectures because it fails to handle technical jargon.
        \item The authors should discuss the computational efficiency of the proposed algorithms and how they scale with dataset size.
        \item If applicable, the authors should discuss possible limitations of their approach to address problems of privacy and fairness.
        \item While the authors might fear that complete honesty about limitations might be used by reviewers as grounds for rejection, a worse outcome might be that reviewers discover limitations that aren't acknowledged in the paper. The authors should use their best judgment and recognize that individual actions in favor of transparency play an important role in developing norms that preserve the integrity of the community. Reviewers will be specifically instructed to not penalize honesty concerning limitations.
    \end{itemize}

\item {\bf Theory Assumptions and Proofs}
    \item[] Question: For each theoretical result, does the paper provide the full set of assumptions and a complete (and correct) proof?
    \item[] Answer: \answerYes{} 
    \item[] Justification: We introduce the idea in \Cref{sec:theoritical_analysis}, with additional propositions and detailed proofs show in \Cref{sec:appendix:proofs}.
    
    \item[] Guidelines:
    \begin{itemize}
        \item The answer NA means that the paper does not include theoretical results. 
        \item All the theorems, formulas, and proofs in the paper should be numbered and cross-referenced.
        \item All assumptions should be clearly stated or referenced in the statement of any theorems.
        \item The proofs can either appear in the main paper or the supplemental material, but if they appear in the supplemental material, the authors are encouraged to provide a short proof sketch to provide intuition. 
        \item Inversely, any informal proof provided in the core of the paper should be complemented by formal proofs provided in appendix or supplemental material.
        \item Theorems and Lemmas that the proof relies upon should be properly referenced. 
    \end{itemize}

    \item {\bf Experimental Result Reproducibility}
    \item[] Question: Does the paper fully disclose all the information needed to reproduce the main experimental results of the paper to the extent that it affects the main claims and/or conclusions of the paper (regardless of whether the code and data are provided or not)?
    \item[] Answer: \answerYes{} 
    \item[] Justification: See \Cref{sec:experiments}, \Cref{sec:app:additional_results}, \Cref{sec:appendix:implementation} and \Cref{sec:appendix:experiment}. 
    \item[] Guidelines:
    \begin{itemize}
        \item The answer NA means that the paper does not include experiments.
        \item If the paper includes experiments, a No answer to this question will not be perceived well by the reviewers: Making the paper reproducible is important, regardless of whether the code and data are provided or not.
        \item If the contribution is a dataset and/or model, the authors should describe the steps taken to make their results reproducible or verifiable. 
        \item Depending on the contribution, reproducibility can be accomplished in various ways. For example, if the contribution is a novel architecture, describing the architecture fully might suffice, or if the contribution is a specific model and empirical evaluation, it may be necessary to either make it possible for others to replicate the model with the same dataset, or provide access to the model. In general. releasing code and data is often one good way to accomplish this, but reproducibility can also be provided via detailed instructions for how to replicate the results, access to a hosted model (e.g., in the case of a large language model), releasing of a model checkpoint, or other means that are appropriate to the research performed.
        \item While NeurIPS does not require releasing code, the conference does require all submissions to provide some reasonable avenue for reproducibility, which may depend on the nature of the contribution. For example
        \begin{enumerate}
            \item If the contribution is primarily a new algorithm, the paper should make it clear how to reproduce that algorithm.
            \item If the contribution is primarily a new model architecture, the paper should describe the architecture clearly and fully.
            \item If the contribution is a new model (e.g., a large language model), then there should either be a way to access this model for reproducing the results or a way to reproduce the model (e.g., with an open-source dataset or instructions for how to construct the dataset).
            \item We recognize that reproducibility may be tricky in some cases, in which case authors are welcome to describe the particular way they provide for reproducibility. In the case of closed-source models, it may be that access to the model is limited in some way (e.g., to registered users), but it should be possible for other researchers to have some path to reproducing or verifying the results.
        \end{enumerate}
    \end{itemize}

\item {\bf Open access to data and code}
    \item[] Question: Does the paper provide open access to the data and code, with sufficient instructions to faithfully reproduce the main experimental results, as described in supplemental material?
    \item[] Answer: \answerYes{} 
    \item[] Justification: We provide extensive details on the implementation and evaluation of our method, and we released our code.
    \item[] Guidelines:
    \begin{itemize}
        \item The answer NA means that paper does not include experiments requiring code.
        \item Please see the NeurIPS code and data submission guidelines (\url{https://nips.cc/public/guides/CodeSubmissionPolicy}) for more details.
        \item While we encourage the release of code and data, we understand that this might not be possible, so “No” is an acceptable answer. Papers cannot be rejected simply for not including code, unless this is central to the contribution (e.g., for a new open-source benchmark).
        \item The instructions should contain the exact command and environment needed to run to reproduce the results. See the NeurIPS code and data submission guidelines (\url{https://nips.cc/public/guides/CodeSubmissionPolicy}) for more details.
        \item The authors should provide instructions on data access and preparation, including how to access the raw data, preprocessed data, intermediate data, and generated data, etc.
        \item The authors should provide scripts to reproduce all experimental results for the new proposed method and baselines. If only a subset of experiments are reproducible, they should state which ones are omitted from the script and why.
        \item At submission time, to preserve anonymity, the authors should release anonymized versions (if applicable).
        \item Providing as much information as possible in supplemental material (appended to the paper) is recommended, but including URLs to data and code is permitted.
    \end{itemize}

\item {\bf Experimental Setting/Details}
    \item[] Question: Does the paper specify all the training and test details (e.g., data splits, hyperparameters, how they were chosen, type of optimizer, etc.) necessary to understand the results?
    \item[] Answer: \answerYes{} 
    \item[] Justification: See \Cref{sec:appendix:experiment} and \Cref{sec:experiments}. 
    \item[] Guidelines:
    \begin{itemize}
        \item The answer NA means that the paper does not include experiments.
        \item The experimental setting should be presented in the core of the paper to a level of detail that is necessary to appreciate the results and make sense of them.
        \item The full details can be provided either with the code, in appendix, or as supplemental material.
    \end{itemize}

\item {\bf Experiment Statistical Significance}
    \item[] Question: Does the paper report error bars suitably and correctly defined or other appropriate information about the statistical significance of the experiments?
    \item[] Answer: \answerYes{} 
    \item[] Justification: The standard deviations for all the metrics have been presented. See \Cref{tab:GReLU}, \Cref{tab:zinc}, \Cref{tab:ogb}, \Cref{tab:tud_datasets},
    \Cref{tab:cap_comparision},
    \Cref{tab:ablation:depth}, \Cref{tab:ablation:regularization},  and \Cref{tab:runtimes}. Also see \Cref{sec:experiments}, \Cref{sec:app:additional_results}.
    \item[] Guidelines:
    \begin{itemize}
        \item The answer NA means that the paper does not include experiments.
        \item The authors should answer "Yes" if the results are accompanied by error bars, confidence intervals, or statistical significance tests, at least for the experiments that support the main claims of the paper.
        \item The factors of variability that the error bars are capturing should be clearly stated (for example, train/test split, initialization, random drawing of some parameter, or overall run with given experimental conditions).
        \item The method for calculating the error bars should be explained (closed form formula, call to a library function, bootstrap, etc.)
        \item The assumptions made should be given (e.g., Normally distributed errors).
        \item It should be clear whether the error bar is the standard deviation or the standard error of the mean.
        \item It is OK to report 1-sigma error bars, but one should state it. The authors should preferably report a 2-sigma error bar than state that they have a 96\% CI, if the hypothesis of Normality of errors is not verified.
        \item For asymmetric distributions, the authors should be careful not to show in tables or figures symmetric error bars that would yield results that are out of range (e.g. negative error rates).
        \item If error bars are reported in tables or plots, The authors should explain in the text how they were calculated and reference the corresponding figures or tables in the text.
    \end{itemize}

\item {\bf Experiments Compute Resources}
    \item[] Question: For each experiment, does the paper provide sufficient information on the computer resources (type of compute workers, memory, time of execution) needed to reproduce the experiments?
    \item[] Answer: \answerYes{} 
    \item[] Justification: We discuss the computational resources in \Cref{sec:appendix:experiment} and \Cref{sec:app:computations}.
    \item[] Guidelines:
    \begin{itemize}
        \item The answer NA means that the paper does not include experiments.
        \item The paper should indicate the type of compute workers CPU or GPU, internal cluster, or cloud provider, including relevant memory and storage.
        \item The paper should provide the amount of compute required for each of the individual experimental runs as well as estimate the total compute. 
        \item The paper should disclose whether the full research project required more compute than the experiments reported in the paper (e.g., preliminary or failed experiments that didn't make it into the paper). 
    \end{itemize}
    
\item {\bf Code Of Ethics}
    \item[] Question: Does the research conducted in the paper conform, in every respect, with the NeurIPS Code of Ethics \url{https://neurips.cc/public/EthicsGuidelines}?
    \item[] Answer: \answerYes{} 
    \item[] Justification: See \Cref{sec:appendix:experiment}. 
    \item[] Guidelines:
    \begin{itemize}
        \item The answer NA means that the authors have not reviewed the NeurIPS Code of Ethics.
        \item If the authors answer No, they should explain the special circumstances that require a deviation from the Code of Ethics.
        \item The authors should make sure to preserve anonymity (e.g., if there is a special consideration due to laws or regulations in their jurisdiction).
    \end{itemize}

\item {\bf Broader Impacts}
    \item[] Question: Does the paper discuss both potential positive societal impacts and negative societal impacts of the work performed?
    \item[] Answer: \answerYes{} 
    \item[] Justification: We discuss the societal impact in the conclusion, see \Cref{sec:conclusion}.
    \item[] Guidelines:
    \begin{itemize}
        \item The answer NA means that there is no societal impact of the work performed.
        \item If the authors answer NA or No, they should explain why their work has no societal impact or why the paper does not address societal impact.
        \item Examples of negative societal impacts include potential malicious or unintended uses (e.g., disinformation, generating fake profiles, surveillance), fairness considerations (e.g., deployment of technologies that could make decisions that unfairly impact specific groups), privacy considerations, and security considerations.
        \item The conference expects that many papers will be foundational research and not tied to particular applications, let alone deployments. However, if there is a direct path to any negative applications, the authors should point it out. For example, it is legitimate to point out that an improvement in the quality of generative models could be used to generate deepfakes for disinformation. On the other hand, it is not needed to point out that a generic algorithm for optimizing neural networks could enable people to train models that generate Deepfakes faster.
        \item The authors should consider possible harms that could arise when the technology is being used as intended and functioning correctly, harms that could arise when the technology is being used as intended but gives incorrect results, and harms following from (intentional or unintentional) misuse of the technology.
        \item If there are negative societal impacts, the authors could also discuss possible mitigation strategies (e.g., gated release of models, providing defenses in addition to attacks, mechanisms for monitoring misuse, mechanisms to monitor how a system learns from feedback over time, improving the efficiency and accessibility of ML).
    \end{itemize}
    
\item {\bf Safeguards}
    \item[] Question: Does the paper describe safeguards that have been put in place for responsible release of data or models that have a high risk for misuse (e.g., pretrained language models, image generators, or scraped datasets)?
    \item[] Answer: \answerNA{} 
    \item[] Justification: We don't scrape any datasets not we release any models of high risk for misuse. 
    \item[] Guidelines:
    \begin{itemize}
        \item The answer NA means that the paper poses no such risks.
        \item Released models that have a high risk for misuse or dual-use should be released with necessary safeguards to allow for controlled use of the model, for example by requiring that users adhere to usage guidelines or restrictions to access the model or implementing safety filters. 
        \item Datasets that have been scraped from the Internet could pose safety risks. The authors should describe how they avoided releasing unsafe images.
        \item We recognize that providing effective safeguards is challenging, and many papers do not require this, but we encourage authors to take this into account and make a best faith effort.
    \end{itemize}

\item {\bf Licenses for existing assets}
    \item[] Question: Are the creators or original owners of assets (e.g., code, data, models), used in the paper, properly credited and are the license and terms of use explicitly mentioned and properly respected?
    \item[] Answer: \answerYes{} 
    
    \item[] Justification: We cited each datasets we used in \Cref{sec:experiments} and \Cref{sec:appendix:experiment}. The license of the packages we used is in \Cref{sec:appendix:experiment}.
    \item[] Guidelines:
    \begin{itemize}
        \item The answer NA means that the paper does not use existing assets.
        \item The authors should cite the original paper that produced the code package or dataset.
        \item The authors should state which version of the asset is used and, if possible, include a URL.
        \item The name of the license (e.g., CC-BY 4.0) should be included for each asset.
        \item For scraped data from a particular source (e.g., website), the copyright and terms of service of that source should be provided.
        \item If assets are released, the license, copyright information, and terms of use in the package should be provided. For popular datasets, \url{paperswithcode.com/datasets} has curated licenses for some datasets. Their licensing guide can help determine the license of a dataset.
        \item For existing datasets that are re-packaged, both the original license and the license of the derived asset (if it has changed) should be provided.
        \item If this information is not available online, the authors are encouraged to reach out to the asset's creators.
    \end{itemize}

\item {\bf New Assets}
    \item[] Question: Are new assets introduced in the paper well documented and is the documentation provided alongside the assets?
    \item[] Answer: \answerNA{} 
    \item[] Justification: We don't release any new assets. 
    \item[] Guidelines:
    \begin{itemize}
        \item The answer NA means that the paper does not release new assets.
        \item Researchers should communicate the details of the dataset/code/model as part of their submissions via structured templates. This includes details about training, license, limitations, etc. 
        \item The paper should discuss whether and how consent was obtained from people whose asset is used.
        \item At submission time, remember to anonymize your assets (if applicable). You can either create an anonymized URL or include an anonymized zip file.
    \end{itemize}

\item {\bf Crowdsourcing and Research with Human Subjects}
    \item[] Question: For crowdsourcing experiments and research with human subjects, does the paper include the full text of instructions given to participants and screenshots, if applicable, as well as details about compensation (if any)? 
    \item[] Answer: \answerNA{} 
    \item[] Justification: The paper does not involve crowdsourcing nor research with human subjects.
    \item[] Guidelines:
    \begin{itemize}
        \item The answer NA means that the paper does not involve crowdsourcing nor research with human subjects.
        \item Including this information in the supplemental material is fine, but if the main contribution of the paper involves human subjects, then as much detail as possible should be included in the main paper. 
        \item According to the NeurIPS Code of Ethics, workers involved in data collection, curation, or other labor should be paid at least the minimum wage in the country of the data collector. 
    \end{itemize}

\item {\bf Institutional Review Board (IRB) Approvals or Equivalent for Research with Human Subjects}
    \item[] Question: Does the paper describe potential risks incurred by study participants, whether such risks were disclosed to the subjects, and whether Institutional Review Board (IRB) approvals (or an equivalent approval/review based on the requirements of your country or institution) were obtained?
    \item[] Answer: \answerNA{} 
    \item[] Justification: We don't involve crowdsourcing nor research with human subjects. 
    \item[] Guidelines:
    \begin{itemize}
        \item The answer NA means that the paper does not involve crowdsourcing nor research with human subjects.
        \item Depending on the country in which research is conducted, IRB approval (or equivalent) may be required for any human subjects research. If you obtained IRB approval, you should clearly state this in the paper. 
        \item We recognize that the procedures for this may vary significantly between institutions and locations, and we expect authors to adhere to the NeurIPS Code of Ethics and the guidelines for their institution. 
        \item For initial submissions, do not include any information that would break anonymity (if applicable), such as the institution conducting the review.
    \end{itemize}

\end{enumerate}

\end{document}